\documentclass[10pt]{article}

\usepackage[letterpaper,margin=1in]{geometry}
\usepackage[T1]{fontenc}
\usepackage[utf8]{inputenc}
\usepackage{lmodern}
\usepackage{microtype}
\usepackage[authoryear,round]{natbib}
\setcitestyle{authoryear,round,citesep={;},aysep={,},yysep={;}}

\usepackage{amsmath,amsfonts,amssymb,bm,amsthm}
\usepackage{mathtools}
\mathtoolsset{showonlyrefs=false}
\usepackage{graphicx}
\usepackage{booktabs}
\usepackage{enumerate}
\usepackage{tikz}
\usetikzlibrary{arrows.meta,positioning}
\usepackage{fix-cm}

\def\1{\bm{1}}

\DeclareMathAlphabet{\mathsfit}{\encodingdefault}{\sfdefault}{m}{sl}
\SetMathAlphabet{\mathsfit}{bold}{\encodingdefault}{\sfdefault}{bx}{n}

\newenvironment{noheadproof}[1][]{\noindent\textit{}}{}

\usetikzlibrary{arrows.meta,calc,positioning,backgrounds}

\usepackage{xcolor,float,placeins,array,needspace}
\usepackage[font=small]{caption}
\usepackage{hyperref}
\usepackage[nameinlink,capitalise,noabbrev]{cleveref}
\usepackage{url}
\newtheorem{theorem}{Theorem}
\newtheorem{lemma}{Lemma}[section]
\newtheorem{proposition}{Proposition}
\newtheorem{corollary}{Corollary}
\theoremstyle{definition}
\newtheorem{assumption}{Assumption}
\newtheorem{definition}{Definition}
\theoremstyle{remark}
\newtheorem{remark}{Remark}
\theoremstyle{plain}
\crefname{assumption}{Assumption}{Assumptions}
\Crefname{assumption}{Assumption}{Assumptions}
\AddToHook{env/theorem/begin}{\crefalias{theorem}{theorem}}
\AddToHook{env/lemma/begin}{\crefalias{lemma}{lemma}\crefalias{section}{lemma}}
\AddToHook{env/proposition/begin}{\crefalias{proposition}{proposition}}
\AddToHook{env/corollary/begin}{\crefalias{corollary}{corollary}}
\AddToHook{env/assumption/begin}{\crefalias{assumption}{assumption}}
\AddToHook{env/definition/begin}{\crefalias{definition}{definition}}
\AddToHook{env/remark/begin}{\crefalias{remark}{remark}}

\makeatletter
\renewcommand{\section}{\@startsection{section}{1}{\z@}%
  {-9pt plus -2pt minus -1pt}{5pt plus 1pt minus 1pt}%
  {\normalfont\large\bfseries}}
\renewcommand{\subsection}{\@startsection{subsection}{2}{\z@}%
  {-7pt plus -2pt minus -1pt}{4pt plus 1pt minus 1pt}%
  {\normalfont\normalsize\bfseries}}
\renewcommand{\subsubsection}{\@startsection{subsubsection}{3}{\z@}%
  {-6pt plus -2pt minus -1pt}{3pt plus 1pt minus 1pt}%
  {\normalfont\normalsize\bfseries}}
\renewcommand{\paragraph}{\@startsection{paragraph}{4}{\z@}%
  {5pt plus 1pt minus 1pt}{-0.7em}%
  {\normalfont\normalsize\bfseries}}
\makeatother
\AtBeginDocument{%
  \setlength{\abovedisplayskip}{6pt plus 2pt minus 2pt}%
  \setlength{\belowdisplayskip}{6pt plus 2pt minus 2pt}%
  \setlength{\abovedisplayshortskip}{0pt plus 2pt}%
  \setlength{\belowdisplayshortskip}{4pt plus 2pt minus 2pt}%
}

\newcommand{\PaperTitle}{Optimization Geometry of Equivalent Brownian RKHS Representations}
\newcommand{\PaperPDFAuthors}{Mahdi Mohammadigohari and Gustau Camps-Valls}
\hypersetup{
  colorlinks=true,
  linkcolor=blue!50!black,
  citecolor=green!35!black,
  urlcolor=blue!55!black,
  pdftitle={\PaperTitle},
  pdfauthor={\PaperPDFAuthors},
  pdfpagemode=UseNone
}
\title{Optimization Geometry of Equivalent\\Brownian RKHS Representations}
\author{%
\begin{minipage}[t]{0.48\textwidth}\centering\small
Mahdi Mohammadigohari\\
Faculty of Engineering\\
Free University of Bozen-Bolzano\\
Bruno Buozzi 1, Bolzano 39100, Italy\\
\texttt{mahdi.mohammadigohari@gmail.com}
\end{minipage}\hspace{0.02\textwidth}%
\begin{minipage}[t]{0.47\textwidth}\centering\small
Gustau Camps-Valls\\
Image Processing Laboratory (IPL)\\
Universitat de Val\`encia\\
Paterna, Val\`encia 46980, Spain\\
\texttt{gustau.camps@uv.es}
\end{minipage}}
\date{}
\renewenvironment{noheadproof}{\par\noindent\ignorespaces}{}

\begin{document}

\maketitle
\raggedbottom

\begin{abstract}
Equivalent finite parameterizations can represent the same functions and intrinsic norm yet induce
different optimization algorithms. We study this effect in a controlled finite Brownian RKHS with nodal,
increment, and spectral coordinates. Classical finite-element, RKHS-interpolation, Brownian-covariance,
and mixed-boundary DCT identities make the shared hypothesis class, Brownian energy, approximation
operator, and coordinate maps explicit. Our main results concern the optimization geometry of this fixed model. With mapped initialization,
identical scalar steps, and identical minibatches, nodal and spectral GD/SGD have exactly the same mapped
trajectories. Increment GD is an explicit Euler step for the constant Brownian/Sobolev metric, with factor
\(1/h\). For Brownian-regularized least squares,
\(\kappa_2(\mathbf H_{\mathrm{inc}})\le1+A/\rho\), independently of grid resolution \(G\) for fixed
\(A\), \(\rho>0\), and the stated normalization. Under the stated standard-Adam convention, the universal
orthogonal equivariance group is exactly the signed permutations; the block DCT-VIII transform is not one.
Float64 tests over five grids numerically verify the finite identities, mapped one-layer and recursive
trajectories, conditioning predictions, and theorem-matched Adam separation. Thus coordinate effects are
isolated without changing the represented functions, intrinsic regularizer, or approximation space.
\end{abstract}


\section{Introduction}
\label{sec:intro}

Reproducing kernel Hilbert spaces describe a model through the functions it represents and the intrinsic
norm controlling them \citep{aronszajn1950theory,scholkopf2002learning,berlinet2004rkhs,steinwart2008support}.
Splines, Gaussian processes, random features, and neural tangent kernels all exploit this viewpoint
\citep{wahba1990spline,rasmussen2006gaussian,rahimi2007random,jacot2018ntk,lee2019wide}. A complementary
line studies the geometry of optimization: natural gradient, mirror descent, implicit-bias analysis, and
reparameterization theory show that equivalent descriptions of one model can induce different algorithms
\citep{amari1998natural,beck2003mirror,gunasekar2018optimization,amid2020reparameterizing,
kristiadi2023geometry,li2022implicitbias,martens2020new}.
Coordinatewise adaptive methods add a basis dependence that ordinary Euclidean gradient descent does not
have under orthogonal changes \citep{kingma2015adam,ling2022vectoradam,zhang2025rotationadam}.

These viewpoints are difficult to separate because a neural reparameterization often also changes
initialization, implicit regularization, architecture, or the represented class. A controlled comparison
instead needs several coordinate systems for one fixed Hilbert space. Brownian profile layers provide this
setting: BKLs and VBKLs recursively use one-dimensional Brownian RKHS profiles
\citep{anonymous2026bkl,anonymous2026vbkl}, and the VBKL companion already employs anchored
piecewise-linear profiles and their increment energy.

The required ingredients are largely classical: piecewise-linear stiffness energies come from finite
elements \citep{strang1973analysis,brenner2008finite}; nodal interpolation and power functions from spline
and kernel approximation \citep{wahba1990spline,fasshauer2007meshfree,wendland2005scattered}; Brownian
covariance identities from Gaussian processes \citep{rasmussen2006gaussian}; and the mixed-boundary
spectrum from DCT theory \citep{martucci1994symmetric,strang1999dct,masera2017odd}. We assemble them in
one two-sided anchored normalization so that the represented space, norm, approximation error, and
coordinate maps are simultaneously fixed and explicit, rather than claiming these ingredients separately
as new.

In this controlled model, the spectral map is orthogonal whereas
\(\mathbf D_0^{\top}\mathbf D_0=h\mathbf K_0\). We derive the resulting mapped optimizer trajectories,
the constant-metric Brownian interpretation of increment descent, and its blockwise recursive consequence.
We also obtain the fixed-\(A,\rho\) grid-resolution-independent least-squares bound and the maximal universal
orthogonal equivariance group of the stated standard-Adam update.

Our contributions are:
\begin{enumerate}
\item one controlled finite Brownian RKHS with exactly equivalent nodal, increment, and block DCT-VIII
coordinates, preserving the hypothesis class, intrinsic norm, and approximation space;
\item exact mapped GD/SGD identities: nodal and spectral trajectories coincide, while increment descent is
a constant-metric Brownian/Sobolev Euler step with the precise mesh scaling;
\item the blockwise recursive consequence and
\(\kappa_2(\mathbf H_{\mathrm{inc}})\le1+A/\rho\), independent of \(G\) under the stated fixed
quantities; and
\item the signed-permutation maximality theorem for standard Adam, with source-traced float64 verification.
EuroSAT and Salinas remain secondary external-validity studies, not coordinate-equivalence tests.
\end{enumerate}
Proofs and supplementary material are in the appendix; \Cref{tab:main-results} summarizes the results.

\section{Notation and Preliminaries}
\label{sec:notation}
\label{sec:preliminaries}
{
We write $[n]=\{1,\ldots,n\}$, $\langle\cdot,\cdot\rangle$ and $\|\cdot\|_2$ for the Euclidean inner
product and norm, $\mathbf I_n$ for the identity, $\mathbf e_j$ for a canonical basis vector,
$\delta_{ij}$ for the Kronecker delta, and $\operatorname{blkdiag}$ for block-diagonal assembly. Vectors
are bold lowercase, matrices bold uppercase, scalars italic; grid quantities are indexed from $0$ and
spectral quantities from $1$. Three symbols are disambiguated relative to the underlying frameworks:
increment coordinates are $\mathbf w$, reserving $\delta$ for the Kronecker delta and the evaluation
functional $\delta_t$; eigenvalues of the block $\mathbf T_m$ are $\nu_k$, reserving $\mu$ for measures;
and eigenvalues of the anchored stiffness matrix are $\lambda_k$, $k\in[m]$, each of multiplicity two.
Notation used only inside proofs is collected in Appendix~\ref{app:additional-notation}.
}

BKLs construct recursive reproducing-kernel representations from the one-dimensional Brownian kernel
\citep{anonymous2026bkl},
\begin{align}
k_{\mathrm B}(x,x')=\frac{|x|+|x'|-|x-x'|}{2}={\min(|x|,|x'|)\,\mathbb 1\{xx'\ge0\}},
\qquad x,x'\in\mathbb R,
\end{align}
{the covariance of two-sided Brownian motion anchored at the origin. Its RKHS on $I$,
denoted $\mathcal H_{\mathrm B}$, is the Cameron--Martin space
$\mathcal H_{\mathrm B}=\{f\in H^1(I):f(0)=0\}$ with
$\langle f,g\rangle_{\mathcal H_{\mathrm B}}=\int_{I}f'g'$ \citep{berlinet2004rkhs}.}
Given support functions $\mathcal S$ {on an input space $\mathcal X$, each
$u:\mathcal X\rightarrow\mathbb R$,} and a probability measure $\mu$ on $\mathcal S$, the Brownian
integral kernel
$k[\mathcal S,\mu](\mathbf x,\mathbf x')=\int_{\mathcal S}k_{\mathrm B}(u(\mathbf x),u(\mathbf x'))\,\mathrm d\mu(u)$
has an RKHS defining the next layer; iterating from linear projections yields a hierarchy of Brownian
RKHSs. Variation Brownian Kernel Ladders (VBKLs) instead use an atomic representation \citep{anonymous2026vbkl}: composing support functions
with Brownian profiles yields a dictionary $\mathcal U_L
=
\left\{
g\circ u:
u\in \mathcal U_{L-1},
\; g\in\mathcal H_{\mathrm B},
\; \left\|g\right\|_{\mathcal H_{\mathrm B}}\le1
\right\}$ of depth-$L$ atoms, with variation space
$\mathcal V^{(L)}=\{F=\int_{\mathcal U_L}u\,\mathrm d\mu(u):\|\mu\|_{\mathrm{TV}}<\infty\}$ and
complexity
$\widehat{\mathcal C}_{\mathrm{var}}^{(L)}(F)=\inf\{\|\mu\|_{\mathrm{TV}}:F=\int_{\mathcal U_L}u\,\mathrm d\mu(u)\}$.
In both, profile functions lie in infinite-dimensional Brownian RKHSs, so implementations must discretize
them. The VBKL construction already employs anchored piecewise-linear profiles and their increment-energy
constraint; the present paper makes the corresponding finite RKHS and coordinate metrics explicit for a
controlled optimization comparison.

\section{Finite Brownian RKHS}
\label{sec:finite-rkhs}

We collect the classical finite-element and RKHS machinery in the paper's two-sided anchored
normalization \citep{strang1973analysis,brenner2008finite}, ensuring that later optimizer comparisons use
the same function space, norm, and approximation operator.

\begin{definition}[Finite Brownian profile space]
\label{def:finite-brownian-profile-space}

Let $I=\left[-A,A\right]$ for some $A>0$, let {$G\in\mathbb N$, and let
$t_i=-A+ih$ for $i=0,\ldots,G$ be the uniform grid with mesh size $h=\frac{2A}{G}$,
so that $-A=t_0<t_1<\cdots<t_G=A$.}
For each $i=0,\ldots,G$, let {$\phi_i\in C(I)$ be the hat function that is affine on each
subinterval $[t_j,t_{j+1}]$ and satisfies $\phi_i(t_j)=\delta_{ij}$ for every $j=0,\ldots,G$
\citep{brenner2008finite}.} Define the finite-element reconstruction operator
$\mathcal R:\mathbb R^{G+1}\rightarrow C(I)$ by
$\mathcal R(\mathbf v)=\sum_{i=0}^{G}v_i\phi_i$, where
$\mathbf v=(v_0,\ldots,v_G)^{\top}\in\mathbb R^{G+1}$. The \emph{finite Brownian profile space} is
$\mathcal H_h=\mathcal R(\mathbb R^{G+1})
=\operatorname{span}(\phi_0,\ldots,\phi_G)$.
{Since $\phi_i(t_j)=\delta_{ij}$, the family $\{\phi_i\}_{i=0}^{G}$ is linearly independent,
so every $f\in\mathcal H_h$ has a unique coefficient vector $\mathbf v\in\mathbb R^{G+1}$ with
$f=\mathcal R(\mathbf v)$.}

\end{definition}

\begin{definition}[{Brownian energy form}]
\label{def:discrete-brownian-inner-product}
{On $\mathcal H_h$ we define} the symmetric bilinear form
$\langle f,g\rangle_{B,h}=\int_{-A}^{A} f'(t)g'(t)\,dt$, {with derivatives taken
elementwise on each $[t_i,t_{i+1}]$ and hence defined almost everywhere,} and the Brownian seminorm
$\|f\|_{B,h}=\sqrt{\langle f,f\rangle_{B,h}}$.
\end{definition}
\begin{proposition}[Finite Brownian energy representation]
\label{prop:finite-brownian-energy}

Let $\mathbf D\in\mathbb R^{G\times(G+1)}$ be the first-difference matrix,
$(\mathbf D\mathbf v)_i=v_{i+1}-v_i$ for $i=0,\ldots,G-1$, and let
$\mathbf K=(K_{rs})_{r,s=0}^{G}$ with $K_{rs}=\langle\phi_r,\phi_s\rangle_{B,h}$ be the stiffness
matrix. For $f=\mathcal R(\mathbf v)\in\mathcal H_h$,
\begin{align}
\left\|f\right\|_{B,h}^{2}
=
\mathbf{v}^{\top}\mathbf{K}\mathbf{v}
=
\frac{1}{h}
\left\|\mathbf{D}\mathbf{v}\right\|_2^{2}
=
\frac{1}{h}
\sum_{i=0}^{G-1}
\left(v_{i+1}-v_i\right)^2.
\end{align}
{Since both matrices are symmetric, this is equivalent to the matrix identity}
$\mathbf{K}=\frac{1}{h}\mathbf{D}^{\top}\mathbf{D}$.

\end{proposition}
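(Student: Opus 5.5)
The plan is to compute the energy directly, one subinterval at a time, and then read off the matrix identity by polarization. Fix $f=\mathcal R(\mathbf v)$ and split $[-A,A]$ into the cells $[t_i,t_{i+1}]$, $i=0,\dots,G-1$. On each open cell only $\phi_i$ and $\phi_{i+1}$ are nonzero, and both are affine there, so $f$ is affine on the cell. Its endpoint values are $v_i$ and $v_{i+1}$, because $\phi_r(t_j)=\delta_{rj}$. Hence $f'$ equals the constant $(v_{i+1}-v_i)/h$ almost everywhere on the cell. The integral defining $\langle f,f\rangle_{B,h}$ is additive over cells, and the grid points form a null set. Summing gives
\[
\|f\|_{B,h}^2=\sum_{i=0}^{G-1}h\Bigl(\frac{v_{i+1}-v_i}{h}\Bigr)^2=\frac1h\sum_{i=0}^{G-1}(v_{i+1}-v_i)^2=\frac1h\|\mathbf D\mathbf v\|_2^2 .
\]
This establishes the last two equalities of the displayed chain.

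For the first equality, expand $f=\sum_r v_r\phi_r$ and use bilinearity of $\langle\cdot,\cdot\rangle_{B,h}$. This gives $\|f\|_{B,h}^2=\sum_{r,s}v_rv_s\langle\phi_r,\phi_s\rangle_{B,h}=\mathbf v^\top\mathbf K\mathbf v$, which is just the definition of $\mathbf K$.

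For the matrix identity, both $\mathbf K$ and $\frac1h\mathbf D^\top\mathbf D$ are symmetric: $\mathbf K$ because the form is symmetric, and $\mathbf D^\top\mathbf D$ trivially. We have shown that their quadratic forms agree for every $\mathbf v\in\mathbb R^{G+1}$. Polarization, $\mathbf u^\top\mathbf M\mathbf v=\tfrac14\bigl[(\mathbf u+\mathbf v)^\top\mathbf M(\mathbf u+\mathbf v)-(\mathbf u-\mathbf v)^\top\mathbf M(\mathbf u-\mathbf v)\bigr]$ for symmetric $\mathbf M$, then yields equality of all bilinear forms and hence of the matrices.

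As a sanity check, the same cellwise argument computes $K_{rs}$ directly. The result is the familiar tridiagonal matrix with diagonal entries $1/h$ at the two boundary nodes and $2/h$ at interior nodes, and off-diagonal entries $-1/h$, which matches $\frac1h\mathbf D^\top\mathbf D$.

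The only delicate point is justifying that $f'$ is the cellwise constant slope almost everywhere, so that the integral splits over cells. This follows from $f$ being continuous and piecewise affine with breakpoints only at the grid. Beyond that, the argument is routine bookkeeping, and no real obstacle is expected.
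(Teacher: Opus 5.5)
Your proposal is correct and is essentially the paper's argument. The proof of the proposition expands $\mathbf v^{\top}\mathbf K\mathbf v$ by bilinearity and delegates $\mathbf K=\frac1h\mathbf D^{\top}\mathbf D$ to \Cref{lem:stiffness-matrix}, whose proof is exactly your cellwise constant-slope computation, followed by polarization of the symmetric matrix $\mathbf K-\frac1h\mathbf D^{\top}\mathbf D$ and an entrywise check of the tridiagonal form of $\mathbf D^{\top}\mathbf D$.
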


{This recovers the classical stiffness-matrix form of the Dirichlet energy \citep{strang1973analysis,brenner2008finite}; we record it because it identifies the regularizer that finite realizations implement.}
\begin{corollary}[Implemented Brownian regularizer]
\label{cor:implemented-brownian-regularizer}

For $\mathbf v\in\mathbb R^{G+1}$ let $\Omega_h(\mathbf v):=\sum_{i=0}^{G-1}(v_{i+1}-v_i)^2$. Then
$\Omega_h(\mathbf v)=h\|\mathcal R(\mathbf v)\|_{B,h}^{2}$: the implemented regularizer is the
Brownian energy times the fixed mesh factor $h>0$.

\end{corollary}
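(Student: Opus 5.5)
This follows directly from \Cref{prop:finite-brownian-energy}, which we take as given. Fix $\mathbf v\in\mathbb R^{G+1}$ and set $f=\mathcal R(\mathbf v)\in\mathcal H_h$. By \Cref{def:finite-brownian-profile-space} this coefficient vector is the unique one representing $f$, so the quantity $\|\mathcal R(\mathbf v)\|_{B,h}^2$ is well defined as a function of $\mathbf v$.

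The proposition gives
\[
\|f\|_{B,h}^2=\frac1h\sum_{i=0}^{G-1}(v_{i+1}-v_i)^2=\frac1h\,\Omega_h(\mathbf v).
\]
Since $h=2A/G>0$, we may multiply both sides by $h$ and obtain $\Omega_h(\mathbf v)=h\|\mathcal R(\mathbf v)\|_{B,h}^2$. Equivalently, in matrix form, $\Omega_h(\mathbf v)=\|\mathbf D\mathbf v\|_2^2=\mathbf v^\top(h\mathbf K)\mathbf v$, using $\mathbf K=\frac1h\mathbf D^\top\mathbf D$.

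For a self-contained check of the proposition's key step: on each subinterval $[t_i,t_{i+1}]$ the function $f$ is affine with slope $(v_{i+1}-v_i)/h$. Its contribution to the energy is therefore $h\cdot(v_{i+1}-v_i)^2/h^2=(v_{i+1}-v_i)^2/h$, and summing over $i$ recovers the formula above.

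There is no real obstacle here. The only points needing care are that $h>0$, so the scaling is legitimate, and that $h$ is fixed once $G$ and $A$ are fixed. Consequently, the implemented regularizer $\Omega_h$ differs from the Brownian energy only by a constant factor. This constant factor is what later rescales the regularization parameter.
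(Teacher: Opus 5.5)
Your proof is correct and matches the paper's: both set $f=\mathcal R(\mathbf v)$, apply the final identity of \Cref{prop:finite-brownian-energy}, and multiply through by $h=2A/G>0$. The appeal to uniqueness of the coefficient vector is unnecessary, since $\mathcal R$ is a map and $\|\mathcal R(\mathbf v)\|_{B,h}$ is automatically a well-defined function of $\mathbf v$, but it does no harm.
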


\subsection{Anchored Finite Brownian RKHS}
\label{subsec:anchored-finite-rkhs}

The Brownian energy is constant-invariant and hence only a seminorm on $\mathcal H_h$. As in the
Cameron--Martin space \citep{berlinet2004rkhs}, uniqueness is recovered by fixing the value at a
reference point.

{
\begin{assumption}
\label{ass:even-grid}
$G=2m$ with $m\in\mathbb N$, and $i_0:=m$, so $t_{i_0}=0$. In force for the remainder of the paper.
\end{assumption}
}

\begin{definition}[Anchored finite Brownian profile space]
\label{def:anchored-finite-profile-space}
$\mathcal H_h^0=\{f\in\mathcal H_h:f(0)=0\}$; equivalently
$f=\mathcal R(\mathbf v)\in\mathcal H_h^0$ iff $v_{i_0}=0${, and
$\dim\mathcal H_h^0=G$}.
\end{definition}

{Since the anchor removes one coordinate, we work with a free parameter in
$\mathbb R^{G}$. The ordering below traverses each half-grid from its exterior endpoint toward the
anchor, which is what makes the two diagonal blocks of \Cref{thm:complete-dct-spectrum} the same
matrix.}

{
\begin{definition}[Reduced anchored coordinates]
\label{def:reduced-anchored-coordinates}
For an anchored nodal vector $\mathbf v\in\mathbb R^{G+1}$ with $v_{i_0}=0$, the \emph{reduced anchored
vector} is
$\widetilde{\mathbf v}:=(v_0,\ldots,v_{m-1},v_{2m},v_{2m-1},\ldots,v_{m+1})^{\top}\in\mathbb R^{G}$,
and the \emph{anchoring reconstruction matrix} is
$\mathbf R:=(\mathbf e_0,\ldots,\mathbf e_{m-1},\mathbf e_{2m},\mathbf e_{2m-1},\ldots,\mathbf e_{m+1})
\in\mathbb R^{(G+1)\times G}$, so that $\mathbf R\widetilde{\mathbf v}=\mathbf v$. We write
$\mathbf K_0:=\mathbf R^{\top}\mathbf K\mathbf R$ for the \emph{anchored stiffness matrix} and
$\mathbf D_0:=\mathbf D\mathbf R\in\mathbb R^{G\times G}$.
\end{definition}
}

{The anchored energy is positive definite, which is what makes the anchored space an inner-product space rather than merely a seminormed one.}

{
\begin{lemma}[Positive definiteness]
\label{lem:K0-pd}
$\ker\mathbf K=\operatorname{span}(\mathbf 1)$ and $\mathbf K_0\succ0$. Consequently
$\langle\cdot,\cdot\rangle_{B,h}$ is an inner product on $\mathcal H_h^0$.
\end{lemma}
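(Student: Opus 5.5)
The whole argument will rest on the factorization $\mathbf K=\frac1h\mathbf D^{\top}\mathbf D$ from \Cref{prop:finite-brownian-energy}, which turns every statement into one about first differences. For the kernel: since $\mathbf K$ is positive semidefinite with $\mathbf v^{\top}\mathbf K\mathbf v=\frac1h\|\mathbf D\mathbf v\|_2^2$, we get $\mathbf K\mathbf v=\mathbf 0$ iff $\mathbf v^{\top}\mathbf K\mathbf v=0$. Indeed, one direction is trivial, and for the other $\mathbf v^{\top}\mathbf K\mathbf v=0$ forces $\mathbf K^{1/2}\mathbf v=\mathbf 0$. Hence $\ker\mathbf K=\ker\mathbf D$. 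Now $\mathbf D\mathbf v=\mathbf 0$ means $v_{i+1}=v_i$ for $i=0,\ldots,G-1$, so all entries are equal by a telescoping/induction along the grid. This gives $\ker\mathbf K=\operatorname{span}(\mathbf 1)$, and clearly $\mathbf 1\in\ker\mathbf D$.

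For $\mathbf K_0=\mathbf R^{\top}\mathbf K\mathbf R$, the form $\widetilde{\mathbf v}^{\top}\mathbf K_0\widetilde{\mathbf v}=(\mathbf R\widetilde{\mathbf v})^{\top}\mathbf K(\mathbf R\widetilde{\mathbf v})\ge0$ shows $\mathbf K_0\succeq0$. Suppose $\widetilde{\mathbf v}^{\top}\mathbf K_0\widetilde{\mathbf v}=0$. By the first part, $\mathbf v:=\mathbf R\widetilde{\mathbf v}=c\mathbf 1$ for some scalar $c$. The columns of $\mathbf R$ are distinct canonical vectors omitting $\mathbf e_{i_0}$, so $(\mathbf R\widetilde{\mathbf v})_{i_0}=0$. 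This forces $c=0$, and since $\mathbf R$ has orthonormal columns and is therefore injective, $\widetilde{\mathbf v}=\mathbf 0$. Thus $\mathbf K_0\succ0$.

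For the consequence, take $f=\mathcal R(\mathbf v)\in\mathcal H_h^0$. By \Cref{def:anchored-finite-profile-space}, $v_{i_0}=0$, so $\mathbf v=\mathbf R\widetilde{\mathbf v}$ by \Cref{def:reduced-anchored-coordinates}. Then $\|f\|_{B,h}^2=\widetilde{\mathbf v}^{\top}\mathbf K_0\widetilde{\mathbf v}$. If $\|f\|_{B,h}=0$, then $\widetilde{\mathbf v}=\mathbf 0$, so $\mathbf v=\mathbf 0$ and $f=0$ by uniqueness of coefficients in \Cref{def:finite-brownian-profile-space}. Symmetry and bilinearity are already given in \Cref{def:discrete-brownian-inner-product}, so $\langle\cdot,\cdot\rangle_{B,h}$ is an inner product on $\mathcal H_h^0$.

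There is no real obstacle here. The only step needing care is the equivalence $\mathbf K\mathbf v=\mathbf 0\iff\mathbf D\mathbf v=\mathbf 0$, which I will justify via the semidefinite square root, or directly via $\ker\mathbf D^{\top}\mathbf D=\ker\mathbf D$. I must also use that $\mathbf R$ inserts a zero exactly at $i_0$.
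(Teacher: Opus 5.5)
Your proposal is correct and follows essentially the paper's route. The paper also gets $\ker\mathbf K=\ker\mathbf D=\operatorname{span}(\mathbf 1)$ from $\mathbf K=\frac1h\mathbf D^{\top}\mathbf D$ (this is \Cref{lem:kernel-stiffness}), then uses the zero anchor row of $\mathbf R$ and $\mathbf R^{\top}\mathbf R=\mathbf I_G$ to conclude $\mathbf K_0\succ0$, which gives definiteness of $\langle\cdot,\cdot\rangle_{B,h}$ on $\mathcal H_h^0$.
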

}

We can now record the Brownian-specific kernel formula and its covariance interpretation. The value of
the result here is its exact normalization and its use in the subsequent controlled comparison, rather
than finite-dimensional RKHS existence itself.

\begin{theorem}[Finite Brownian RKHS and its reproducing kernel]
\label{thm:finite-brownian-rkhs}
The space $(\mathcal H_h^0,\langle\cdot,\cdot\rangle_{B,h})$ is a $G$-dimensional RKHS. Let
$\mathcal J:=\{0,\ldots,G\}\setminus\{i_0\}$ and let
$\pi:\mathcal J\rightarrow[G]$ give the position of a node in the reduced ordering of
\Cref{def:reduced-anchored-coordinates}. With
$\widetilde{\boldsymbol\phi}(t):=(\phi_{\pi^{-1}(1)}(t),\ldots,\phi_{\pi^{-1}(G)}(t))^{\top}$, its
reproducing kernel satisfies:
\begin{enumerate}[(i)]
\item\label{thm:finite-brownian-rkhs-i}
$k_h(s,t)=\widetilde{\boldsymbol\phi}(s)^{\top}\mathbf K_0^{-1}
\widetilde{\boldsymbol\phi}(t)$ for all $s,t\in I$.
\item\label{thm:finite-brownian-rkhs-ii}
$(\mathbf K_0^{-1})_{\pi(r),\pi(q)}=k_{\mathrm B}(t_r,t_q)$ for $r,q\in\mathcal J$, and
$k_h(\cdot,t_r)=k_{\mathrm B}(\cdot,t_r)$ for every grid node. More generally, $k_h$ is the tensor-product
piecewise-bilinear interpolant of $k_{\mathrm B}$ on the grid, and therefore agrees with
$k_{\mathrm B}$ whenever at least one argument is a node.
\end{enumerate}
\end{theorem}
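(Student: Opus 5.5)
We work entirely in the reduced coordinates of \Cref{def:reduced-anchored-coordinates}. Every $f\in\mathcal H_h^0$ can be written uniquely as $f=\widetilde{\boldsymbol\phi}^{\top}\widetilde{\mathbf v}$ with $\widetilde{\mathbf v}\in\mathbb R^G$, and by \Cref{prop:finite-brownian-energy},
\[
\langle f,g\rangle_{B,h}=\widetilde{\mathbf v}^{\top}\mathbf K_0\widetilde{\mathbf w}.
\]
\Cref{lem:K0-pd} then makes $\mathcal H_h^0$ a $G$-dimensional inner-product space. Point evaluation $f\mapsto f(t)=\widetilde{\boldsymbol\phi}(t)^{\top}\widetilde{\mathbf v}$ is a linear functional on a finite-dimensional space, hence bounded. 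So the space is an RKHS, and it is complete automatically.

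For (i), set $k_t:=\widetilde{\boldsymbol\phi}^{\top}\mathbf K_0^{-1}\widetilde{\boldsymbol\phi}(t)\in\mathcal H_h^0$. Its coefficient vector is $\mathbf K_0^{-1}\widetilde{\boldsymbol\phi}(t)$, so
\[
\langle f,k_t\rangle_{B,h}=\widetilde{\mathbf v}^{\top}\mathbf K_0\mathbf K_0^{-1}\widetilde{\boldsymbol\phi}(t)=f(t).
\]
Uniqueness of the reproducing kernel then gives $k_h(s,t)=k_t(s)$, which is formula (i).

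For (ii), the key step is to identify $\mathbf K_0^{-1}$. I plan a direct computation rather than inverting a tridiagonal matrix. Fix a node $t_q$ with $q\in\mathcal J$. The function $k_{\mathrm B}(\cdot,t_q)=\min(|\cdot|,|t_q|)\mathbb 1\{\cdot\,t_q\ge0\}$ is piecewise linear with kinks only at $0$ and $t_q$, both of which are nodes, and it vanishes at $0$. Hence it lies in $\mathcal H_h^0$, and its coefficient vector is its nodal values $(k_{\mathrm B}(t_r,t_q))_{r\in\mathcal J}$.

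Next I verify that it reproduces at $t_q$. Its derivative is $\operatorname{sgn}(t_q)$ on the segment between $0$ and $t_q$ and zero elsewhere. For any $f\in\mathcal H_h^0$,
\[
\int f'\,\partial k_{\mathrm B}(\cdot,t_q)=\operatorname{sgn}(t_q)\bigl(f(t_q)-f(0)\bigr)\operatorname{sgn}(t_q)=f(t_q),
\]
where the orientation of the integral has to be tracked with care for negative $t_q$. By uniqueness of representers, $k_h(\cdot,t_q)=k_{\mathrm B}(\cdot,t_q)$. Comparing coefficient vectors with (i), where $\widetilde{\boldsymbol\phi}(t_q)=\mathbf e_{\pi(q)}$, the $\pi(q)$-th column of $\mathbf K_0^{-1}$ equals $(k_{\mathrm B}(t_r,t_q))_{r}$ in reduced ordering. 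This is the claimed entry formula. For the anchor node $t_{i_0}=0$, both sides vanish, since $\widetilde{\boldsymbol\phi}(0)=\mathbf 0$ and $k_{\mathrm B}(\cdot,0)=0$.

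Finally, for the bilinear-interpolant statement, substitute the entry formula into (i):
\[
k_h(s,t)=\sum_{r,q}\phi_r(s)\phi_q(t)\,k_{\mathrm B}(t_r,t_q),
\]
where the anchor terms may be included because they are zero. This is exactly the tensor-product piecewise-bilinear interpolant of $k_{\mathrm B}$. If one argument is a node $t_q$, then $\phi_r(t_q)=\delta_{rq}$ collapses the sum to $\sum_r\phi_r(s)k_{\mathrm B}(t_r,t_q)$. That is the piecewise-linear interpolant of $k_{\mathrm B}(\cdot,t_q)$, which equals $k_{\mathrm B}(\cdot,t_q)$ itself because that function is already piecewise linear on the grid.

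The main obstacle is the sign and orientation bookkeeping in the derivative computation for nodes on the negative half-line. I also need to make sure the reduced ordering enters only through $\pi$ and does not disturb the identification. Both are routine once written out.
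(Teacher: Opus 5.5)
Your proposal is correct and follows essentially the same route as the paper. Item (i) comes from identifying the representer's coefficient vector as $\mathbf K_0^{-1}\widetilde{\boldsymbol\phi}(t)$. Item (ii) comes from checking that $k_{\mathrm B}(\cdot,t_r)\in\mathcal H_h^0$ reproduces evaluation at $t_r$ via $\int_I f'\,\frac{\mathrm d}{\mathrm dx}k_{\mathrm B}(x,t_r)\,\mathrm dx=f(t_r)-f(0)$, then reading off the columns of $\mathbf K_0^{-1}$ using $\widetilde{\boldsymbol\phi}(t_r)=\mathbf e_{\pi(r)}$ and substituting into (i); your explicit handling of the anchor node is a small addition the paper leaves implicit.
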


\begin{corollary}[Conforming Brownian interpolation space]
\label{cor:conforming-subspace}
\begin{align}
\mathcal H_h^0
=\operatorname{span}\left\{k_{\mathrm B}(\cdot,t_r):r\in\mathcal J\right\}
\subset\mathcal H_{\mathrm B},
\qquad
\|g\|_{B,h}=\|g\|_{\mathcal H_{\mathrm B}}
\quad(g\in\mathcal H_h^0).
\end{align}
Thus the inclusion is isometric, and the $\mathcal H_{\mathrm B}$-orthogonal projection
$\Pi_h:\mathcal H_{\mathrm B}\rightarrow\mathcal H_h^0$ is exactly nodal interpolation.
\end{corollary}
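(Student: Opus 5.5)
The plan has three parts: identify the two spans, prove the isometry directly from the definitions, and then characterize the orthogonal projection through the reproducing property of $k_{\mathrm B}$.

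\textbf{Equality of spans.} By \Cref{thm:finite-brownian-rkhs}(ii), $k_h(\cdot,t_r)=k_{\mathrm B}(\cdot,t_r)$ for every $r\in\mathcal J$. Since $k_h(\cdot,t_r)\in\mathcal H_h^0$, each $k_{\mathrm B}(\cdot,t_r)$ lies in $\mathcal H_h^0$. This can also be checked by hand: $k_{\mathrm B}(\cdot,t_r)$ is continuous, vanishes at $0$, and is affine on every grid cell, because its kinks occur only at $0$ and at $t_r$, both of which are nodes.

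These $G$ functions are linearly independent. Their Gram matrix in $\mathcal H_h^0$ is $\big(k_h(t_r,t_q)\big)=\big(k_{\mathrm B}(t_r,t_q)\big)$, which by \Cref{thm:finite-brownian-rkhs}(ii) is a permutation of $\mathbf K_0^{-1}$ and hence invertible by \Cref{lem:K0-pd}. Since $\dim\mathcal H_h^0=G$, the two spaces coincide.

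\textbf{Inclusion and isometry.} Each $g\in\mathcal H_h^0$ is continuous and piecewise affine, so $g\in H^1(I)$, with weak derivative equal to the elementwise derivative almost everywhere. Moreover $g(0)=0$ because $t_{i_0}=0$. Hence $g\in\mathcal H_{\mathrm B}$, and
\[
\|g\|_{\mathcal H_{\mathrm B}}^2=\int_I (g')^2=\|g\|_{B,h}^2,
\]
directly from \Cref{def:discrete-brownian-inner-product}. The only point needing care is that the piecewise derivative is the weak derivative. I would justify this by integrating by parts cell by cell; the boundary terms telescope because $g$ is continuous at the nodes.

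\textbf{Projection equals interpolation.} For $f\in\mathcal H_{\mathrm B}$, let $I_hf:=\sum_{r\in\mathcal J}f(t_r)\phi_r\in\mathcal H_h^0$. Note that $f(t_{i_0})=f(0)=0$, so omitting the anchor node loses nothing. It suffices to show that $f-I_hf$ is orthogonal to $\mathcal H_h^0$. By the span identity, it is enough to test against the generators $k_{\mathrm B}(\cdot,t_r)$, $r\in\mathcal J$. The reproducing property of $k_{\mathrm B}$ in $\mathcal H_{\mathrm B}$ gives
\[
\langle f-I_hf,\,k_{\mathrm B}(\cdot,t_r)\rangle_{\mathcal H_{\mathrm B}}=f(t_r)-(I_hf)(t_r)=0.
\]
Therefore $I_hf=\Pi_hf$.

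The main step to state carefully is the first: it relies on the node-agreement part of \Cref{thm:finite-brownian-rkhs}(ii) together with the dimension count. The remaining steps are routine.
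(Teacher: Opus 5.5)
Your proposal is correct and matches the paper's own proof essentially step for step. Like the paper, you show that $k_{\mathrm B}(\cdot,t_r)\in\mathcal H_h^0$, get the span identity from the invertible Gram matrix ($\mathbf K_0^{-1}$ in reduced order) plus the dimension count $G$, obtain the isometric inclusion from $H^1$ membership and anchoring, and prove $I_h=\Pi_h$ by testing $f-I_hf$ against the generators with the reproducing property of $k_{\mathrm B}$. Your extra remarks (cell-by-cell integration by parts for the weak derivative, and $f(0)=0$ making the omitted anchor node harmless) only spell out details the paper leaves implicit.
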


\begin{corollary}[Exact residual kernel and sharp approximation]
\label{cor:approximation}
Let $r_h:=k_{\mathrm B}-k_h$ and let $T_i=[a_i,b_i]=[t_i,t_{i+1}]$. Then
\begin{align}
r_h(s,t)
=
\begin{cases}
\displaystyle
\frac{\left(\min\{s,t\}-a_i\right)\left(b_i-\max\{s,t\}\right)}{h},
& s,t\in T_i,\\[2mm]
0,&s,t\text{ lie in different grid elements}.
\end{cases}
\label{eq:residual-kernel}
\end{align}
Consequently, for $t\in T_i$ the power function is
\begin{align}
p_h(t)^2:=r_h(t,t)=\frac{(t-a_i)(b_i-t)}{h},
\qquad
|f(t)-\Pi_hf(t)|\le p_h(t)\|f\|_{\mathcal H_{\mathrm B}}.
\label{eq:power-function}
\end{align}
For every $f\in\mathcal H_{\mathrm B}$,
\begin{align}
\|f-\Pi_hf\|_{\mathcal H_{\mathrm B}}^2
&=\|f\|_{\mathcal H_{\mathrm B}}^2-\|\Pi_hf\|_{\mathcal H_{\mathrm B}}^2,\\
\|f-\Pi_hf\|_{\infty}
&\le\frac{\sqrt h}{2}\|f\|_{\mathcal H_{\mathrm B}},
&
\|f-\Pi_hf\|_{L^2(I)}
&\le\frac{h}{\pi}\|f\|_{\mathcal H_{\mathrm B}}.
\label{eq:sharp-approximation-bounds}
\end{align}
Both constants are sharp over the unit ball. These quantities depend only on the subspace
$\mathcal H_h^0$, so they are identical under every coordinate realization of that space.
\end{corollary}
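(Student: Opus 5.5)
The plan is to compute the residual kernel from \Cref{thm:finite-brownian-rkhs}(ii) and then read off each claim from it.

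\textbf{Residual kernel.} Since $k_h$ is the tensor-product piecewise-bilinear interpolant of $k_{\mathrm B}$ on the grid, $r_h=k_{\mathrm B}-k_h$ vanishes whenever either argument is a node. Fix $t$ and view $s\mapsto r_h(s,t)$. On any element it is affine minus affine, except on the element containing $t$, where $k_{\mathrm B}(\cdot,t)$ has a kink. Consequently $r_h(\cdot,t)$ is zero on every element not containing $t$, because an affine function vanishing at both endpoints is zero. On $T_i\ni t$ it is the piecewise-linear hat vanishing at $a_i,b_i$ with a kink at $t$. Within one element, $xx'\ge0$ holds because $0$ is a node, so $k_{\mathrm B}$ is $\min$ up to sign conventions. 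The bilinear interpolant of $\min(s,t)$ on the square $T_i\times T_i$ is then a direct computation, giving \eqref{eq:residual-kernel}. I would check the positive and negative half separately; on the negative side $k_{\mathrm B}=\min(|s|,|t|)$ and the formula is symmetric under reflection.

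\textbf{Power function and pointwise bound.} Setting $s=t$ gives $p_h^2$. By \Cref{cor:conforming-subspace}, $\Pi_h$ is the orthogonal projection onto $\mathcal H_h^0$, so $f-\Pi_hf\perp\mathcal H_h^0$. Its reproducing kernel is therefore $r_h$, and
\[
(f-\Pi_hf)(t)=\langle f-\Pi_h f,\,r_h(\cdot,t)\rangle_{\mathcal H_{\mathrm B}}.
\]
Cauchy--Schwarz then gives $|f(t)-\Pi_hf(t)|\le p_h(t)\|f-\Pi_hf\|\le p_h(t)\|f\|$. The Pythagorean identity is immediate from orthogonality. Maximizing $(t-a)(b-t)/h$ at the midpoint gives $h/4$, so $\|f-\Pi_hf\|_\infty\le\frac{\sqrt h}{2}\|f\|$. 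Sharpness follows by taking $f=r_h(\cdot,t^*)/p_h(t^*)$ with $t^*$ a midpoint. This $f$ lies in $\mathcal H_{\mathrm B}$, has unit norm, and satisfies $\Pi_hf=0$, so equality holds.

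\textbf{$L^2$ bound and sharpness.} This is the main obstacle. The error $e=f-\Pi_hf$ vanishes at every node and decouples across elements, since $\|f-\Pi_hf\|_{\mathcal H_{\mathrm B}}^2=\sum_i\int_{T_i}e'^2$ and $\|e\|_{L^2}^2=\sum_i\int_{T_i}e^2$. It therefore suffices to prove the one-element Dirichlet Poincaré inequality $\int_{T_i}e^2\le(h/\pi)^2\int_{T_i}e'^2$ for $e\in H_0^1(T_i)$. I would prove this by sine-series expansion: the first Dirichlet eigenvalue of $-d^2/dx^2$ on an interval of length $h$ is $(\pi/h)^2$. Summing over elements gives the bound. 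For sharpness, take $f$ to be $\sin(\pi(t-a_i)/h)$ on a single element and zero elsewhere, with $a_i\ge0$ or $b_i\le0$ so that $f(0)=0$. Then $\Pi_hf=0$ and equality holds, so after normalization the constant $h/\pi$ is attained on the unit ball.

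\textbf{Coordinate invariance.} Every quantity above is defined through $\mathcal H_h^0$, the $\mathcal H_{\mathrm B}$ norm and the projection $\Pi_h$. None refers to a basis, so the claimed invariance under every coordinate realization is immediate.
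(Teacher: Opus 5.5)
Your proposal is correct and follows essentially the same route as the paper. It computes the residual kernel by bilinear interpolation of $k_{\mathrm B}$. It gets the power-function bound from the representer $r_h(\cdot,t)$ of the orthogonal-projection error plus Cauchy--Schwarz, with a normalized residual section at a cell midpoint for the sharp $\sqrt h/2$. For $h/\pi$ it uses the cellwise sharp Wirtinger inequality, with a one-cell sine arch for sharpness.

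Two small deviations are both valid. You get the off-element vanishing from the nodal zeros of the cellwise-affine section $r_h(\cdot,t)$, where the paper uses separate affinity of $k_{\mathrm B}$ on off-diagonal rectangles. You bound $\|e'\|_{L^2}$ through $\|f-\Pi_hf\|_{\mathcal H_{\mathrm B}}\le\|f\|_{\mathcal H_{\mathrm B}}$ via the Pythagorean identity, where the paper uses a cell-average remark. State that last Pythagorean step explicitly when you sum over elements.
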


The residual has the Brownian-bridge covariance form, and the bounds are standard power-function and
sharp interval-inequality consequences
\citep{rasmussen2006gaussian,fasshauer2007meshfree,brenner2008finite}; their role is to certify identical
approximation across coordinates.

\section{Equivalent Coordinate Representations}
\label{sec:equivalent-coordinate-representations}

The finite Brownian RKHS admits nodal, increment, and spectral realizations. Their invertible linear
relation is elementary; the substantive control is that they realize exactly the same space, Brownian norm,
and approximation operator. Consequently, any mapped optimization difference must arise from the
optimizer or the coordinate metric rather than representational capacity.

\subsection{Nodal and Increment Coordinate Representations}
\label{subsec:nodal-increment}

We first consider the nodal finite-element realization and its associated increment representation.

\begin{definition}[Nodal coordinates]
\label{def:nodal-coordinates}

Every function
$f\in\mathcal H_h^0$
admits the unique nodal representation
$f=\mathcal R(\mathbf R\widetilde{\mathbf v})$
{with reduced anchored vector $\widetilde{\mathbf v}\in\mathbb R^{G}$, and
$\left\|f\right\|_{B,h}^{2}=\widetilde{\mathbf v}^{\top}\mathbf K_0\widetilde{\mathbf v}
=\mathbf v^{\top}\mathbf K\mathbf v$ for $\mathbf v=\mathbf R\widetilde{\mathbf v}$.
Throughout, $\widetilde{\mathbf v}\in\mathbb R^{G}$ is the free parameter; the anchor is not a trainable
coordinate.}

\end{definition}

{The nodal realization is the parameterization induced directly by the finite-element
discretization: each trainable parameter is the value of the profile at a grid point.}

\begin{definition}[Increment coordinates]
\label{def:increment-coordinates}

The increment coordinates associated with the {reduced} nodal vector
{$\widetilde{\mathbf v}$}
are defined by
{$\mathbf w=\mathbf D_0\widetilde{\mathbf v}$}, or equivalently,
$w_i=v_{i+1}-v_i$ for $i=0,\ldots,G-1$ {with $\mathbf v=\mathbf R\widetilde{\mathbf v}$}.

\end{definition}

{The increment realization stores local differences rather than grid values. Because the
profile is anchored, $\mathbf D_0$ is invertible (\Cref{thm:equivalence-of-implementations}\ref{thm:equivalence-of-implementations-ii}) and the increments determine
$f$ uniquely. In these coordinates the Brownian energy is simply
$\|f\|_{B,h}^{2}=\tfrac1h\|\mathbf w\|_2^{2}$.}

\subsection{Spectral Coordinate Representation}
\label{subsec:spectral-coordinates}

A complementary realization is obtained by diagonalizing the anchored Brownian energy, yielding an orthogonal spectral parameterization of the finite Brownian RKHS.
\begin{definition}[Spectral coordinates]
\label{def:spectral-coordinates}

{By \Cref{lem:K0-pd}, $\mathbf K_0\succ0$. Let}
$\mathbf K_0=\mathbf Q\boldsymbol{\Lambda}\mathbf Q^\top$
be {a} fixed orthogonal eigendecomposition, with
$\mathbf Q\in\mathbb R^{G\times G}$ orthogonal and
$\boldsymbol{\Lambda}=\operatorname{diag}(\lambda_1,\ldots,\lambda_G)$
{having strictly positive diagonal.}
For the reduced anchored nodal vector
$\widetilde{\mathbf v}\in\mathbb R^G$,
define
$\mathbf c=\mathbf Q^\top\widetilde{\mathbf v}$, or equivalently,
$\widetilde{\mathbf v}=\mathbf Q\mathbf c$.

\end{definition}

These coordinates diagonalize the Brownian norm, so each spectral coefficient contributes independently
to the total energy. The reconstruction and norm {identities are} proved in
Appendix~\ref{app:coordinate-equivalence}; \Cref{sec:explicit-spectrum} gives the explicit DCT-VIII
block-structured choice of $\mathbf Q$.

The three coordinate systems differ in how they store a Brownian profile: the nodal realization stores profile values, the increment realization stores local differences, and the spectral realization stores coefficients in an orthogonal Brownian basis. The following theorem shows that they are nevertheless equivalent at the level of the function space they represent.

\begin{theorem}[Equivalence of implementations]
\label{thm:equivalence-of-implementations}

Let
$f\in\mathcal H_h^0$
be arbitrary, let
$\widetilde{\mathbf v}\in\mathbb R^G$
denote its reduced anchored nodal coefficient vector, and set
$\mathbf v=\mathbf R\widetilde{\mathbf v}\in\mathbb R^{G+1}$ and
$\mathbf D_0:=\mathbf D\mathbf R\in\mathbb R^{G\times G}$.
Define the increment and spectral coordinates by
$\mathbf{w}=\mathbf D_0\widetilde{\mathbf v}$ and
$\mathbf c=\mathbf Q^{\top}\widetilde{\mathbf v}$, respectively.
Then the following statements hold.

\begin{enumerate}[(i)]

\item
The nodal, increment, and spectral representations determine exactly the same
function $f\in\mathcal H_h^0$.\label{thm:equivalence-of-implementations-i}

\item
The maps
$\widetilde{\mathbf v}\mapsto\mathbf{w}$
and
$\widetilde{\mathbf v}\mapsto\mathbf c$
are invertible linear transformations. Consequently, each coordinate system
uniquely determines the other two.\label{thm:equivalence-of-implementations-ii}

\item\label{thm:equivalence-of-implementations-iii}
The Brownian norm admits the equivalent representations
\begin{align}
\left\|f\right\|_{B,h}^{2}
=
\mathbf v^{\top}\mathbf K\mathbf v
=
\frac{1}{h}
\left\|\mathbf{w}\right\|_2^{2}
=
\mathbf c^{\top}\boldsymbol{\Lambda}\mathbf c.
\end{align}

\item\label{thm:equivalence-of-implementations-iv}
{The three realizations therefore represent the same hypothesis class and, by
\Cref{cor:approximation}, share the same approximation properties.}

\item\label{thm:equivalence-of-implementations-v}
The nodal-to-spectral transformation is {an isometry for the Euclidean metric on the
reduced anchored coordinates}, whereas the
nodal-to-increment transformation satisfies
$\mathbf D_0^{\top}\mathbf D_0=h\mathbf K_0$.
Consequently, for $G=2m\geq4$, the increment coordinate map is
nonorthogonal, whereas the spectral coordinate map is orthogonal.

\end{enumerate}

\end{theorem}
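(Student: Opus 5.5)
The plan is to reduce every item to two earlier facts: the factorization $\mathbf K=\tfrac1h\mathbf D^{\top}\mathbf D$ from \Cref{prop:finite-brownian-energy}, and $\mathbf K_0\succ0$ from \Cref{lem:K0-pd}. First, since $\mathbf R\widetilde{\mathbf v}=\mathbf v$, conjugating by $\mathbf R$ gives
\[
\mathbf K_0=\mathbf R^{\top}\mathbf K\mathbf R=\tfrac1h(\mathbf D\mathbf R)^{\top}(\mathbf D\mathbf R)=\tfrac1h\mathbf D_0^{\top}\mathbf D_0 .
\]
This is the identity $\mathbf D_0^{\top}\mathbf D_0=h\mathbf K_0$ claimed in (v).

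\textbf{Items (ii) and (i).} For (ii), $\mathbf D_0$ is square. If $\mathbf D_0\widetilde{\mathbf v}=0$, then $h\,\widetilde{\mathbf v}^{\top}\mathbf K_0\widetilde{\mathbf v}=\|\mathbf D_0\widetilde{\mathbf v}\|_2^2=0$, so $\widetilde{\mathbf v}=0$ because $\mathbf K_0\succ0$; hence $\mathbf D_0$ is invertible. The map $\widetilde{\mathbf v}\mapsto\mathbf Q^{\top}\widetilde{\mathbf v}$ is invertible with inverse $\mathbf Q$. For (i), define each reconstruction map as nodal reconstruction composed with the inverse coordinate map: $\mathbf w\mapsto\mathcal R(\mathbf R\mathbf D_0^{-1}\mathbf w)$ and $\mathbf c\mapsto\mathcal R(\mathbf R\mathbf Q\mathbf c)$. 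Both then return the same $f$. Uniqueness of the nodal vector comes from \Cref{def:finite-brownian-profile-space}, and the anchored form from \Cref{def:anchored-finite-profile-space}.

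\textbf{Items (iii) and (iv).} For (iii), the first equality is \Cref{def:nodal-coordinates}. The second follows from $\widetilde{\mathbf v}^{\top}\mathbf K_0\widetilde{\mathbf v}=\tfrac1h\|\mathbf D_0\widetilde{\mathbf v}\|_2^2$. The third follows from $\widetilde{\mathbf v}^{\top}\mathbf Q\boldsymbol\Lambda\mathbf Q^{\top}\widetilde{\mathbf v}=\mathbf c^{\top}\boldsymbol\Lambda\mathbf c$. For (iv), each coordinate map is a bijection onto $\mathbb R^G$, so the images of the three reconstruction maps all equal $\mathcal H_h^0$. \Cref{cor:approximation} states that the approximation quantities depend only on the subspace, so they coincide across the three realizations.

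\textbf{Item (v).} The isometry part is $\|\mathbf Q^{\top}\widetilde{\mathbf v}\|_2=\|\widetilde{\mathbf v}\|_2$, which holds because $\mathbf Q$ is orthogonal. The one genuinely non-formal step, and the main obstacle, is showing that $\mathbf D_0$ is not orthogonal once $G\ge4$. Since $\mathbf D_0^{\top}\mathbf D_0=h\mathbf K_0$, it suffices to show $h\mathbf K_0\neq\mathbf I_G$, and for that one off-diagonal entry is enough. I would compute $\mathbf K$ directly: it is the standard tridiagonal matrix $\tfrac1h\operatorname{tridiag}(-1,2,-1)$ with corner entries $K_{00}=K_{GG}=1/h$. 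Now take $m\ge2$. The nodes $0$ and $1$ both lie in $\mathcal J$, and they occupy reduced positions $1$ and $2$. Therefore $(h\mathbf K_0)_{12}=hK_{01}=-1\neq0$, and $\mathbf D_0^{\top}\mathbf D_0\neq\mathbf I$.

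As a cross-check on the direction of the bound, for $m=1$ the reduced nodes are $0$ and $2$, and they are non-adjacent. In that case $h\mathbf K_0=\mathbf I_2$ and $\mathbf D_0$ is in fact orthogonal, which is consistent with the hypothesis $G\ge4$. I would still confirm the exact block ordering of $\mathbf R$ when writing out the entry, but the argument only needs the single adjacent pair $(0,1)$.
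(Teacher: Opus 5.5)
Your proposal is correct and follows essentially the same route as the paper. Both arguments use the Gram identity $\mathbf D_0^{\top}\mathbf D_0=h\mathbf K_0$, build each reconstruction as nodal reconstruction composed with an inverse coordinate map, identify the common range $\mathcal H_h^0$, and exhibit one nonzero off-diagonal entry of $h\mathbf K_0$ when $m\ge2$. You differ only in two local shortcuts:
\begin{enumerate}[(a)]
\item You obtain invertibility of the square matrix $\mathbf D_0$ from $\mathbf K_0\succ0$. The paper instead uses a direct kernel argument plus an explicit cumulative-sum inverse.
\item You read $(h\mathbf K_0)_{12}=hK_{01}=-1$ directly from $\mathbf K$. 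The paper goes through the block-decomposition lemma, but it is the same entry $(\mathbf T_m)_{0,1}$.
\end{enumerate}
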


{The restriction $G\ge4$ excludes only $G=2$, where $h\mathbf K_0=\mathbf I_2$ and the
increment map is itself orthogonal.} \Cref{thm:equivalence-of-implementations} gives both the
function-space equivalence and {the exact relation between the induced Euclidean
metrics}: the spectral map is orthogonal, the increment map induces the metric $h\mathbf K_0$.
{\Cref{sec:optimization-geometry} shows that this one matrix identity determines which
optimizers see the three realizations as the same problem.}

\section{Explicit Spectral Representation}
\label{sec:explicit-spectrum}

Using classical DCT nomenclature \citep{martucci1994symmetric,strang1999dct,masera2017odd}, we give all
eigenpairs of the fixed mixed-boundary stiffness matrix in closed block DCT-VIII form; ``complete'' does
not mean a new transform family.

\subsection{Block Decomposition}
\label{subsec:block-decomposition}

{After anchoring, the stiffness matrix decouples into two independent tridiagonal blocks. The reason is that the anchor node $t_{i_0}=0$ is the unique node coupling the two half-grids; removing it severs them. In the reduced ordering of \Cref{def:reduced-anchored-coordinates}, which traverses both half-grids from the exterior endpoint inward, the two blocks are literally the same matrix.}

\begin{theorem}[Complete DCT-VIII eigendecomposition]
\label{thm:complete-dct-spectrum}

Let {$\mathbf T_m=(\tau_{rs})_{r,s=0}^{m-1}\in\mathbb R^{m\times m}$ be the tridiagonal
matrix with $\tau_{00}=1$, $\tau_{rr}=2$ for $1\le r\le m-1$, $\tau_{rs}=-1$ for $|r-s|=1$, and
$\tau_{rs}=0$ otherwise. Then $\mathbf K_0=\tfrac1h\operatorname{blkdiag}(\mathbf T_m,\mathbf T_m)$
(\Cref{lem:block-decomposition}).} For each $k\in[m]$, define
$\theta_k=\frac{(2k-1)\pi}{2m+1}$ and
$\nu_k
=
2-2\cos(\theta_k)
=
4\sin^2\left(\frac{\theta_k}{2}\right)$,
and let the vector
$\mathbf q_k\in\mathbb R^m$
be defined componentwise by
$q_k(j)=\frac{2}{\sqrt{2m+1}}
\cos\left(\left(j+\frac12\right)\theta_k\right)$
for $j=0,\ldots,m-1$.

Then the following statements hold.

\begin{enumerate}[(i)]

\item\label{thm:complete-dct-spectrum-i}
The vectors
$\mathbf q_1,\ldots,\mathbf q_m$
form an orthonormal basis of
$\mathbb R^m$.

\item\label{thm:complete-dct-spectrum-ii}
With $\mathbf Q_m=[\mathbf q_1,\ldots,\mathbf q_m]$, the matrix $\mathbf T_m$ admits the
eigendecomposition
$\mathbf T_m=\mathbf Q_m\operatorname{diag}(\nu_1,\ldots,\nu_m)\mathbf Q_m^\top$.

\item\label{thm:complete-dct-spectrum-iii}
The anchored stiffness matrix admits the orthogonal eigendecomposition
$\mathbf K_0=\mathbf Q\boldsymbol{\Lambda}\mathbf Q^\top$,
where $\mathbf Q=\operatorname{blkdiag}(\mathbf Q_m,\mathbf Q_m)$ and
$\boldsymbol{\Lambda}=\tfrac1h\operatorname{diag}(\nu_1,\ldots,\nu_m,\nu_1,\ldots,\nu_m)$.

\item\label{thm:complete-dct-spectrum-iv}
The eigenvalues of $\mathbf K_0$ are
$\lambda_k=\tfrac4h\sin^2\big(\tfrac{(2k-1)\pi}{2(2m+1)}\big)$ for $k\in[m]$, each of multiplicity two.

\end{enumerate}

\end{theorem}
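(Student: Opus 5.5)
We first prove the block decomposition $\mathbf K_0=\tfrac1h\operatorname{blkdiag}(\mathbf T_m,\mathbf T_m)$ (the content of \Cref{lem:block-decomposition}). Using $\mathbf K=\tfrac1h\mathbf D^\top\mathbf D$ from \Cref{prop:finite-brownian-energy}, $\mathbf K$ is $\tfrac1h$ times the path-graph Laplacian on nodes $0,\dots,G$. Deleting row and column $i_0=m$ leaves two decoupled path Laplacians. The left block (nodes $0,\dots,m-1$) has diagonal entries $1$ at the endpoint $0$ and $2$ at the interior nodes, including node $m-1$, which retains its degree-two diagonal from its edge to the deleted anchor. The right block, traversed in the order $2m,2m-1,\dots,m+1$, has exactly the same structure. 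Hence both blocks equal $\mathbf T_m$, and there is no coupling between them.

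Next we verify the eigenpairs of $\mathbf T_m$ directly. Set $q(j)=\cos((j+\tfrac12)\theta)$ and use the identity $\cos((j-\tfrac12)\theta)+\cos((j+\tfrac32)\theta)=2\cos\theta\,\cos((j+\tfrac12)\theta)$. This gives $-q(j-1)+2q(j)-q(j+1)=(2-2\cos\theta)q(j)$ at interior rows, so it remains to check the two boundary rows.

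At row $0$ we need $q(0)-q(1)=\nu q(0)$. This is equivalent to the ghost condition $q(-1)=q(0)$, which holds automatically because $\cos(-\theta/2)=\cos(\theta/2)$.

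At row $m-1$ we need the ghost condition $q(m)=0$, that is, $\cos((m+\tfrac12)\theta)=0$. With $\theta_k=(2k-1)\pi/(2m+1)$ this becomes $\cos((2k-1)\pi/2)=0$, which holds. Thus $\mathbf T_m\mathbf q_k=\nu_k\mathbf q_k$, and $\nu_k=4\sin^2(\theta_k/2)$ follows from the half-angle formula.

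The $\theta_k$ for $k\in[m]$ are distinct and lie in $(0,\pi)$, so the $\nu_k$ are distinct. Eigenvectors of a symmetric matrix with distinct eigenvalues are therefore orthogonal, and it only remains to check the normalization $\sum_{j=0}^{m-1}\cos^2((j+\tfrac12)\theta_k)=(2m+1)/4$. Writing $\cos^2=\tfrac12(1+\cos((2j+1)\theta_k))$, this reduces to showing $\sum_{j=0}^{m-1}\cos((2j+1)\theta_k)=\tfrac12$. We evaluate the sum as the real part of a geometric sum, or via the telescoping identity $2\sin\theta\sum_{j}\cos((2j+1)\theta)=\sin(2m\theta)$. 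Then $\sin(2m\theta_k)=\sin((2k-1)\pi-\theta_k)=\sin\theta_k$, so the sum equals $\tfrac12$, using $\sin\theta_k\neq0$. This establishes (i) and (ii). We expect this normalization step to be the main technical point; the rest is bookkeeping.

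For (iii), note that $\mathbf Q=\operatorname{blkdiag}(\mathbf Q_m,\mathbf Q_m)$ is orthogonal. Conjugating the block decomposition by $\mathbf Q$ then gives $\mathbf K_0=\mathbf Q\boldsymbol\Lambda\mathbf Q^\top$ with $\boldsymbol\Lambda=\tfrac1h\operatorname{diag}(\nu_1,\dots,\nu_m,\nu_1,\dots,\nu_m)$. Part (iv) follows by substituting $\theta_k/2=(2k-1)\pi/(2(2m+1))$. The multiplicity is exactly two because the $\nu_k$ are pairwise distinct and each appears once in each block.
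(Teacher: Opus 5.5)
Your proposal is correct and follows the paper's proof essentially step by step. Like the paper, you use the ghost values $\widehat q_k(-1)=\widehat q_k(0)$ and $\widehat q_k(m)=0$ for the boundary rows, get orthogonality from the symmetry of $\mathbf T_m$ together with the distinct $\nu_k$, and normalize via the telescoping identity $2\sin\theta_k\sum_j\cos((2j+1)\theta_k)=\sin(2m\theta_k)=\sin\theta_k$. The only cosmetic difference is the block decomposition: you delete the anchor row and column of the path Laplacian, whereas \Cref{lem:block-decomposition} factors $\mathbf D\mathbf R=\operatorname{blkdiag}(\mathbf E_m,-\mathbf J_m\mathbf E_m)$; both routes are valid.
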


The eigendecomposition induces the basis
$\psi_i:=\mathcal R(\mathbf R\mathbf Q\mathbf e_i)$ of $\mathcal H_h^0$, satisfying
$\langle\psi_i,\psi_j\rangle_{B,h}=\lambda_i\delta_{ij}$
(\Cref{lem:explicit-spectral-basis}). Hence the finite RKHS kernel has the spectral basis expansion
\begin{align}
k_h(s,t)=\sum_{i=1}^{G}\lambda_i^{-1}\psi_i(s)\psi_i(t).
\end{align}
This is a finite RKHS expansion; no input measure or integral-operator eigenproblem is required.

\begin{remark}
Every eigenvalue has multiplicity two, so the eigenbasis is not canonical inside each left--right
eigenspace. We fix $\mathbf Q=\operatorname{blkdiag}(\mathbf Q_m,\mathbf Q_m)$, the explicit
block-structured choice matching the reduced ordering. Any such orthogonal choice gives the same gradient
descent trajectory after mapping, but coordinatewise adaptive methods may distinguish the choices.
\end{remark}

\begin{corollary}[Exact stiffness and coordinate conditioning]
\label{cor:condition-number}
Under \Cref{ass:even-grid},
\begin{align}
\kappa_2(\mathbf K_0)
&=\frac{\cos^{2}\!\left(\frac{\pi}{2m+1}\right)}
{\sin^{2}\!\left(\frac{\pi}{4m+2}\right)}
=\frac{4}{\pi^2}(2m+1)^2+O(1)=\Theta(G^2),
\label{eq:K0-condition}\\
\sigma_{\min}(\mathbf D_0)
&=2\sin\!\left(\frac{\pi}{4m+2}\right),~~~
\sigma_{\max}(\mathbf D_0)
=2\cos\!\left(\frac{\pi}{2m+1}\right),
\label{eq:D0-singular-values}\\
\kappa_2(\mathbf D_0)
&=\frac{\cos\!\left(\frac{\pi}{2m+1}\right)}
{\sin\!\left(\frac{\pi}{4m+2}\right)}
=\sqrt{\kappa_2(\mathbf K_0)}=\Theta(G).
\label{eq:D0-condition}
\end{align}
The condition numbers are independent of $A$. Moreover,
$h\lambda_{\max}(\mathbf K_0)\rightarrow4$ and
$\lambda_{\min}(\mathbf K_0)\sim\pi^2/[h(2m+1)^2]$ as $m\rightarrow\infty$.
\end{corollary}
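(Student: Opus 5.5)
The plan is to read every quantity off the explicit spectrum in \Cref{thm:complete-dct-spectrum}. By part~\ref{thm:complete-dct-spectrum-iv}, the eigenvalues of $\mathbf K_0$ are $\lambda_k=\tfrac4h\sin^2(\theta_k/2)$ with $\theta_k=(2k-1)\pi/(2m+1)$, for $k\in[m]$. On $[m]$ the angles satisfy $0<\theta_k/2<\pi/2$, and $\theta_k/2$ is strictly increasing in $k$, so $\lambda_k$ is strictly increasing. Hence $\lambda_{\min}=\lambda_1=\tfrac4h\sin^2(\pi/(4m+2))$. For the maximum,
\begin{align}
\lambda_{\max}=\lambda_m=\tfrac4h\sin^2\!\Big(\tfrac{(2m-1)\pi}{2(2m+1)}\Big).
\end{align}
Using $\tfrac{(2m-1)\pi}{2(2m+1)}=\tfrac\pi2-\tfrac{\pi}{2m+1}$, this becomes $\lambda_{\max}=\tfrac4h\cos^2(\pi/(2m+1))$. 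Since $\mathbf K_0$ is symmetric positive definite (\Cref{lem:K0-pd}), $\kappa_2(\mathbf K_0)=\lambda_{\max}/\lambda_{\min}$. The factors $4/h$ cancel, which gives the closed form in \eqref{eq:K0-condition} and shows independence from $A$ at once.

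For $\mathbf D_0$, I will use \Cref{thm:equivalence-of-implementations}\ref{thm:equivalence-of-implementations-v}, namely $\mathbf D_0^\top\mathbf D_0=h\mathbf K_0$. The squared singular values of $\mathbf D_0$ are therefore $h\lambda_k=4\sin^2(\theta_k/2)$. Taking square roots, with $\sin(\theta_k/2)>0$, gives $\sigma_{\min}(\mathbf D_0)=2\sin(\pi/(4m+2))$ and $\sigma_{\max}(\mathbf D_0)=2\cos(\pi/(2m+1))$. The ratio is $\kappa_2(\mathbf D_0)$, and it equals $\sqrt{\kappa_2(\mathbf K_0)}$ because each singular value is the square root of an eigenvalue of $h\mathbf K_0$. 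This yields \eqref{eq:D0-singular-values} and \eqref{eq:D0-condition}.

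Next come the asymptotics. Set $x=\pi/(4m+2)$. For the numerator, $\cos^2(2x)=1-4x^2+O(x^4)$. For the denominator, $\sin^{-2}x=x^{-2}+\tfrac13+O(x^2)$. Multiplying, $\kappa_2(\mathbf K_0)=x^{-2}+O(1)$. Since $x^{-2}=(4m+2)^2/\pi^2=\tfrac4{\pi^2}(2m+1)^2$, this is the stated form. With $G=2m$, it is $\Theta(G^2)$, and taking the square root gives $\Theta(G)$ for $\mathbf D_0$.

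Finally, the limits. We have $h\lambda_{\max}=4\cos^2(\pi/(2m+1))\to4$. Also $h\lambda_{\min}=4\sin^2 x\sim 4x^2=\pi^2/(2m+1)^2$, which is the claimed asymptotic $\lambda_{\min}\sim\pi^2/[h(2m+1)^2]$. I expect the only real care to be needed in the trigonometric rewrite of $\lambda_m$ and in the monotonicity argument, i.e. confirming that all $\theta_k/2$ lie in $(0,\pi/2)$. The remaining steps are routine Taylor expansion.
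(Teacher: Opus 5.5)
Your proposal is correct and follows the paper's proof essentially step for step. You read $\lambda_{\min}=\lambda_1$ and $\lambda_{\max}=\lambda_m$ off the monotone closed-form spectrum of \Cref{thm:complete-dct-spectrum}, obtain the singular values of $\mathbf D_0$ from $\mathbf D_0^{\top}\mathbf D_0=h\mathbf K_0$, and finish with Taylor expansions; your complementary-angle rewrite of $\lambda_m$ and the explicit expansion $\kappa_2(\mathbf K_0)=x^{-2}+O(1)$ with $x=\pi/(4m+2)$ just fill in details the paper leaves as ``Taylor expansion.''
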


The first identity gives the classical $O(h^{-2})$ stiffness conditioning; the second and third quantify
the anisotropy of the actual nodal-to-increment coordinate map.

\section{Coordinates and Optimizers}
\label{sec:optimization-geometry}

We begin with the standard chain-rule reparameterization identity
\citep{gunasekar2018optimization,amid2020reparameterizing,kristiadi2023geometry}. Let
$L:\mathbb R^G\rightarrow\mathbb R$ be differentiable in reduced nodal coordinates and, for an
invertible matrix $\mathbf A$, define $L_{\mathbf A}(\mathbf z):=L(\mathbf A^{-1}\mathbf z)$.

\begin{proposition}[Linear reparameterization is exact preconditioning]
\label{prop:precond}
If
$\mathbf z_{k+1}=\mathbf z_k-\eta_k\nabla L_{\mathbf A}(\mathbf z_k)$ and
$\widetilde{\mathbf v}_k=\mathbf A^{-1}\mathbf z_k$, then
\begin{align}
\widetilde{\mathbf v}_{k+1}
=\widetilde{\mathbf v}_k
-\eta_k(\mathbf A^{\top}\mathbf A)^{-1}\nabla L(\widetilde{\mathbf v}_k).
\label{eq:reparameterized-gd}
\end{align}
The same identity holds for stochastic gradients computed from the same minibatches and transformed by
the chain rule.
\end{proposition}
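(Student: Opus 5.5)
The proof is a chain-rule computation followed by a change of variables. Since $L_{\mathbf A}(\mathbf z)=L(\mathbf A^{-1}\mathbf z)$ and $L$ is differentiable, the chain rule gives
\begin{align}
\nabla L_{\mathbf A}(\mathbf z)=(\mathbf A^{-1})^{\top}\nabla L(\mathbf A^{-1}\mathbf z)=\mathbf A^{-\top}\nabla L(\mathbf A^{-1}\mathbf z).
\end{align}
This is the Jacobian-transpose rule for a linear map, and it needs only invertibility of $\mathbf A$.

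Next, multiply the update $\mathbf z_{k+1}=\mathbf z_k-\eta_k\nabla L_{\mathbf A}(\mathbf z_k)$ on the left by $\mathbf A^{-1}$. Using $\widetilde{\mathbf v}_k=\mathbf A^{-1}\mathbf z_k$ gives
\begin{align}
\widetilde{\mathbf v}_{k+1}=\widetilde{\mathbf v}_k-\eta_k\mathbf A^{-1}\mathbf A^{-\top}\nabla L(\widetilde{\mathbf v}_k).
\end{align}
The identity $\mathbf A^{-1}\mathbf A^{-\top}=(\mathbf A^{\top}\mathbf A)^{-1}$ then yields \eqref{eq:reparameterized-gd}. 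The scalar step $\eta_k$ passes through unchanged because all maps involved are linear.

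For the stochastic statement, write the minibatch objective at step $k$ as $L^{(B_k)}$, the empirical loss on minibatch $B_k$ in nodal coordinates. The $\mathbf z$-coordinate stochastic gradient is the gradient of $L^{(B_k)}_{\mathbf A}=L^{(B_k)}\circ\mathbf A^{-1}$, computed from the same minibatch. The argument above applies verbatim to $L^{(B_k)}$ in place of $L$, step by step, so the mapped iterates satisfy the same preconditioned recursion with $\nabla L^{(B_k)}(\widetilde{\mathbf v}_k)$. Induction on $k$, starting from the mapped initialization $\widetilde{\mathbf v}_0=\mathbf A^{-1}\mathbf z_0$, gives the identity along the whole trajectory.

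There is no real obstacle here. The only point requiring care is the transpose convention in the chain rule: the gradient of a composition with $\mathbf A^{-1}$ picks up $\mathbf A^{-\top}$, not $\mathbf A^{-1}$. This is what produces the metric $\mathbf A^{\top}\mathbf A$ rather than $\mathbf A\mathbf A^{\top}$.

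In the intended applications this gives two consequences. For $\mathbf A=\mathbf Q^{\top}$, \Cref{thm:equivalence-of-implementations}\ref{thm:equivalence-of-implementations-v} gives $\mathbf A^{\top}\mathbf A=\mathbf I$, so spectral descent reproduces nodal descent exactly. For $\mathbf A=\mathbf D_0$, the same result gives $\mathbf A^{\top}\mathbf A=h\mathbf K_0$, so the increment update is $\widetilde{\mathbf v}_{k+1}=\widetilde{\mathbf v}_k-\tfrac{\eta_k}{h}\mathbf K_0^{-1}\nabla L(\widetilde{\mathbf v}_k)$.
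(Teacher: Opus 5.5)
Your proposal is correct and follows essentially the same route as the paper. The paper also applies the chain rule $\nabla L_{\mathbf A}(\mathbf z)=\mathbf A^{-\top}\nabla L(\mathbf A^{-1}\mathbf z)$, multiplies the update by $\mathbf A^{-1}$ to obtain $(\mathbf A^{\top}\mathbf A)^{-1}$, and treats the stochastic case by the same computation with a shared minibatch gradient.
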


\begin{proposition}[Coordinate--optimizer correspondence]
\label{prop:gd-equivariance}
\label{prop:increment-natural-gradient}
For corresponding initializations, identical scalar step sizes, identical minibatches, and exact
arithmetic:
\begin{enumerate}[(i)]
\item Since $\mathbf c=\mathbf Q^{\top}\widetilde{\mathbf v}$ and $\mathbf Q$ is orthogonal, nodal and
spectral gradient descent produce identical mapped iterates and hence identical functions at every step.
\item Since $\mathbf w=\mathbf D_0\widetilde{\mathbf v}$ and
$\mathbf D_0^{\top}\mathbf D_0=h\mathbf K_0$, increment gradient descent maps to
\begin{align}
\widetilde{\mathbf v}_{k+1}
=\widetilde{\mathbf v}_k-\frac{\eta_k}{h}\mathbf K_0^{-1}
\nabla L(\widetilde{\mathbf v}_k).
\label{eq:increment-brownian-gradient}
\end{align}
Thus it is an explicit Euler step generated by the constant Brownian/Sobolev metric with Gram matrix
$\mathbf K_0$, with time step $\eta_k/h$. We use ``Riemannian-gradient step'' only in this discrete,
constant-metric sense; the statement is not equality with the continuous gradient flow.
\end{enumerate}
\end{proposition}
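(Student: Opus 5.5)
The plan is to derive both parts from \Cref{prop:precond}, which I take as given. The only inputs specific to the Brownian model are the two metric facts in \Cref{thm:equivalence-of-implementations}\ref{thm:equivalence-of-implementations-v}: $\mathbf Q$ is orthogonal, and $\mathbf D_0^{\top}\mathbf D_0=h\mathbf K_0$. I would first make precise what ``corresponding initializations'' and ``identical minibatches'' mean. Take $\mathbf z_0=\mathbf A\widetilde{\mathbf v}_0$, and compute each stochastic loss $L^{(B)}$ on a minibatch $B$ in nodal coordinates. Its pulled-back version in the new coordinates is $L^{(B)}_{\mathbf A}(\mathbf z)=L^{(B)}(\mathbf A^{-1}\mathbf z)$. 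The chain rule then gives $\nabla L^{(B)}_{\mathbf A}(\mathbf z)=\mathbf A^{-\top}\nabla L^{(B)}(\mathbf A^{-1}\mathbf z)$, which is exactly the identity \Cref{prop:precond} uses.

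For part (i), I set $\mathbf A=\mathbf Q^{\top}$, so that $\mathbf z=\mathbf c$. Then $\mathbf A^{\top}\mathbf A=\mathbf Q\mathbf Q^{\top}=\mathbf I_G$, and \eqref{eq:reparameterized-gd} reduces to the plain nodal update $\widetilde{\mathbf v}_{k+1}=\widetilde{\mathbf v}_k-\eta_k\nabla L(\widetilde{\mathbf v}_k)$. I would prove coincidence by induction on $k$. The base case is $\mathbf Q\mathbf c_0=\widetilde{\mathbf v}_0$ by the choice of initialization. For the inductive step, if $\mathbf Q\mathbf c_k=\widetilde{\mathbf v}_k^{\mathrm{nodal}}$, then both updates apply the same map to the same point with the same $\eta_k$ and the same minibatch gradient, so $\mathbf Q\mathbf c_{k+1}=\widetilde{\mathbf v}^{\mathrm{nodal}}_{k+1}$. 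Identical functions at every step then follow from \Cref{thm:equivalence-of-implementations}\ref{thm:equivalence-of-implementations-i}: $f_k=\mathcal R(\mathbf R\widetilde{\mathbf v}_k)$ in both cases.

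For part (ii), I set $\mathbf A=\mathbf D_0$, which is invertible by \Cref{thm:equivalence-of-implementations}\ref{thm:equivalence-of-implementations-ii}. Substituting $(\mathbf D_0^{\top}\mathbf D_0)^{-1}=(h\mathbf K_0)^{-1}=h^{-1}\mathbf K_0^{-1}$ into \eqref{eq:reparameterized-gd} gives \eqref{eq:increment-brownian-gradient} directly; here $\mathbf K_0^{-1}$ exists by \Cref{lem:K0-pd}.

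For the interpretation, I would equip $\mathbb R^G$ with the constant metric $\langle\mathbf a,\mathbf b\rangle_{\mathbf K_0}=\mathbf a^{\top}\mathbf K_0\mathbf b$. By \Cref{def:nodal-coordinates} this is exactly the Brownian inner product $\langle\mathcal R(\mathbf R\mathbf a),\mathcal R(\mathbf R\mathbf b)\rangle_{B,h}$. The Riesz representer of $\mathrm dL$ in this metric is $\mathbf g$ with $\mathbf K_0\mathbf g=\nabla L$, so the metric gradient is $\mathbf K_0^{-1}\nabla L$. Equation \eqref{eq:increment-brownian-gradient} is therefore the forward Euler discretization of the flow $\dot{\widetilde{\mathbf v}}=-\mathbf K_0^{-1}\nabla L$ with time step $\eta_k/h$. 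Because the metric is constant, there are no Christoffel or retraction corrections, which justifies the ``constant-metric'' qualifier.

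I do not expect a real obstacle. The only step needing care is stating precisely how the stochastic case transforms: the minibatch gradient must be pulled back by the chain rule rather than resampled or recomputed in a different way, so that the induction goes through verbatim with $L$ replaced by $L^{(B_k)}$. The results hold in exact arithmetic, as the statement assumes.
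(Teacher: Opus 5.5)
Your proposal is correct and follows the paper's own proof: you specialize \Cref{prop:precond} to $\mathbf A=\mathbf Q^{\top}$ (so $\mathbf A^{\top}\mathbf A=\mathbf I_G$) and to $\mathbf A=\mathbf D_0$ (so $\mathbf A^{\top}\mathbf A=h\mathbf K_0$), use mapped initialization and induction to get identical iterates and functions, and read $\mathbf K_0^{-1}\nabla L$ as the gradient for the constant metric with Gram matrix $\mathbf K_0$, which gives an explicit Euler step of size $\eta_k/h$. Pulling each minibatch loss back by the chain rule is exactly the stochastic clause of \Cref{prop:precond}, so nothing is missing.
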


\begin{corollary}[Blockwise extension to recursive architectures]
\label{cor:blockwise-optimizer}
Let an arbitrary differentiable composed objective depend on profile blocks
$\widetilde{\mathbf v}^{(1)},\ldots,\widetilde{\mathbf v}^{(B)}$ and possibly other parameters, and apply
the block transformation
$\mathbf A=\operatorname{blkdiag}(\mathbf A_1,\ldots,\mathbf A_B,\mathbf I)$. Then
\Cref{prop:precond} holds with block preconditioner
$\operatorname{blkdiag}((\mathbf A_1^{\top}\mathbf A_1)^{-1},\ldots,
(\mathbf A_B^{\top}\mathbf A_B)^{-1},\mathbf I)$. Hence simultaneous spectral changes leave the full
GD/SGD trajectory invariant, while simultaneous increment coordinates implement blockwise Brownian
Riemannian-gradient steps with the corresponding per-block factors $1/h_b$. No quadratic or
layer-separability assumption is required.
\end{corollary}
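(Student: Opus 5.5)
The plan is to reduce everything to \Cref{prop:precond}, applied once on the full parameter vector $\boldsymbol\theta=(\widetilde{\mathbf v}^{(1)},\ldots,\widetilde{\mathbf v}^{(B)},\boldsymbol\xi)$. Here $\boldsymbol\xi$ collects the remaining parameters. The proof of \Cref{prop:precond} uses only the chain rule for a linear invertible change of variables and never the structure of $L$. So I would first restate it for an arbitrary differentiable $L:\mathbb R^{N}\to\mathbb R$ and an arbitrary invertible $\mathbf A\in\mathbb R^{N\times N}$.

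The underlying identity is that $\nabla L_{\mathbf A}(\mathbf z)=\mathbf A^{-\top}\nabla L(\mathbf A^{-1}\mathbf z)$. Hence
\begin{align}
\mathbf A^{-1}\mathbf z_{k+1}=\mathbf A^{-1}\mathbf z_k-\eta_k(\mathbf A^{\top}\mathbf A)^{-1}\nabla L(\mathbf A^{-1}\mathbf z_k).
\end{align}
For SGD the same argument applies to each minibatch loss $L^{(\mathcal B_k)}$, since the map $\mathbf A$ does not depend on the batch. This explains why no quadratic or layer-separability hypothesis enters: the composed architecture only has to be differentiable in the stacked parameters.

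Next, take $\mathbf A=\operatorname{blkdiag}(\mathbf A_1,\ldots,\mathbf A_B,\mathbf I)$. It is invertible because each block is invertible by \Cref{thm:equivalence-of-implementations}\ref{thm:equivalence-of-implementations-ii}. Block-diagonal matrices multiply blockwise, so
\begin{align}
\mathbf A^{\top}\mathbf A=\operatorname{blkdiag}(\mathbf A_1^{\top}\mathbf A_1,\ldots,\mathbf A_B^{\top}\mathbf A_B,\mathbf I),
\end{align}
and its inverse is the stated block preconditioner. The mapped update for block $b$ is therefore $\widetilde{\mathbf v}^{(b)}_{k+1}=\widetilde{\mathbf v}^{(b)}_k-\eta_k(\mathbf A_b^{\top}\mathbf A_b)^{-1}\nabla_{\widetilde{\mathbf v}^{(b)}}L$. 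Here the partial gradient is taken in the full composed objective, and the remaining parameters take a plain gradient step.

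I then specialise to the two cases.
\begin{itemize}
\item If every $\mathbf A_b=\mathbf Q_b^{\top}$ is orthogonal, then every block of the preconditioner is the identity. Gradient descent in the mapped coordinates then coincides with ordinary gradient descent, so the whole trajectory is invariant, exactly as in \Cref{prop:gd-equivariance}(i).
\item If every $\mathbf A_b=\mathbf D_{0,b}$, then $\mathbf D_{0,b}^{\top}\mathbf D_{0,b}=h_b\mathbf K_{0,b}$ by \Cref{thm:equivalence-of-implementations}\ref{thm:equivalence-of-implementations-v}. Block $b$ then receives the step $\tfrac{\eta_k}{h_b}\mathbf K_{0,b}^{-1}\nabla_{\widetilde{\mathbf v}^{(b)}}L$, which is the per-block Brownian Riemannian-gradient step of \Cref{prop:gd-equivariance}(ii) with factor $1/h_b$.
\end{itemize}
Mixed choices across blocks follow in the same way.

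There is no deep obstacle. The one point that needs care is the bookkeeping that shows the per-block gradients of the coupled objective transform independently. This is where one must observe that the Jacobian of the reparameterization is block diagonal, even though the loss couples the blocks. Once that is written out, the rest is routine.
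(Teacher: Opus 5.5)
Your proposal is correct and follows essentially the same route as the paper: apply \Cref{prop:precond} once to the stacked parameter vector with the block-diagonal $\mathbf A$, note that $(\mathbf A^{\top}\mathbf A)^{-1}$ is block diagonal, and specialize to orthogonal blocks (identity preconditioner) and increment blocks (preconditioner $(h_b\mathbf K_{0,b})^{-1}$). Your remark that the chain-rule argument never uses the structure of $L$ is exactly the paper's justification that arbitrary cross-layer coupling is allowed.
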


This is the block-diagonal application of \Cref{prop:precond}; cross-block coupling does not invalidate the
identity. Here ``mesh-independent'' means independent of $G$ for fixed $A$, $\rho>0$, sample domain, and
normalization, not uniform in $A$ or $\rho$.

The previous identities describe trajectories. The next result gives the corresponding quantitative
conditioning statement for the regularized regression problem generated by the finite RKHS.

\begin{proposition}[Mesh-independent increment conditioning]
\label{prop:least-squares-conditioning}
For samples $x_1,\ldots,x_n\in I$, targets $y_1,\ldots,y_n\in\mathbb R$, and $\rho>0$, consider
\begin{align}
L(\widetilde{\mathbf v})
=\frac{1}{2n}\sum_{i=1}^{n}
\left(\widetilde{\boldsymbol\phi}(x_i)^{\top}\widetilde{\mathbf v}-y_i\right)^2
+\frac{\rho}{2}\widetilde{\mathbf v}^{\top}\mathbf K_0\widetilde{\mathbf v}.
\label{eq:regularized-least-squares}
\end{align}
Nodal and spectral Hessians are orthogonally similar. In increment coordinates
$\mathbf w=\mathbf D_0\widetilde{\mathbf v}$, the Hessian $\mathbf H_{\mathrm{inc}}$ satisfies
\begin{align}
\frac{\rho}{h}\mathbf I_G
\preceq \mathbf H_{\mathrm{inc}}
\preceq \frac{\rho+A}{h}\mathbf I_G,
\qquad
\kappa_2(\mathbf H_{\mathrm{inc}})\le1+\frac{A}{\rho},
\label{eq:increment-hessian-bound}
\end{align}
independently of $G$. For the pure Brownian quadratic, the increment condition number is exactly $1$,
whereas the nodal and spectral condition numbers equal $\kappa_2(\mathbf K_0)=\Theta(G^2)$.
\end{proposition}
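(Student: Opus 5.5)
The plan is to write every Hessian explicitly and compare them through the coordinate maps of \Cref{thm:equivalence-of-implementations}.

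\textbf{Nodal and spectral Hessians.} Let $\boldsymbol\Phi\in\mathbb R^{n\times G}$ have rows $\widetilde{\boldsymbol\phi}(x_i)^{\top}$. The nodal Hessian of \eqref{eq:regularized-least-squares} is constant:
\begin{align}
\mathbf H_{\mathrm{nod}}=\tfrac1n\boldsymbol\Phi^{\top}\boldsymbol\Phi+\rho\mathbf K_0 .
\end{align}
Under a linear change $\mathbf z=\mathbf A\widetilde{\mathbf v}$, the Hessian of $L_{\mathbf A}$ is $\mathbf A^{-\top}\mathbf H_{\mathrm{nod}}\mathbf A^{-1}$. For the spectral map $\mathbf A=\mathbf Q^{\top}$ this gives $\mathbf Q^{\top}\mathbf H_{\mathrm{nod}}\mathbf Q$, which is orthogonally similar to $\mathbf H_{\mathrm{nod}}$ and so has the same spectrum. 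For the increment map $\mathbf A=\mathbf D_0$, invertible by \Cref{thm:equivalence-of-implementations}(ii), it gives
\begin{align}
\mathbf H_{\mathrm{inc}}=\mathbf D_0^{-\top}\mathbf H_{\mathrm{nod}}\mathbf D_0^{-1}.
\end{align}

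\textbf{Regularizer term.} From $\mathbf D_0^{\top}\mathbf D_0=h\mathbf K_0$ (\Cref{thm:equivalence-of-implementations}(v)) we get
\begin{align}
\rho\,\mathbf D_0^{-\top}\mathbf K_0\mathbf D_0^{-1}=\tfrac{\rho}{h}\,\mathbf D_0^{-\top}\mathbf D_0^{\top}\mathbf D_0\mathbf D_0^{-1}=\tfrac{\rho}{h}\mathbf I_G .
\end{align}
The data term $\tfrac1n\mathbf D_0^{-\top}\boldsymbol\Phi^{\top}\boldsymbol\Phi\mathbf D_0^{-1}$ is positive semidefinite. This already yields the lower bound $\tfrac{\rho}{h}\mathbf I_G\preceq\mathbf H_{\mathrm{inc}}$.

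\textbf{Data term (the main step).} The upper bound needs the data term to be at most $\tfrac{A}{h}\mathbf I_G$ uniformly in $G$. This is the only non-algebraic step.

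For $\mathbf w\in\mathbb R^G$, set $f:=\mathcal R(\mathbf R\mathbf D_0^{-1}\mathbf w)\in\mathcal H_h^0$. Then $\widetilde{\boldsymbol\phi}(x_i)^{\top}\mathbf D_0^{-1}\mathbf w=f(x_i)$, so
\begin{align}
\mathbf w^{\top}\Big(\tfrac1n\mathbf D_0^{-\top}\boldsymbol\Phi^{\top}\boldsymbol\Phi\mathbf D_0^{-1}\Big)\mathbf w=\tfrac1n\sum_i f(x_i)^2 .
\end{align}
I would bound each $f(x_i)^2$ by the reproducing property and Cauchy--Schwarz in $(\mathcal H_h^0,\langle\cdot,\cdot\rangle_{B,h})$ (\Cref{thm:finite-brownian-rkhs}):
\begin{align}
f(x)^2\le k_h(x,x)\,\|f\|_{B,h}^2=k_h(x,x)\,\tfrac1h\|\mathbf w\|_2^2,
\end{align}
where the last equality is \Cref{thm:equivalence-of-implementations}(iii). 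Next, $k_h(x,x)=k_{\mathrm B}(x,x)-p_h(x)^2\le|x|\le A$, by \Cref{cor:approximation} since $p_h^2\ge0$. As a sanity check independent of the kernel theory, the same bound follows directly from $f(0)=0$: $f(x)^2=\big(\int_0^x f'\big)^2\le|x|\int_I f'^2$.

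Averaging over the $n$ samples gives data term $\preceq\tfrac{A}{h}\mathbf I_G$, hence $\mathbf H_{\mathrm{inc}}\preceq\tfrac{\rho+A}{h}\mathbf I_G$. Taking the ratio of the two bounds gives $\kappa_2(\mathbf H_{\mathrm{inc}})\le 1+A/\rho$, with no dependence on $G$ because the factors of $h$ cancel.

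\textbf{Pure Brownian quadratic.} Drop the data term. Then $\mathbf H_{\mathrm{inc}}=\tfrac{\rho}{h}\mathbf I_G$, whose condition number is exactly $1$. The nodal Hessian is $\rho\mathbf K_0$ and the spectral Hessian is $\rho\boldsymbol\Lambda$. Both have condition number $\kappa_2(\mathbf K_0)=\Theta(G^2)$ by \Cref{cor:condition-number}.
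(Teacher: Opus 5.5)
Your proof is correct and is essentially the paper's argument. Both write $\mathbf H_{\mathrm{inc}}=\mathbf D_0^{-\top}\bigl(\tfrac1n\boldsymbol\Phi^{\top}\boldsymbol\Phi+\rho\mathbf K_0\bigr)\mathbf D_0^{-1}$ and reduce the upper bound to $\tfrac1n\sum_i k_h(x_i,x_i)\le\tfrac1n\sum_i|x_i|\le A$, using nonnegativity of the power function. The paper instead uses the polar form $\mathbf D_0=\sqrt h\,\mathbf U\mathbf K_0^{1/2}$ and the bound $\lambda_{\max}\le\operatorname{tr}$ on $\mathbf K_0^{-1/2}(\tfrac1n\boldsymbol\Phi^{\top}\boldsymbol\Phi)\mathbf K_0^{-1/2}$; your direct cancellation of the regularizer and per-sample Cauchy--Schwarz in $\mathcal H_h^0$ give the same constant, and your $f(0)=0$ check even avoids \Cref{cor:approximation}.
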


Standard Adam requires a separate statement because momentum prevents its update from being represented
as a diagonal matrix times only the current gradient.

\begin{proposition}[Exact orthogonal equivariance group of standard Adam]
\label{prop:adam-nonequivariance}
Consider standard Adam with scalar $\beta_1,\beta_2\in[0,1)$, scalar $\varepsilon>0$, zero initial moment
states, bias correction, a positive first step size, and no weight decay, clipping, or other modification.
Among orthogonal coordinate changes, Adam is equivariant for every gradient sequence and corresponding
initialization if and only if the coordinate change is a signed permutation. For $m\ge2$, the DCT-VIII
matrix $\mathbf Q$ is not a signed permutation; therefore there exists even a linear objective for which
corresponding nodal and spectral Adam runs differ at the first step.
\end{proposition}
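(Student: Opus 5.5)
We first fix the convention. For parameters $\mathbf x_k$ and gradients $\mathbf g_k$, standard Adam sets
\[
\mathbf m_k=\beta_1\mathbf m_{k-1}+(1-\beta_1)\mathbf g_k,\qquad
\mathbf s_k=\beta_2\mathbf s_{k-1}+(1-\beta_2)\mathbf g_k^{\odot2},
\]
\[
\mathbf x_{k+1}=\mathbf x_k-\eta_k\,\widehat{\mathbf m}_k\oslash\bigl(\sqrt{\widehat{\mathbf s}_k}+\varepsilon\bigr),
\]
with bias-corrected moments $\widehat{\mathbf m}_k$, $\widehat{\mathbf s}_k$. Let $\mathbf c=\mathbf U^\top\widetilde{\mathbf v}$ for an orthogonal $\mathbf U$. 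Then spectral gradients are $\mathbf U^\top\mathbf g_k$ by the chain rule, as in \Cref{prop:precond}. Equivariance means that the mapped iterates satisfy $\mathbf c_k=\mathbf U^\top\widetilde{\mathbf v}_k$ for all $k$ and for every gradient sequence.

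\emph{Sufficiency.} Suppose $\mathbf U=\mathbf P\mathbf S$ with $\mathbf P$ a permutation and $\mathbf S$ a diagonal sign matrix. Then $\mathbf U^\top$ commutes with the coordinatewise operations in the following sense.
\begin{enumerate}[(a)]
\item $(\mathbf U^\top\mathbf g)^{\odot2}=\mathbf P^\top(\mathbf g^{\odot2})$, since signs square away.
\item For every entrywise function $\varphi$, $\mathbf U^\top\mathbf a\oslash\varphi(\mathbf P^\top\mathbf b)=\mathbf U^\top(\mathbf a\oslash\varphi(\mathbf b))$.
\end{enumerate}
An induction on $k$ then gives three relations: the spectral first moment is $\mathbf U^\top\mathbf m_k$, the spectral second moment is $\mathbf P^\top\mathbf s_k$, and the spectral update is $\mathbf U^\top$ times the nodal update. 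The base case is the zero initial states together with the mapped initialization. Bias correction uses scalars only, so it commutes with both maps.

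\emph{Necessity.} We use only the first step, which suffices because equivariance must hold for all sequences. At $k=1$ the bias-corrected moments are $\widehat{\mathbf m}_1=\mathbf g$ and $\widehat{\mathbf s}_1=\mathbf g^{\odot2}$. The first update is therefore $-\eta_1 F(\mathbf g)$ with
\[
F(\mathbf g)_i=\frac{g_i}{|g_i|+\varepsilon}.
\]
Since $\eta_1>0$, equivariance at step one forces $F(\mathbf U^\top\mathbf g)=\mathbf U^\top F(\mathbf g)$ for all $\mathbf g$. We test this with $\mathbf g=t\mathbf e_j$ for $t>0$. The right side equals $\frac{t}{t+\varepsilon}\mathbf u_j$, where $\mathbf u_j^\top$ is the $j$-th row of $\mathbf U$. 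The left side has entries
\[
\frac{tU_{ji}}{t|U_{ji}|+\varepsilon}.
\]
Equating the $i$-th entries gives, whenever $U_{ji}\neq0$,
\[
t|U_{ji}|+\varepsilon=t+\varepsilon,
\]
so $|U_{ji}|=1$. Each entry of $\mathbf U$ is thus $0$ or $\pm1$. Orthogonality, with unit-norm rows, then forces exactly one nonzero entry per row and per column. Hence $\mathbf U$ is a signed permutation.

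A scaling argument is an alternative route. As $t\to\infty$, $F(t\mathbf g)\to\operatorname{sign}(\mathbf g)$, and the identity $\operatorname{sign}(\mathbf U^\top\mathbf g)=\mathbf U^\top\operatorname{sign}(\mathbf g)$ would need to be exploited. The direct entry comparison above is cleaner, so we use it.

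\emph{DCT-VIII part.} For $m\ge2$ it suffices to find one entry of $\mathbf Q_m$ with absolute value strictly between $0$ and $1$. A convenient choice is
\[
q_1(0)=\frac{2}{\sqrt{2m+1}}\cos\!\Bigl(\frac{\pi}{2(2m+1)}\Bigr).
\]
This is positive because the cosine argument is less than $\pi/2$. It is less than $1$ because $2/\sqrt{2m+1}<1$ once $m\ge2$. Hence $\mathbf Q$ is not a signed permutation.

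For the explicit linear objective we take $L(\widetilde{\mathbf v})=t\,\mathbf e_0^\top\widetilde{\mathbf v}$. Its gradient is the constant vector $t\mathbf e_0$, so the first-step identity above fails for $\mathbf U=\mathbf Q$ at some coordinate $i$ with $0<|Q_{0i}|<1$; the entry $Q_{00}=q_1(0)$ found above is one such coordinate. Consequently the nodal and spectral Adam iterates already differ after the first step.

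The main subtlety is stating equivariance precisely. It must hold for all gradient sequences, and initial moments are zero. Once that is fixed, the first step reduces the question to the elementwise map $F$, and the rest follows.
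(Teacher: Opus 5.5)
Your proof is correct and follows essentially the paper's route: sufficiency by induction through the moment recursions, necessity by reducing the first bias-corrected step to $\mathbf g\mapsto\mathbf g/(|\mathbf g|+\varepsilon\mathbf 1)$ and testing it on scaled canonical gradients, and a constant-gradient linear objective for the DCT-VIII separation. The differences are minor. You put the canonical vector on the nodal side ($\mathbf g=t\mathbf e_j$) and compare entries exactly at a single finite $t$, whereas the paper takes $\mathbf g=t\mathbf U\mathbf e_j$ with $t\to\infty$. You also rule out a signed permutation via the entry $0<q_1(0)<1$, whereas the paper uses the $m$ positive entries of the first DCT-VIII column. Together your choices give a fully explicit counterexample gradient $t\mathbf e_0$ valid for every $t>0$.
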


This maximality statement strengthens basis dependence to an if-and-only-if universal group under the
exact convention; it does not cover AdamW, clipping, nonzero moments, or matrix-valued adaptivity.

Thus GD gives exact nodal--spectral agreement; increment GD gives exact Brownian preconditioning; and Adam
is permitted, but not forced, to separate the orthogonally related realizations. A separation theorem does
not impose a universal ordering, and independently sampled initializations do not test trajectory
equivalence. These predictions require mapped initial states, the same data order and scalar step sequence,
and the same intrinsic Brownian penalty.

\section{Experiments}
\label{sec:experiments}

We separate direct theorem-verification tests from predictive comparisons. The verification suite
comprises E0, the one-layer optimization tests E1A--E1C, and the recursive extension E2. All five tests use frozen
configurations, float64 arithmetic, one common represented initialization, exact coordinate maps, the same
scalar step sequences, and, for SGD, identical minibatches; no identity-test arm is tuned separately.
Secondary EuroSAT and spatially separated Salinas studies compare different model classes and are reported
only in the appendix.

\begin{figure}[t]
\centering
\includegraphics[width=0.323\textwidth]{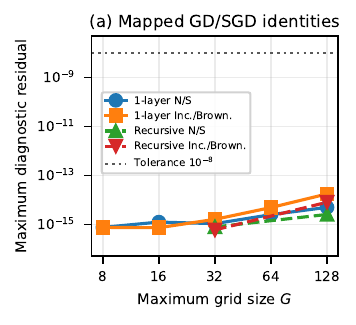}
\hfill
\includegraphics[width=0.323\textwidth]{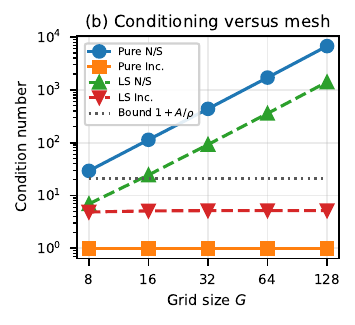}
\hfill
\includegraphics[width=0.323\textwidth]{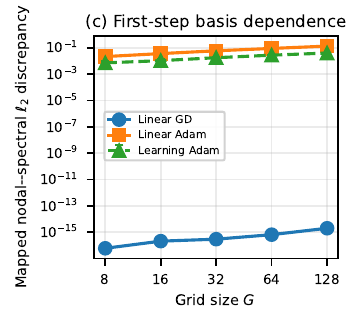}
\caption{Direct numerical verification of the coordinate--optimizer theory. (a) Each marker is the maximum over the recorded parameter, function, objective, gradient, and preconditioner diagnostics and over full-batch GD and shared-minibatch SGD. The recursive markers correspond to grid triples $(8,16,32)$ and $(32,64,128)$, plotted at their maximum $G$; dashed segments guide the eye. All measured residuals remain far below the predeclared $10^{-8}$ tolerance. (b) For the pure Brownian quadratic, nodal/spectral conditioning grows with the mesh whereas increment conditioning is exactly one. Least-squares markers are five-seed means; increment conditioning remains independent of $G$ under the stated fixed quantities and below $1+A/\rho=21$. (c) Ordinary GD remains equivariant, whereas standard Adam has a nonzero mapped nodal--spectral first-step discrepancy. Learning-objective markers are five-seed means and bars span the minimum-to-maximum range. The ordinate is a coordinate-mapping discrepancy, not a predictive loss or error. These results establish permitted basis dependence, not a universal performance ordering.}
\label{fig:theory-validation}
\end{figure}

\paragraph{Finite and mapped identities.}
Across $G\in\{8,16,32,64,128\}$, the matrix, DCT-VIII spectrum, Brownian-Gram inverse,
reconstruction, and intrinsic-energy residuals remain at floating-point precision. Mapped nodal--spectral
full-batch GD and shared-minibatch SGD agree in parameters, functions, objectives, and gradients. Increment
GD agrees with the correctly scaled Brownian-preconditioned nodal update. The same conclusions hold for a
recursively coupled three-profile model with all non-profile parameters trained and shared across arms.

\begin{table}[t]
\centering
\caption{Float64 verification of the exact finite-Brownian and mapped-optimizer identities. The last four rows report maxima over the listed diagnostics; these are identity checks, not predictive-performance comparisons.}
\label{tab:theory-validation-identities}
\small
\setlength{\tabcolsep}{4pt}
\resizebox{\textwidth}{!}{%
\begin{tabular}{llcl}
\toprule
Check & Scope & Maximum residual & Type \\
\midrule
$\mathbf D_0^\top\mathbf D_0=h\mathbf K_0$ & 5 grids & $0$ & absolute \\
$\mathbf K_0=\mathbf Q\boldsymbol\Lambda\mathbf Q^\top$ & 5 grids & $9.99\times 10^{-15}$ & relative \\
Closed-form spectrum & 5 grids & $1.41\times 10^{-13}$ & relative \\
$\mathbf K_0^{-1}$ versus Brownian Gram & 5 grids & $0$ & absolute \\
Coordinate reconstruction & 125 random trials & $3.12\times 10^{-14}$ & absolute \\
Intrinsic Brownian energy & 125 random trials & $3.99\times 10^{-15}$ & relative \\
One-layer Nodal--Spectral GD/SGD & 50 mapped runs & $5.02\times 10^{-15}$ & diagnostic envelope \\
One-layer Increment--Brownian GD/SGD & 50 mapped runs & $1.77\times 10^{-14}$ & diagnostic envelope \\
Recursive Nodal--Spectral GD/SGD & 20 mapped runs & $2.57\times 10^{-15}$ & diagnostic envelope \\
Recursive Increment--Brownian GD/SGD & 20 mapped runs & $8.10\times 10^{-15}$ & diagnostic envelope \\
\bottomrule
\end{tabular}%
}
\end{table}

\paragraph{Conditioning and standard Adam.}
For the pure Brownian quadratic, nodal/spectral conditioning grows from $29.3$ at $G=8$ to $6740.7$ at
$G=128$, whereas increment conditioning is exactly one. For Brownian-regularized least squares with
$\rho=0.1$, the largest measured increment condition number is $5.64$, below the stated
$G$-independent bound $1+A/\rho=21$, while nodal/spectral conditioning and fixed-step iteration counts
increase with $G$.
A functional standard-Adam implementation matches \texttt{torch.optim.Adam}, but mapped nodal and
spectral first steps differ on both the deterministic linear counterexample and a nontrivial regularized
learning objective. This demonstrates permitted basis dependence, not a universal performance ordering.

\begin{table}[t]
\centering
\caption{Conditioning, fixed-step convergence, and standard-Adam basis dependence across meshes. ``N/S'' denotes identical nodal/spectral quantities and ``Inc.'' the increment realization. Iteration counts use the analytic optimal scalar step and relative objective-gap tolerance $10^{-8}$; least squares uses $\rho=0.1$, with increment bound $1+A/\rho=21$. The last columns report mapped nodal--spectral first-step discrepancies.}
\label{tab:theory-validation-conditioning-adam}
\scriptsize
\setlength{\tabcolsep}{3pt}
\resizebox{\textwidth}{!}{%
\begin{tabular}{ccccccc}
\toprule
$G$ & $\kappa(\mathbf K_0)$ & Pure iters N/S\,/\,Inc. & LS $\kappa$ N/S\,/\,Inc. & LS iters N/S\,/\,Inc. & Adam $\Delta_1$ linear & Adam $\Delta_1$ learning \\
\midrule
$8$ & $29.3$ & $123.2\,/\,1.0$ & $6.94\,/\,4.86$ & $31.2\,/\,21.6$ & $0.02$ & $7.09\times 10^{-3}$ \\
$16$ & $113.5$ & $467.0\,/\,1.0$ & $24.45\,/\,5.11$ & $109.6\,/\,22.6$ & $0.04$ & $0.01$ \\
$32$ & $437.7$ & $1795.2\,/\,1.0$ & $92.15\,/\,5.16$ & $413.0\,/\,23.4$ & $0.06$ & $0.02$ \\
$64$ & $1.7\times 10^{3}$ & $7004.2\,/\,1.0$ & $357.64\,/\,5.17$ & $1601.8\,/\,23.4$ & $0.09$ & $0.03$ \\
$128$ & $6.7\times 10^{3}$ & $27629.0\,/\,1.0$ & $1408.71\,/\,5.17$ & $6310.2\,/\,23.4$ & $0.13$ & $0.04$ \\
\bottomrule
\end{tabular}%
}
\end{table}

\paragraph{External validity.}
Appendix~\ref{app:experiments} reports independently audited EuroSAT and spatially separated Salinas
BKL-versus-MLP robustness studies. They provide secondary predictive evidence, not tests of coordinate
equivariance.

\section{Conclusion}
\label{sec:conclusion}

Equivalent coordinates of one finite function space need not define equivalent optimizers. Classical
finite-element, interpolation, covariance, and DCT identities provide our fixed Brownian comparison model.
Within it, nodal and spectral GD/SGD coincide; increment descent is a $1/h$-scaled constant-metric
Brownian/Sobolev Euler step, also blockwise; the least-squares condition number is at most $1+A/\rho$
independently of $G$ under the stated fixed quantities; and signed permutations are the maximal universal
orthogonal equivariances of standard Adam. Float64 tests numerically verify these predictions.

\paragraph{Limitations.}
Closed forms assume a one-dimensional uniform grid and fixed anchor. The least-squares bound is not uniform
in $A$ or $\rho$ or a nonlinear convergence rate; the Adam theorem covers only its stated update and gives
no coordinate ranking. EuroSAT and Salinas compare model classes rather than coordinates, and eigenvalue
multiplicity makes the displayed DCT-VIII basis noncanonical within each two-dimensional eigenspace.

\FloatBarrier
\label{page:main-end}
\clearpage
\bibliographystyle{plainnat}
\bibliography{references}

\subsection*{AI use statement}
Generative AI tools were used to assist with the writing and revision of
mathematical proofs, feedback on experimental design, editing of research code,
language editing, LaTeX formatting, and literature search. All AI-assisted
material incorporated into the paper was reviewed and checked for correctness.
The authors take responsibility for the final content of this work, including
its mathematical claims, proofs, code, experiments, citations, and other
artifacts.

\clearpage
\appendix

\section{Additional Notation}\label{app:additional-notation}

\begin{table}[H]
\centering
\caption{Summary of the main theoretical results.}
\label{tab:main-results}
\begin{tabular}{ll}
\toprule
Result & Content \\
\midrule
\Cref{prop:finite-brownian-energy} & Classical stiffness identity in the Brownian normalization \\
\Cref{thm:finite-brownian-rkhs} & Brownian-specific kernel and covariance--precision formula \\
\Cref{cor:conforming-subspace,cor:approximation} & Classical interpolation consequences with exact constants \\
\Cref{thm:equivalence-of-implementations} & Exact equivalence of the three realizations \\
\Cref{thm:complete-dct-spectrum,cor:condition-number} & Closed-form block DCT-VIII specialization \\
\Cref{prop:precond,prop:gd-equivariance} & Standard reparameterization and Brownian metric consequence \\
\Cref{cor:blockwise-optimizer} & Exact blockwise recursive extension \\
\Cref{prop:least-squares-conditioning} & $G$-independent conditioning for fixed $A$ and $\rho$ \\
\Cref{prop:adam-nonequivariance} & Maximal universal orthogonal group of standard Adam \\
\bottomrule
\end{tabular}
\end{table}

{The reduced anchored ordering, the anchoring reconstruction matrix $\mathbf R$, the
anchored stiffness matrix $\mathbf K_0=\mathbf R^{\top}\mathbf K\mathbf R$, and $\mathbf D_0=\mathbf D\mathbf R$
are now defined in the main text (\Cref{def:reduced-anchored-coordinates}), together with the standing
assumption $G=2m$, $i_0=m$ (\Cref{ass:even-grid}); they are not repeated here. For reference, the
reduced ordering is}
\begin{align}
\widetilde{\mathbf v}
:=
\left(v_0,\ldots,v_{m-1},v_{2m},v_{2m-1},\ldots,v_{m+1}\right)^{\top}\in\mathbb R^{2m},
\qquad
\mathbf R\widetilde{\mathbf v}=\mathbf v,
\label{eq:reduced-anchored-ordering}
\end{align}
{so that both half-grids run from their exterior endpoint toward the anchor, and}
\begin{align}
\mathbf R
:=
\left(\mathbf e_0,\ldots,\mathbf e_{m-1},\mathbf e_{2m},\mathbf e_{2m-1},\ldots,\mathbf e_{m+1}\right)
\in\mathbb R^{(2m+1)\times 2m}.
\label{eq:anchoring-reconstruction-matrix}
\end{align}
{That the Brownian energy has a one-dimensional constant nullspace, so that anchoring
yields $\mathbf K_0\succ0$, is established by \Cref{lem:kernel-stiffness,lem:K0-pd}; it is proved rather than assumed.}

The first-difference operator
$\mathbf{D}:\mathbb{R}^{G+1}\rightarrow\mathbb{R}^{G}$
is defined by
$\left(\mathbf{D}\mathbf{v}\right)_i=v_{i+1}-v_i$
for
$i=0,\ldots,G-1$, and the finite-element stiffness matrix
$\mathbf K=(K_{rs})_{r,s=0}^{G}$ entrywise by
$K_{rs}=\int_{-A}^{A}\phi_r'(t)\phi_s'(t)\,\mathrm dt$.

\begin{figure}[!t]
\centering
\input{generated/dependency_graph_tikz.tex}
\caption{
Proof-dependency structure of the theoretical results. The displayed artwork is the supplied dependency graph without redrawing or re-typesetting; panel headings and result nodes are clickable and jump to the corresponding section or statement.}
\label{fig:dependency_graph}
\end{figure}

\section{Proofs}
\label{app:foundational-proofs}
This section proves the foundational finite Brownian and coordinate
results. We first establish the finite Brownian energy and stiffness
representation in
\Cref{prop:finite-brownian-energy}
and its implemented-regularizer consequence in
\Cref{cor:implemented-brownian-regularizer}.
We then prove the
$G$-dimensional Hilbert-space and reproducing-kernel structure asserted in
\Cref{thm:finite-brownian-rkhs};
the explicit kernel formula and Brownian Gram identification in
items~\ref{thm:finite-brownian-rkhs-i}--\ref{thm:finite-brownian-rkhs-ii}
are isolated in
\Cref{app:new-proofs}.
Next, we prove the exact equivalence of the nodal, increment, and spectral
realizations in
\Cref{thm:equivalence-of-implementations},
including the coordinate maps, Brownian-energy identities, and induced
metric relations.
Finally, we derive the complete DCT-VIII eigendecomposition in
\Cref{thm:complete-dct-spectrum}.
The auxiliary stiffness, nullspace, block-decomposition, and
coordinate-energy results used throughout are collected in
\Cref{app:auxi_lems}.
\subsection{Proof of \texorpdfstring{\Cref{prop:finite-brownian-energy}}{Proposition 1}}
\label{app:proof-finite-brownian-energy}

\begin{noheadproof}

Since
$A>0$
and the grid is strictly increasing, its mesh size satisfies
$h=\frac{2A}{G}>0$.
Because
$f=\mathcal R(\mathbf v)$,
the definition of the reconstruction operator gives, for every
$t\in I$,
\begin{align}
f(t)
&\stackrel{(a)}{=}
\mathcal R(\mathbf v)(t)
\stackrel{(b)}{=}
\sum_{i=0}^{G}
v_i\phi_i(t).
\label{eq:proposition-energy-reconstruction}
\end{align}
Here (a) uses the assumed identity
$f=\mathcal R(\mathbf v)$,
and (b) is the definition of
$\mathcal R$.
Each basis function
$\phi_i$
is continuous and piecewise affine on the finite grid, and is therefore
absolutely continuous on
$I$
and differentiable away from the grid points.
Consequently,
$f$
is absolutely continuous and, for almost every
$t\in I$,
\begin{align}
f'(t)
&\stackrel{(a)}{=}
\left(
\sum_{i=0}^{G}
v_i\phi_i(t)
\right)'
\stackrel{(b)}{=}
\sum_{i=0}^{G}
v_i\phi_i'(t).
\label{eq:proposition-energy-derivative}
\end{align}
Here (a) differentiates
\eqref{eq:proposition-energy-reconstruction}
at a point at which all basis derivatives exist, and (b) uses the linearity
of differentiation and the fact that each coefficient
$v_i$
is independent of
$t$.
The possible failure of differentiability at the finitely many grid points
does not affect any of the integrals below.

We first identify the stiffness-matrix representation of the Brownian energy.
By Definition~\ref{def:discrete-brownian-inner-product},
\begin{align}
\left\|f\right\|_{B,h}^{2}
&\stackrel{(a)}{=}
\left\langle f,f\right\rangle_{B,h}
\stackrel{(b)}{=}
\int_{-A}^{A}
f'(t)f'(t)
\,\mathrm dt
\nonumber\\
&\stackrel{(c)}{=}
\int_{-A}^{A}
\left(
\sum_{r=0}^{G}
v_r\phi_r'(t)
\right)
\left(
\sum_{s=0}^{G}
v_s\phi_s'(t)
\right)
\,\mathrm dt
\nonumber\\
&\stackrel{(d)}{=}
\int_{-A}^{A}
\sum_{r=0}^{G}
\sum_{s=0}^{G}
v_rv_s
\phi_r'(t)\phi_s'(t)
\,\mathrm dt
\nonumber\\
&\stackrel{(e)}{=}
\sum_{r=0}^{G}
\sum_{s=0}^{G}
v_rv_s
\int_{-A}^{A}
\phi_r'(t)\phi_s'(t)
\,\mathrm dt
\nonumber\\
&\stackrel{(f)}{=}
\sum_{r=0}^{G}
\sum_{s=0}^{G}
v_r
\left\langle
\phi_r,
\phi_s
\right\rangle_{B,h}
v_s
\stackrel{(g)}{=}
\sum_{r=0}^{G}
\sum_{s=0}^{G}
v_rK_{rs}v_s
\nonumber\\
&\stackrel{(h)}{=}
\mathbf v^{\top}
\mathbf K
\mathbf v.
\label{eq:proposition-energy-stiffness}
\end{align}
Here (a) is the definition of the Brownian seminorm associated with the
symmetric bilinear energy form, (b) is the definition of that energy form,
(c) substitutes
\eqref{eq:proposition-energy-derivative},
(d) expands the product of the two finite sums, (e) interchanges the integral
with finite sums by linearity, (f) applies the definition of
$\langle\phi_r,\phi_s\rangle_{B,h}$,
(g) uses the definition
$K_{rs}=\langle\phi_r,\phi_s\rangle_{B,h}$,
and (h) is the componentwise expansion of the quadratic form
$\mathbf v^{\top}\mathbf K\mathbf v$.

By Lemma~\ref{lem:stiffness-matrix},
\begin{align}
\mathbf K
\stackrel{(a)}{=}
\frac{1}{h}
\mathbf D^{\top}
\mathbf D.
\label{eq:proposition-energy-factorization}
\end{align}
Here (a) invokes the stiffness-matrix factorization proved in
Lemma~\ref{lem:stiffness-matrix}.
In particular, the dimensions
$\mathbf D\in\mathbb R^{G\times(G+1)}$
and
$\mathbf v\in\mathbb R^{G+1}$
ensure that
$\mathbf D\mathbf v\in\mathbb R^{G}$
and that every product below is well defined.
Using
\eqref{eq:proposition-energy-factorization},
\begin{align}
\mathbf v^{\top}
\mathbf K
\mathbf v
&\stackrel{(a)}{=}
\mathbf v^{\top}
\left(
\frac{1}{h}
\mathbf D^{\top}
\mathbf D
\right)
\mathbf v
\stackrel{(b)}{=}
\frac{1}{h}
\mathbf v^{\top}
\mathbf D^{\top}
\mathbf D
\mathbf v
\nonumber\\
&\stackrel{(c)}{=}
\frac{1}{h}
\left(
\mathbf D\mathbf v
\right)^{\top}
\left(
\mathbf D\mathbf v
\right)
\stackrel{(d)}{=}
\frac{1}{h}
\left\|
\mathbf D\mathbf v
\right\|_2^{2}.
\label{eq:proposition-energy-difference-norm}
\end{align}
Here (a) substitutes
\eqref{eq:proposition-energy-factorization},
(b) moves the scalar
$\frac{1}{h}$
outside the quadratic form, (c) uses the transpose identity
$(\mathbf D\mathbf v)^{\top}=\mathbf v^{\top}\mathbf D^{\top}$
and associativity of matrix multiplication, and (d) applies
$\mathbf w^{\top}\mathbf w=\|\mathbf w\|_2^2$
with
$\mathbf w=\mathbf D\mathbf v$.

By the definition of the Euclidean norm and the componentwise definition of
$\mathbf D$,
\begin{align}
\left\|
\mathbf D\mathbf v
\right\|_2^{2}
&\stackrel{(a)}{=}
\sum_{i=0}^{G-1}
\left(
(\mathbf D\mathbf v)_i
\right)^2
\stackrel{(b)}{=}
\sum_{i=0}^{G-1}
\left(
v_{i+1}-v_i
\right)^2.
\label{eq:proposition-energy-difference-sum}
\end{align}
Here (a) expands the squared Euclidean norm of the vector
$\mathbf D\mathbf v\in\mathbb R^G$,
and (b) uses
$(\mathbf D\mathbf v)_i=v_{i+1}-v_i$
for every
$i=0,\ldots,G-1$.
Combining
\eqref{eq:proposition-energy-stiffness},
\eqref{eq:proposition-energy-difference-norm},
and
\eqref{eq:proposition-energy-difference-sum}
yields
\begin{align}
\left\|f\right\|_{B,h}^{2}
&\stackrel{(a)}{=}
\mathbf v^{\top}
\mathbf K
\mathbf v
\stackrel{(b)}{=}
\frac{1}{h}
\left\|
\mathbf D\mathbf v
\right\|_2^{2}
\stackrel{(c)}{=}
\frac{1}{h}
\sum_{i=0}^{G-1}
\left(
v_{i+1}-v_i
\right)^2.
\label{eq:proposition-energy-final-chain}
\end{align}
Here (a) is
\eqref{eq:proposition-energy-stiffness},
(b) is
\eqref{eq:proposition-energy-difference-norm},
and (c) substitutes
\eqref{eq:proposition-energy-difference-sum}.
Finally,
\eqref{eq:proposition-energy-factorization}
is exactly the equivalent matrix identity asserted in the proposition.
Therefore all claimed representations follow.

\end{noheadproof}\hfill\ensuremath{\square}

\subsection{Proof of \texorpdfstring{\Cref{cor:implemented-brownian-regularizer}}{Corollary 1}}
\label{app:proof-implemented-brownian-regularizer}

\begin{noheadproof}

Fix an arbitrary vector
$\mathbf v=(v_0,\ldots,v_G)^{\top}\in\mathbb R^{G+1}$
and define
\begin{align}
f
:=
\mathcal R\left(\mathbf v\right).
\label{eq:cor-regularizer-function}
\end{align}
By Definition~\ref{def:finite-brownian-profile-space},
$\mathcal R$ maps $\mathbb R^{G+1}$ into $\mathcal H_h$.
Consequently,
\begin{align}
f
&\stackrel{(a)}{=}
\mathcal R\left(\mathbf v\right)
\stackrel{(b)}{\in}
\mathcal H_h.
\label{eq:cor-regularizer-membership}
\end{align}
Here (a) is the definition \eqref{eq:cor-regularizer-function}, and (b) follows
from
$\mathcal R(\mathbb R^{G+1})=\mathcal H_h$
and
$\mathbf v\in\mathbb R^{G+1}$.
Therefore Proposition~\ref{prop:finite-brownian-energy} applies to $f$ and its
nodal vector $\mathbf v$.

By the definition of the implemented finite Brownian regularizer,
\begin{align}
\Omega_h\left(\mathbf v\right)
:=
\sum_{i=0}^{G-1}
\left(v_{i+1}-v_i\right)^2.
\label{eq:cor-regularizer-definition}
\end{align}
On the other hand, Proposition~\ref{prop:finite-brownian-energy}, applied using
\eqref{eq:cor-regularizer-membership}, gives
\begin{align}
\left\|
\mathcal R\left(\mathbf v\right)
\right\|_{B,h}^{2}
\stackrel{(a)}{=}
\frac{1}{h}
\sum_{i=0}^{G-1}
\left(v_{i+1}-v_i\right)^2.
\label{eq:cor-regularizer-energy}
\end{align}
Here (a) is the final identity in
Proposition~\ref{prop:finite-brownian-energy}, applied to the function
$f=\mathcal R(\mathbf v)$ established in
\eqref{eq:cor-regularizer-membership}.

{
Since $h=2A/G>0$, multiplying \eqref{eq:cor-regularizer-energy} by $h$ gives
\begin{align}
\Omega_h\left(\mathbf v\right)
=
h\left\|\mathcal R\left(\mathbf v\right)\right\|_{B,h}^{2}.
\label{eq:cor-regularizer-conclusion}
\end{align}
As $\mathbf v\in\mathbb R^{G+1}$ was arbitrary and $h$ depends only on the fixed grid, the implemented
regularizer is the discrete Brownian energy scaled by the fixed positive factor $h$.
}

\end{noheadproof}\hfill\ensuremath{\square}

\subsection{Proof of
\texorpdfstring{\Cref{thm:finite-brownian-rkhs}}{Theorem 1}:
RKHS existence and dimension}
\label{app:proof-finite-brownian-rkhs}

\begin{noheadproof}

We first identify the linear structure and dimension of the anchored space.
Set
\begin{align}
\mathcal I_0
:=
\left\{
0,\ldots,G
\right\}
\setminus
\left\{
i_0
\right\}.
\label{eq:finite-rkhs-index-set}
\end{align}
Since
$t_{i_0}=0$
and the nodal basis satisfies
$\phi_j(t_r)=\delta_{jr}$,
we have, for every
$j=0,\ldots,G$,
\begin{align}
\phi_j(0)
&\stackrel{(a)}{=}
\phi_j\left(t_{i_0}\right)
\stackrel{(b)}{=}
\delta_{j i_0}.
\label{eq:finite-rkhs-anchor-basis-values}
\end{align}
Here (a) uses
$t_{i_0}=0$,
and (b) is the nodal interpolation property of the finite-element basis.
Let
$f\in\mathcal H_h$
be arbitrary, and let
$\mathbf v=(v_0,\ldots,v_G)^{\top}$
be its unique nodal vector, so that
$f=\mathcal R(\mathbf v)$.
Evaluating this representation at the anchor and using
\eqref{eq:finite-rkhs-anchor-basis-values} gives
\begin{align}
f(0)
&\stackrel{(a)}{=}
\mathcal R(\mathbf v)(0)
\stackrel{(b)}{=}
\sum_{j=0}^{G}
v_j\phi_j(0)
\stackrel{(c)}{=}
\sum_{j=0}^{G}
v_j\delta_{j i_0}
\stackrel{(d)}{=}
v_{i_0}.
\label{eq:finite-rkhs-anchor-coordinate}
\end{align}
Here (a) uses
$f=\mathcal R(\mathbf v)$;
(b) is the definition of the reconstruction operator;
(c) substitutes
\eqref{eq:finite-rkhs-anchor-basis-values};
and (d) uses the defining property of the Kronecker delta, which removes every
term except the term with index
$j=i_0$.
Consequently,
\begin{align}
f\in\mathcal H_h^0
&\stackrel{(a)}{\Longleftrightarrow}
f(0)=0
\stackrel{(b)}{\Longleftrightarrow}
v_{i_0}=0
\stackrel{(c)}{\Longleftrightarrow}
f
=
\sum_{j\in\mathcal I_0}
v_j\phi_j.
\label{eq:finite-rkhs-basis-characterization}
\end{align}
Here (a) is the definition of
$\mathcal H_h^0$;
(b) uses
\eqref{eq:finite-rkhs-anchor-coordinate};
and (c) uses the unique nodal representation and the fact that the coefficient
of
$\phi_{i_0}$
is zero exactly when
$v_{i_0}=0$.
It follows that
\begin{align}
\mathcal H_h^0
\stackrel{(a)}{=}
\operatorname{span}
\left\{
\phi_j
:
j\in\mathcal I_0
\right\}.
\label{eq:finite-rkhs-anchor-span}
\end{align}
Here (a) is precisely the set equality expressed by
\eqref{eq:finite-rkhs-basis-characterization}.
In particular,
$\mathcal H_h^0$
is a vector subspace of
$\mathcal H_h$.

We next verify that the spanning family in
\eqref{eq:finite-rkhs-anchor-span}
is linearly independent.
Suppose that real coefficients
$(a_j)_{j\in\mathcal I_0}$
satisfy
\begin{align}
\sum_{j\in\mathcal I_0}
a_j\phi_j
=
0.
\label{eq:finite-rkhs-linear-combination-zero}
\end{align}
Fix an arbitrary
$r\in\mathcal I_0$
and evaluate
\eqref{eq:finite-rkhs-linear-combination-zero}
at the grid point
$t_r$.
Then
\begin{align}
0
&\stackrel{(a)}{=}
\sum_{j\in\mathcal I_0}
a_j\phi_j(t_r)
\stackrel{(b)}{=}
\sum_{j\in\mathcal I_0}
a_j\delta_{jr}
\stackrel{(c)}{=}
a_r.
\label{eq:finite-rkhs-basis-independence}
\end{align}
Here (a) evaluates the zero-function identity
\eqref{eq:finite-rkhs-linear-combination-zero}
at
$t_r$;
(b) uses
$\phi_j(t_r)=\delta_{jr}$;
and (c) again uses the defining property of the Kronecker delta.
Since
$r\in\mathcal I_0$
was arbitrary, every coefficient
$a_r$
vanishes.
Therefore
$\{\phi_j:j\in\mathcal I_0\}$
is a basis of
$\mathcal H_h^0$.
Because
$\mathcal I_0$
is obtained by removing one index from the
$G+1$
indices
$0,\ldots,G$,
\begin{align}
\dim\left(\mathcal H_h^0\right)
&\stackrel{(a)}{=}
\left|\mathcal I_0\right|
\stackrel{(b)}{=}
(G+1)-1
\stackrel{(c)}{=}
G.
\label{eq:finite-rkhs-dimension}
\end{align}
Here (a) uses the basis just established;
(b) uses the definition
\eqref{eq:finite-rkhs-index-set};
and (c) simplifies the integer expression.
Thus the anchored space is finite-dimensional.

We now prove that the Brownian energy form is an inner product on this space.
Every function in
$\mathcal H_h^0\subset\mathcal H_h$
is continuous and piecewise affine on a finite partition of
$I$.
Hence it is absolutely continuous on
$I$,
its derivative exists almost everywhere, and its derivative is piecewise
constant and therefore belongs to
$L^2(I)$.
For arbitrary
$f,g\in\mathcal H_h^0$,
the Cauchy--Schwarz inequality gives
\begin{align}
\int_{-A}^{A}
\left|
f'(t)g'(t)
\right|
\,\mathrm dt
&\stackrel{(a)}{\leq}
\left(
\int_{-A}^{A}
\left|f'(t)\right|^2
\,\mathrm dt
\right)^{1/2}
\left(
\int_{-A}^{A}
\left|g'(t)\right|^2
\,\mathrm dt
\right)^{1/2}
\stackrel{(b)}{<}
\infty.
\label{eq:finite-rkhs-product-integrability}
\end{align}
Here (a) is the Cauchy--Schwarz inequality on
$L^2(I)$,
and (b) follows from
$f',g'\in L^2(I)$.
Thus the integral defining
$\langle f,g\rangle_{B,h}$
is finite and the form is well defined.

Let
$f_1,f_2,g\in\mathcal H_h^0$
and
$\alpha,\beta\in\mathbb R$.
Because
$\mathcal H_h^0$
is a vector space, the function
$\alpha f_1+\beta f_2$
also belongs to
$\mathcal H_h^0$.
Using linearity of the weak derivative and of the integral,
\begin{align}
\left\langle
\alpha f_1+\beta f_2,
g
\right\rangle_{B,h}
&\stackrel{(a)}{=}
\int_{-A}^{A}
\left(
\alpha f_1+\beta f_2
\right)'(t)
g'(t)
\,\mathrm dt
\nonumber\\
&\stackrel{(b)}{=}
\int_{-A}^{A}
\left(
\alpha f_1'(t)+\beta f_2'(t)
\right)
g'(t)
\,\mathrm dt
\nonumber\\
&\stackrel{(c)}{=}
\alpha
\int_{-A}^{A}
f_1'(t)g'(t)
\,\mathrm dt
+
\beta
\int_{-A}^{A}
f_2'(t)g'(t)
\,\mathrm dt
\nonumber\\
&\stackrel{(d)}{=}
\alpha
\left\langle
f_1,g
\right\rangle_{B,h}
+
\beta
\left\langle
f_2,g
\right\rangle_{B,h}.
\label{eq:finite-rkhs-linearity-first}
\end{align}
Here (a) is the definition of the Brownian energy form;
(b) uses
$(\alpha f_1+\beta f_2)'=\alpha f_1'+\beta f_2'$
almost everywhere;
(c) distributes the product and uses linearity of the integral; and
(d) applies the definition of the Brownian energy form to the two resulting
integrals.
For arbitrary
$f,g\in\mathcal H_h^0$,
commutativity of multiplication in
$\mathbb R$
yields
\begin{align}
\left\langle
f,g
\right\rangle_{B,h}
&\stackrel{(a)}{=}
\int_{-A}^{A}
f'(t)g'(t)
\,\mathrm dt
\stackrel{(b)}{=}
\int_{-A}^{A}
g'(t)f'(t)
\,\mathrm dt
\stackrel{(c)}{=}
\left\langle
g,f
\right\rangle_{B,h}.
\label{eq:finite-rkhs-symmetry}
\end{align}
Here (a) and (c) are the definition of the Brownian energy form, and (b) uses
$f'(t)g'(t)=g'(t)f'(t)$
for almost every
$t\in I$.
Linearity in the second argument now follows explicitly from
\eqref{eq:finite-rkhs-linearity-first}
and
\eqref{eq:finite-rkhs-symmetry}:
for
$f,g_1,g_2\in\mathcal H_h^0$
and
$\alpha,\beta\in\mathbb R$,
\begin{align}
\left\langle
f,
\alpha g_1+\beta g_2
\right\rangle_{B,h}
&\stackrel{(a)}{=}
\left\langle
\alpha g_1+\beta g_2,
f
\right\rangle_{B,h}
\nonumber\\
&\stackrel{(b)}{=}
\alpha
\left\langle
g_1,f
\right\rangle_{B,h}
+
\beta
\left\langle
g_2,f
\right\rangle_{B,h}
\nonumber\\
&\stackrel{(c)}{=}
\alpha
\left\langle
f,g_1
\right\rangle_{B,h}
+
\beta
\left\langle
f,g_2
\right\rangle_{B,h}.
\label{eq:finite-rkhs-linearity-second}
\end{align}
Here (a) uses symmetry;
(b) applies
\eqref{eq:finite-rkhs-linearity-first};
and (c) uses symmetry separately in each term.
Therefore the form is bilinear and symmetric.
Moreover, for every
$f\in\mathcal H_h^0$,
\begin{align}
\left\langle
f,f
\right\rangle_{B,h}
&\stackrel{(a)}{=}
\int_{-A}^{A}
f'(t)f'(t)
\,\mathrm dt
\stackrel{(b)}{=}
\int_{-A}^{A}
\left|f'(t)\right|^2
\,\mathrm dt
\stackrel{(c)}{\geq}
0.
\label{eq:finite-rkhs-nonnegative}
\end{align}
Here (a) is the definition of the energy form;
(b) uses that the functions are real valued; and
(c) follows because
$|f'(t)|^2\geq0$
almost everywhere.

It remains to prove definiteness.
Fix
$f\in\mathcal H_h^0$
and let
$\mathbf v\in\mathbb R^{G+1}$
be its unique nodal vector, so that
\begin{align}
f
&\stackrel{(a)}{=}
\mathcal R(\mathbf v),
&
v_{i_0}
&\stackrel{(b)}{=}
0.
\label{eq:finite-rkhs-anchored-vector}
\end{align}
Here (a) follows from
$f\in\mathcal H_h$ and Definition~\ref{def:finite-brownian-profile-space},
and (b) follows from
$f\in\mathcal H_h^0$
and
\eqref{eq:finite-rkhs-anchor-coordinate}.
Assume that
\begin{align}
\left\langle
f,f
\right\rangle_{B,h}
=
0.
\label{eq:finite-rkhs-zero-assumption}
\end{align}
By the definition of the Brownian seminorm and
Proposition~\ref{prop:finite-brownian-energy},
\begin{align}
0
&\stackrel{(a)}{=}
\left\langle
f,f
\right\rangle_{B,h}
\stackrel{(b)}{=}
\left\|f\right\|_{B,h}^{2}
\stackrel{(c)}{=}
\mathbf v^{\top}
\mathbf K
\mathbf v
\stackrel{(d)}{=}
\frac{1}{h}
\left\|
\mathbf D\mathbf v
\right\|_2^2.
\label{eq:finite-rkhs-zero-energy}
\end{align}
Here (a) is the assumption
\eqref{eq:finite-rkhs-zero-assumption};
(b) is the definition of the Brownian seminorm;
and (c)--(d) are the stiffness and first-difference representations in
Proposition~\ref{prop:finite-brownian-energy}.
The mesh size is strictly positive:
\begin{align}
h
&\stackrel{(a)}{=}
\frac{2A}{G}
\stackrel{(b)}{>}
0.
\label{eq:finite-rkhs-positive-mesh}
\end{align}
Here (a) is the mesh-size definition, and (b) follows because
$A>0$
and the strictly increasing grid contains
$G\geq1$
subintervals.
Multiplying the first and last quantities in
\eqref{eq:finite-rkhs-zero-energy}
by
$h$
and using
\eqref{eq:finite-rkhs-positive-mesh},
we obtain
\begin{align}
\left\|
\mathbf D\mathbf v
\right\|_2^2
&\stackrel{(a)}{=}
h
\left(
\frac{1}{h}
\left\|
\mathbf D\mathbf v
\right\|_2^2
\right)
\stackrel{(b)}{=}
h\cdot0
\stackrel{(c)}{=}
0.
\label{eq:finite-rkhs-zero-difference-norm}
\end{align}
Here (a) uses
$h(1/h)=1$;
(b) uses
\eqref{eq:finite-rkhs-zero-energy};
and (c) uses
$h\cdot0=0$.
For each
$i=0,\ldots,G-1$,
\begin{align}
0
&\stackrel{(a)}{\leq}
\left(
(\mathbf D\mathbf v)_i
\right)^2
\stackrel{(b)}{\leq}
\sum_{r=0}^{G-1}
\left(
(\mathbf D\mathbf v)_r
\right)^2
\stackrel{(c)}{=}
\left\|
\mathbf D\mathbf v
\right\|_2^2
\stackrel{(d)}{=}
0.
\label{eq:finite-rkhs-zero-difference-components}
\end{align}
Here (a) uses nonnegativity of a square;
(b) uses that the sum on the right contains the nonnegative term on the left;
(c) is the componentwise definition of the Euclidean norm; and
(d) uses
\eqref{eq:finite-rkhs-zero-difference-norm}.
Therefore every component
$(\mathbf D\mathbf v)_i$
is zero and hence
\begin{align}
\mathbf D\mathbf v
\stackrel{(a)}{=}
\mathbf0.
\label{eq:finite-rkhs-zero-difference-vector}
\end{align}
Here (a) follows from
\eqref{eq:finite-rkhs-zero-difference-components}
for all component indices.
Using the factorization in
Proposition~\ref{prop:finite-brownian-energy},
\begin{align}
\mathbf K\mathbf v
&\stackrel{(a)}{=}
\left(
\frac{1}{h}
\mathbf D^{\top}
\mathbf D
\right)
\mathbf v
\stackrel{(b)}{=}
\frac{1}{h}
\mathbf D^{\top}
\left(
\mathbf D\mathbf v
\right)
\stackrel{(c)}{=}
\frac{1}{h}
\mathbf D^{\top}
\mathbf0
\stackrel{(d)}{=}
\mathbf0.
\label{eq:finite-rkhs-kernel-membership}
\end{align}
Here (a) uses
$\mathbf K=\frac1h\mathbf D^{\top}\mathbf D$;
(b) uses associativity of matrix multiplication;
(c) substitutes
\eqref{eq:finite-rkhs-zero-difference-vector};
and (d) uses that every matrix maps the zero vector to the zero vector.
Thus
$\mathbf v\in\ker(\mathbf K)$.
Lemma~\ref{lem:kernel-stiffness}
now yields
\begin{align}
\mathbf v
&\stackrel{(a)}{\in}
\ker(\mathbf K)
\stackrel{(b)}{=}
\operatorname{span}(\mathbf1)
\stackrel{(c)}{\Longrightarrow}
\mathbf v
=
c\mathbf1
\quad
\text{for some }
c\in\mathbb R.
\label{eq:finite-rkhs-constant-vector}
\end{align}
Here (a) follows from
\eqref{eq:finite-rkhs-kernel-membership};
(b) is Lemma~\ref{lem:kernel-stiffness};
and (c) is the definition of the one-dimensional span of
$\mathbf1=(1,\ldots,1)^{\top}\in\mathbb R^{G+1}$.
The anchoring constraint in
\eqref{eq:finite-rkhs-anchored-vector}
then gives
\begin{align}
0
&\stackrel{(a)}{=}
v_{i_0}
\stackrel{(b)}{=}
(c\mathbf1)_{i_0}
\stackrel{(c)}{=}
c.
\label{eq:finite-rkhs-anchor-constant}
\end{align}
Here (a) is the anchoring condition;
(b) substitutes
$\mathbf v=c\mathbf1$;
and (c) uses
$\mathbf1_{i_0}=1$.
Therefore
$c=0$
and
$\mathbf v=\mathbf0$.
Substituting this vector into the reconstruction formula gives
\begin{align}
f
&\stackrel{(a)}{=}
\mathcal R(\mathbf v)
\stackrel{(b)}{=}
\mathcal R(\mathbf0)
\stackrel{(c)}{=}
\sum_{j=0}^{G}
0\phi_j
\stackrel{(d)}{=}
0.
\label{eq:finite-rkhs-zero-function}
\end{align}
Here (a) is
\eqref{eq:finite-rkhs-anchored-vector};
(b) uses
$\mathbf v=\mathbf0$;
(c) is the definition of the reconstruction operator; and
(d) simplifies the zero linear combination.
Together with nonnegativity in
\eqref{eq:finite-rkhs-nonnegative},
this proves that the Brownian energy form is positive definite on
$\mathcal H_h^0$.
Hence
$\langle\cdot,\cdot\rangle_{B,h}$
is an inner product on
$\mathcal H_h^0$,
and
$\|\cdot\|_{B,h}$
is its induced norm.

{
Completeness and boundedness of evaluation are automatic in finite dimensions, but the two quantitative
bounds behind them are used later and are recorded here.

Let $f=\mathcal R(\mathbf v)\in\mathcal H_h^0$, so $v_{i_0}=0$. Telescoping from the anchor,
$v_j=-\sum_{r=j}^{i_0-1}(v_{r+1}-v_r)$ for $j<i_0$ and $v_j=\sum_{r=i_0}^{j-1}(v_{r+1}-v_r)$ for
$j>i_0$; in either case Cauchy--Schwarz over at most $G$ terms gives
$v_j^{2}\le G\sum_{i=0}^{G-1}(v_{i+1}-v_i)^{2}=Gh\|f\|_{B,h}^{2}$ by
\Cref{prop:finite-brownian-energy}. Summing over the $G+1$ nodes,
\begin{align}
\|\mathbf v\|_2\le\sqrt{G(G+1)h}\,\|f\|_{B,h},
\qquad\text{and conversely}\qquad
\|f\|_{B,h}\le\frac{2}{\sqrt h}\|\mathbf v\|_2,
\label{eq:norm-equivalence}
\end{align}
the second bound from $(v_{i+1}-v_i)^2\le2(v_{i+1}^2+v_i^2)$. Thus $\|\cdot\|_{B,h}$ and the Euclidean
norm on the coefficient vector are equivalent, so $(\mathcal H_h^0,\langle\cdot,\cdot\rangle_{B,h})$ is a
finite-dimensional inner-product space and hence complete: a $\|\cdot\|_{B,h}$-Cauchy sequence has
Cauchy coefficient vectors by \eqref{eq:norm-equivalence}, and the limit vector reconstructs the limit
function, which lies in $\mathcal H_h^0$ because $v_{i_0}=0$ is preserved.

For $t\in I$ the evaluation functional $\delta_t(f)=f(t)$ is linear, and writing
$f(t)=f(t)-f(0)=\int_{0}^{t}f'$ and applying Cauchy--Schwarz on an interval of length at most $A$,
\begin{align}
|\delta_t(f)|=\left|\int_{0}^{t}f'\right|\le\sqrt{|t|}\left(\int_{I}|f'|^2\right)^{1/2}\le\sqrt A\,\|f\|_{B,h}.
\end{align}
Hence $\|\delta_t\|\le\sqrt A$, uniformly in $h$: the bound does not degrade under mesh refinement, which
is the quantitative form of the isometric embedding of \Cref{cor:conforming-subspace}.
}

Since
$\mathcal H_h^0$
is a real Hilbert space and
$\delta_t$
is a bounded linear functional, the Riesz representation theorem
provides a unique function
$k_t\in\mathcal H_h^0$
such that
\begin{align}
\delta_t(f)
&\stackrel{(a)}{=}
\left\langle
f,k_t
\right\rangle_{B,h}
\qquad
\text{for every }
f\in\mathcal H_h^0.
\label{eq:finite-rkhs-riesz-representer}
\end{align}
Here (a) is the Riesz representation theorem applied to the bounded linear
functional
$\delta_t$.
Define
\begin{align}
k_h^0:I\times I
&\longrightarrow
\mathbb R,
&
k_h^0(s,t)
&:=
k_t(s).
\label{eq:finite-rkhs-kernel-definition}
\end{align}
For every
$f\in\mathcal H_h^0$
and
$t\in I$,
\begin{align}
f(t)
&\stackrel{(a)}{=}
\delta_t(f)
\stackrel{(b)}{=}
\left\langle
f,k_t
\right\rangle_{B,h}
\stackrel{(c)}{=}
\left\langle
f,k_h^0(\cdot,t)
\right\rangle_{B,h}.
\label{eq:finite-rkhs-reproducing-identity}
\end{align}
Here (a) is the definition of point evaluation;
(b) is
\eqref{eq:finite-rkhs-riesz-representer};
and (c) uses
$k_h^0(\cdot,t)=k_t$
from
\eqref{eq:finite-rkhs-kernel-definition}.
Thus
$k_h^0$
reproduces every point evaluation.
For completeness, the kernel is symmetric because, for arbitrary
$s,t\in I$,
\begin{align}
k_h^0(s,t)
&\stackrel{(a)}{=}
k_t(s)
\stackrel{(b)}{=}
\left\langle
k_t,k_s
\right\rangle_{B,h}
\stackrel{(c)}{=}
\left\langle
k_s,k_t
\right\rangle_{B,h}
\stackrel{(d)}{=}
k_s(t)
\stackrel{(e)}{=}
k_h^0(t,s).
\label{eq:finite-rkhs-kernel-symmetry}
\end{align}
Here (a) and (e) use the kernel definition;
(b) applies
\eqref{eq:finite-rkhs-riesz-representer}
with the function
$f=k_t$
and evaluation point
$s$;
(c) uses symmetry of the inner product; and
(d) applies
\eqref{eq:finite-rkhs-riesz-representer}
with the function
$f=k_s$
and evaluation point
$t$.
Moreover, for arbitrary
$N\geq1$,
points
$t_1,\ldots,t_N\in I$,
and coefficients
$a_1,\ldots,a_N\in\mathbb R$,
\begin{align}
\sum_{r=1}^{N}
\sum_{s=1}^{N}
a_ra_s
k_h^0(t_r,t_s)
&\stackrel{(a)}{=}
\sum_{r=1}^{N}
\sum_{s=1}^{N}
a_ra_s
\left\langle
k_{t_s},k_{t_r}
\right\rangle_{B,h}
\stackrel{(b)}{=}
\left\langle
\sum_{s=1}^{N}
a_s k_{t_s},
\sum_{r=1}^{N}
a_r k_{t_r}
\right\rangle_{B,h}
\nonumber\\
&\stackrel{(c)}{=}
\left\|
\sum_{r=1}^{N}
a_r k_{t_r}
\right\|_{B,h}^{2}
\stackrel{(d)}{\geq}
0.
\label{eq:finite-rkhs-kernel-positive-semidefinite}
\end{align}
Here (a) uses
$k_h^0(t_r,t_s)=k_{t_s}(t_r)=\langle k_{t_s},k_{t_r}\rangle_{B,h}$,
which follows from
\eqref{eq:finite-rkhs-kernel-definition}
and
\eqref{eq:finite-rkhs-riesz-representer};
(b) uses bilinearity of the inner product and the finiteness of the two sums;
(c) observes that the two displayed sums represent the same function after
renaming the dummy index; and
(d) uses nonnegativity of the squared norm.
Therefore
$k_h^0$
is a positive-semidefinite reproducing kernel for
$\mathcal H_h^0$.
Combining this reproducing property with
\eqref{eq:finite-rkhs-dimension}
and the completeness proved above shows that
\begin{align}
\left(
\mathcal H_h^0,
\left\langle
\cdot,\cdot
\right\rangle_{B,h}
\right)
\end{align}
is a finite-dimensional reproducing kernel Hilbert space, which completes the
proof. Items
\ref{thm:finite-brownian-rkhs-i}--\ref{thm:finite-brownian-rkhs-ii}
are established in \Cref{thm:items}.

\end{noheadproof}\hfill\ensuremath{\square}

\subsection{Proof of
\texorpdfstring{\Cref{thm:equivalence-of-implementations}}{Theorem 2}}
\label{app:coordinate-equivalence}

{The increment-energy identity used in
part~\ref{thm:equivalence-of-implementations-iii} is \Cref{lem:increment-energy}, and the positivity of
$\boldsymbol\Lambda$ used in the spectral part is \Cref{lem:K0-pd} with \Cref{lem:spectral-energy}. The
derivation below is self-contained, but those lemmas may be substituted for the corresponding steps.}

\begin{noheadproof}

Let
$f\in\mathcal H_h^0$
be arbitrary. By
Definition~\ref{def:anchored-finite-profile-space},
$G$ is even. Write
$G=2m$
and
$i_0=m$.
Since
$f\in\mathcal H_h^0\subseteq\mathcal H_h$,
Definition~\ref{def:finite-brownian-profile-space} yields a unique vector
$\mathbf v=(v_0,\ldots,v_G)^{\top}\in\mathbb R^{G+1}$ such that
\begin{align}
f
&\stackrel{(a)}{=}
\mathcal R(\mathbf v)
\stackrel{(b)}{=}
\sum_{r=0}^{G}v_r\phi_r.
\label{eq:equiv-proof-full-nodal-representation}
\end{align}
Here (a) is the unique finite-element representation of $f$, and (b) is the
definition of the reconstruction operator $\mathcal R$.
For every grid index $j=0,\ldots,G$, the nodal property of the basis gives
\begin{align}
f(t_j)
&\stackrel{(a)}{=}
\sum_{r=0}^{G}v_r\phi_r(t_j)
\stackrel{(b)}{=}
\sum_{r=0}^{G}v_r\delta_{rj}
\stackrel{(c)}{=}
v_j.
\label{eq:equiv-proof-nodal-values}
\end{align}
Here (a) evaluates
\eqref{eq:equiv-proof-full-nodal-representation}
at $t_j$, (b) uses
$\phi_r(t_j)=\delta_{rj}$, and (c) uses the defining property of the
Kronecker delta. Since
$t_{i_0}=0$
and
$f(0)=0$, it follows that
\begin{align}
v_{i_0}
&\stackrel{(a)}{=}
f(t_{i_0})
\stackrel{(b)}{=}
f(0)
\stackrel{(c)}{=}
0.
\label{eq:equiv-proof-anchor-component}
\end{align}
Here (a) applies
\eqref{eq:equiv-proof-nodal-values}, (b) uses
$t_{i_0}=0$, and (c) uses
$f\in\mathcal H_h^0$.

Let
$\widetilde{\mathbf v}\in\mathbb R^G$
be the reduced anchored vector ordered as in
\eqref{eq:reduced-anchored-ordering}. By the definition of the anchoring
reconstruction matrix in
\eqref{eq:anchoring-reconstruction-matrix},
\begin{align}
\mathbf v
&\stackrel{(a)}{=}
\mathbf R\widetilde{\mathbf v}.
\label{eq:equiv-proof-anchor-reconstruction}
\end{align}
Equality (a) inserts the zero component at index $i_0$ and places every
remaining reduced coordinate at its corresponding nodal index. Define
\begin{align}
\mathbf D_0
&:=
\mathbf D\mathbf R
\in
\mathbb R^{G\times G}.
\label{eq:equiv-proof-D0-definition}
\end{align}
Then the increment coordinates satisfy
\begin{align}
\mathbf{w}
&\stackrel{(a)}{=}
\mathbf D\mathbf v
\stackrel{(b)}{=}
\mathbf D\mathbf R\widetilde{\mathbf v}
\stackrel{(c)}{=}
\mathbf D_0\widetilde{\mathbf v}.
\label{eq:equiv-proof-increment-definition}
\end{align}
Here (a) is Definition~\ref{def:increment-coordinates}, (b) substitutes
\eqref{eq:equiv-proof-anchor-reconstruction}, and (c) uses
\eqref{eq:equiv-proof-D0-definition}. The spectral coordinates satisfy
\begin{align}
\mathbf c
&\stackrel{(a)}{=}
\mathbf Q^{\top}\widetilde{\mathbf v}.
\label{eq:equiv-proof-spectral-definition}
\end{align}
Equality (a) is Definition~\ref{def:spectral-coordinates}.

We first establish
\ref{thm:equivalence-of-implementations-ii}, because the inverse coordinate
maps will then make the assertion in
\ref{thm:equivalence-of-implementations-i} explicit.
For arbitrary
$\mathbf x,\mathbf y\in\mathbb R^G$
and
$\alpha,\beta\in\mathbb R$,
\begin{align}
\mathbf D_0
\left(
\alpha\mathbf x+
\beta\mathbf y
\right)
&\stackrel{(a)}{=}
\mathbf D\mathbf R
\left(
\alpha\mathbf x+
\beta\mathbf y
\right)
\nonumber\\
&\stackrel{(b)}{=}
\mathbf D
\left(
\alpha\mathbf R\mathbf x+
\beta\mathbf R\mathbf y
\right)
\nonumber\\
&\stackrel{(c)}{=}
\alpha\mathbf D\mathbf R\mathbf x+
\beta\mathbf D\mathbf R\mathbf y
\stackrel{(d)}{=}
\alpha\mathbf D_0\mathbf x+
\beta\mathbf D_0\mathbf y.
\label{eq:equiv-proof-D0-linearity}
\end{align}
Here (a) substitutes
$\mathbf D_0=\mathbf D\mathbf R$, (b) uses the linearity of multiplication by
$\mathbf R$, (c) uses the linearity of multiplication by $\mathbf D$, and
(d) again uses the definition of $\mathbf D_0$. Thus
$\mathbf x\mapsto\mathbf D_0\mathbf x$
is linear.

We next prove injectivity. Let
$\mathbf w\in\mathbb R^G$
satisfy
\begin{align}
\mathbf D_0\mathbf w
&=
\mathbf0.
\label{eq:equiv-proof-D0-kernel-assumption}
\end{align}
Set
$\mathbf z:=\mathbf R\mathbf w\in\mathbb R^{G+1}$.
Then
\begin{align}
\mathbf D\mathbf z
&\stackrel{(a)}{=}
\mathbf D\mathbf R\mathbf w
\stackrel{(b)}{=}
\mathbf D_0\mathbf w
\stackrel{(c)}{=}
\mathbf0.
\label{eq:equiv-proof-Dz-zero}
\end{align}
Here (a) substitutes
$\mathbf z=\mathbf R\mathbf w$, (b) uses
$\mathbf D_0=\mathbf D\mathbf R$, and (c) applies
\eqref{eq:equiv-proof-D0-kernel-assumption}. By the componentwise definition of
$\mathbf D$, for every
$i=0,\ldots,G-1$,
\begin{align}
0
&\stackrel{(a)}{=}
(\mathbf D\mathbf z)_i
\stackrel{(b)}{=}
z_{i+1}-z_i,
\end{align}
where (a) follows from
\eqref{eq:equiv-proof-Dz-zero}, and (b) is the definition of the first-difference
operator. Therefore
$z_{i+1}=z_i$
for every
$i=0,\ldots,G-1$. Repeated application of these equalities gives
\begin{align}
z_j
&\stackrel{(a)}{=}
z_0,
\qquad
j=0,\ldots,G.
\label{eq:equiv-proof-z-constant}
\end{align}
Equality (a) follows by induction on $j$: it is immediate for $j=0$, and if
$z_j=z_0$, then
$z_{j+1}=z_j=z_0$.

The row of $\mathbf R$ corresponding to the anchor index $i_0$ is zero because
none of the columns of $\mathbf R$ equals the omitted canonical vector
$\mathbf e_{i_0}$. Hence
\begin{align}
z_{i_0}
&\stackrel{(a)}{=}
(\mathbf R\mathbf w)_{i_0}
\stackrel{(b)}{=}
0.
\label{eq:equiv-proof-z-anchor-zero}
\end{align}
Here (a) uses
$\mathbf z=\mathbf R\mathbf w$, and (b) uses the zero anchor row of
$\mathbf R$. Combining
\eqref{eq:equiv-proof-z-constant}
and
\eqref{eq:equiv-proof-z-anchor-zero}
gives, for every $j$,
\begin{align}
z_j
&\stackrel{(a)}{=}
z_{i_0}
\stackrel{(b)}{=}
0,
\end{align}
where (a) uses the constancy of $\mathbf z$, and (b) uses the anchored value.
Thus
$\mathbf z=\mathbf0$.

The columns of $\mathbf R$ are pairwise distinct canonical basis vectors of
$\mathbb R^{G+1}$. Consequently,
\begin{align}
\mathbf R^{\top}\mathbf R
&\stackrel{(a)}{=}
\mathbf I_G.
\label{eq:equiv-proof-RtR}
\end{align}
Indeed, the $(r,s)$ entry of
$\mathbf R^{\top}\mathbf R$
is the Euclidean inner product of columns $r$ and $s$ of $\mathbf R$, which is
$1$ when $r=s$ and $0$ otherwise. It follows that
\begin{align}
\mathbf w
&\stackrel{(a)}{=}
\mathbf I_G\mathbf w
\stackrel{(b)}{=}
\mathbf R^{\top}\mathbf R\mathbf w
\stackrel{(c)}{=}
\mathbf R^{\top}\mathbf z
\stackrel{(d)}{=}
\mathbf0.
\label{eq:equiv-proof-D0-injective}
\end{align}
Here (a) uses the identity matrix, (b) applies
\eqref{eq:equiv-proof-RtR}, (c) uses
$\mathbf z=\mathbf R\mathbf w$, and (d) uses
$\mathbf z=\mathbf0$. Therefore
$\ker(\mathbf D_0)=\{\mathbf0\}$,
so the increment map is injective.

We now prove surjectivity directly. Let
$\mathbf d=(d_0,\ldots,d_{G-1})^{\top}\in\mathbb R^G$
be arbitrary, and define
$\mathbf u(\mathbf d)=(u_0,\ldots,u_G)^{\top}\in\mathbb R^{G+1}$
by
\begin{align}
u_j
:={}
\begin{cases}
-\displaystyle\sum_{r=j}^{i_0-1}d_r,
&
0\leq j\leq i_0,
\\[2mm]
\displaystyle\sum_{r=i_0}^{j-1}d_r,
&
i_0\leq j\leq G.
\end{cases}
\label{eq:equiv-proof-cumulative-reconstruction}
\end{align}
The two formulas agree at $j=i_0$ because both sums are empty, and every empty
sum is understood to equal zero. In particular,
\begin{align}
u_{i_0}
&\stackrel{(a)}{=}
0.
\label{eq:equiv-proof-cumulative-anchor}
\end{align}
Equality (a) follows from the empty-sum convention.
For
$i=0,\ldots,i_0-1$,
\begin{align}
u_{i+1}-u_i
&\stackrel{(a)}{=}
-\sum_{r=i+1}^{i_0-1}d_r
+
\sum_{r=i}^{i_0-1}d_r
\stackrel{(b)}{=}
d_i.
\label{eq:equiv-proof-left-cumulative-difference}
\end{align}
Here (a) applies the first branch of
\eqref{eq:equiv-proof-cumulative-reconstruction}, including its empty-sum value
when $i=i_0-1$, and (b) cancels the common terms
$d_{i+1},\ldots,d_{i_0-1}$, leaving only $d_i$. For
$i=i_0,\ldots,G-1$,
\begin{align}
u_{i+1}-u_i
&\stackrel{(a)}{=}
\sum_{r=i_0}^{i}d_r
-
\sum_{r=i_0}^{i-1}d_r
\stackrel{(b)}{=}
d_i.
\label{eq:equiv-proof-right-cumulative-difference}
\end{align}
Here (a) applies the second branch of
\eqref{eq:equiv-proof-cumulative-reconstruction}, including the empty second
sum when $i=i_0$, and (b) cancels the common terms
$d_{i_0},\ldots,d_{i-1}$, leaving only $d_i$. Hence, for every
$i=0,\ldots,G-1$,
\begin{align}
(\mathbf D\mathbf u(\mathbf d))_i
&\stackrel{(a)}{=}
u_{i+1}-u_i
\stackrel{(b)}{=}
d_i,
\end{align}
where (a) is the definition of $\mathbf D$, while (b) applies
\eqref{eq:equiv-proof-left-cumulative-difference}
when $i<i_0$ and
\eqref{eq:equiv-proof-right-cumulative-difference}
when $i\geq i_0$. Therefore
\begin{align}
\mathbf D\mathbf u(\mathbf d)
&\stackrel{(a)}{=}
\mathbf d.
\label{eq:equiv-proof-cumulative-Du}
\end{align}
Equality (a) follows because the two vectors have equal components at every
index.

Because the columns of $\mathbf R$ contain every canonical basis vector except
$\mathbf e_{i_0}$, one also has
\begin{align}
\mathbf R\mathbf R^{\top}
&\stackrel{(a)}{=}
\sum_{j\neq i_0}
\mathbf e_j\mathbf e_j^{\top}
\stackrel{(b)}{=}
\mathbf I_{G+1}
-
\mathbf e_{i_0}\mathbf e_{i_0}^{\top}.
\label{eq:equiv-proof-RRt}
\end{align}
Here (a) expands the product as the sum of the outer products of the columns of
$\mathbf R$, and (b) removes the single omitted canonical projector from the
canonical decomposition of the identity. Define
\begin{align}
\mathbf x(\mathbf d)
&:=
\mathbf R^{\top}\mathbf u(\mathbf d)
\in
\mathbb R^G.
\label{eq:equiv-proof-explicit-D0-inverse-vector}
\end{align}
Then
\begin{align}
\mathbf R\mathbf x(\mathbf d)
&\stackrel{(a)}{=}
\mathbf R\mathbf R^{\top}\mathbf u(\mathbf d)
\nonumber\\
&\stackrel{(b)}{=}
\left(
\mathbf I_{G+1}
-
\mathbf e_{i_0}\mathbf e_{i_0}^{\top}
\right)
\mathbf u(\mathbf d)
\nonumber\\
&\stackrel{(c)}{=}
\mathbf u(\mathbf d)
-
\mathbf e_{i_0}u_{i_0}
\stackrel{(d)}{=}
\mathbf u(\mathbf d).
\label{eq:equiv-proof-reconstruct-cumulative-vector}
\end{align}
Here (a) substitutes
\eqref{eq:equiv-proof-explicit-D0-inverse-vector}, (b) uses
\eqref{eq:equiv-proof-RRt}, (c) evaluates
$\mathbf e_{i_0}^{\top}\mathbf u(\mathbf d)=u_{i_0}$, and (d) uses
\eqref{eq:equiv-proof-cumulative-anchor}. Consequently,
\begin{align}
\mathbf D_0\mathbf x(\mathbf d)
&\stackrel{(a)}{=}
\mathbf D\mathbf R\mathbf x(\mathbf d)
\stackrel{(b)}{=}
\mathbf D\mathbf u(\mathbf d)
\stackrel{(c)}{=}
\mathbf d.
\label{eq:equiv-proof-D0-surjective}
\end{align}
Here (a) uses
$\mathbf D_0=\mathbf D\mathbf R$, (b) applies
\eqref{eq:equiv-proof-reconstruct-cumulative-vector}, and (c) applies
\eqref{eq:equiv-proof-cumulative-Du}. Since $\mathbf d\in\mathbb R^G$ was
arbitrary, $\mathbf D_0$ is surjective. Together with injectivity, this proves
that $\mathbf D_0$ is invertible. Moreover, the preceding construction gives
the explicit inverse
\begin{align}
\mathbf D_0^{-1}\mathbf d
&\stackrel{(a)}{=}
\mathbf R^{\top}\mathbf u(\mathbf d),
\label{eq:equiv-proof-explicit-D0-inverse}
\end{align}
where (a) follows from uniqueness of the preimage under the injective map
$\mathbf D_0$ and
\eqref{eq:equiv-proof-D0-surjective}.

We next treat the spectral map. For arbitrary
$\mathbf x,\mathbf y\in\mathbb R^G$
and
$\alpha,\beta\in\mathbb R$,
\begin{align}
\mathbf Q^{\top}
\left(
\alpha\mathbf x+
\beta\mathbf y
\right)
&\stackrel{(a)}{=}
\alpha\mathbf Q^{\top}\mathbf x+
\beta\mathbf Q^{\top}\mathbf y.
\label{eq:equiv-proof-spectral-linearity}
\end{align}
Equality (a) is the distributivity and homogeneity of matrix multiplication, so
the spectral map is linear. Since $\mathbf Q$ is orthogonal,
\begin{align}
\mathbf Q^{\top}\mathbf Q
&\stackrel{(a)}{=}
\mathbf I_G,
\qquad
\mathbf Q\mathbf Q^{\top}
\stackrel{(b)}{=}
\mathbf I_G.
\label{eq:equiv-proof-Q-orthogonality}
\end{align}
Equalities (a) and (b) are the two defining inverse identities for an
orthogonal square matrix. Thus, for every
$\mathbf x,\mathbf a\in\mathbb R^G$,
\begin{align}
\mathbf Q
\left(
\mathbf Q^{\top}\mathbf x
\right)
&\stackrel{(a)}{=}
\left(
\mathbf Q\mathbf Q^{\top}
\right)
\mathbf x
\stackrel{(b)}{=}
\mathbf I_G\mathbf x
\stackrel{(c)}{=}
\mathbf x,
\label{eq:equiv-proof-Q-left-inverse}
\\
\mathbf Q^{\top}
\left(
\mathbf Q\mathbf a
\right)
&\stackrel{(a)}{=}
\left(
\mathbf Q^{\top}\mathbf Q
\right)
\mathbf a
\stackrel{(b)}{=}
\mathbf I_G\mathbf a
\stackrel{(c)}{=}
\mathbf a.
\label{eq:equiv-proof-Q-right-inverse}
\end{align}
In each line, (a) uses associativity, (b) applies
\eqref{eq:equiv-proof-Q-orthogonality}, and (c) uses the identity matrix.
Therefore
$\mathbf x\mapsto\mathbf Q^{\top}\mathbf x$
is invertible, with inverse
$\mathbf a\mapsto\mathbf Q\mathbf a$.

The complete coordinate conversion formulas are now
\begin{align}
\mathbf{w}
&\stackrel{(a)}{=}
\mathbf D_0\widetilde{\mathbf v},
&
\mathbf c
&\stackrel{(b)}{=}
\mathbf Q^{\top}\widetilde{\mathbf v},
\nonumber\\
\widetilde{\mathbf v}
&\stackrel{(c)}{=}
\mathbf D_0^{-1}\mathbf{w},
&
\widetilde{\mathbf v}
&\stackrel{(d)}{=}
\mathbf Q\mathbf c,
\nonumber\\
\mathbf c
&\stackrel{(e)}{=}
\mathbf Q^{\top}\mathbf D_0^{-1}\mathbf{w},
&
\mathbf{w}
&\stackrel{(f)}{=}
\mathbf D_0\mathbf Q\mathbf c.
\label{eq:equiv-proof-all-coordinate-conversions}
\end{align}
Here (a) and (b) are the coordinate definitions, (c) uses the inverse of
$\mathbf D_0$, (d) uses the inverse of $\mathbf Q^{\top}$, (e) substitutes
(c) into (b), and (f) substitutes (d) into (a). Hence each coordinate system
uniquely determines the other two, proving
\ref{thm:equivalence-of-implementations-ii}.

We now prove
\ref{thm:equivalence-of-implementations-i}.
Define the three reconstruction maps by
\begin{align}
\Phi_{\mathrm{nod}}(\mathbf x)
&:=
\mathcal R(\mathbf R\mathbf x),
\label{eq:equiv-proof-nodal-reconstruction-map}
\\
\Phi_{\mathrm{inc}}(\mathbf d)
&:=
\mathcal R
\left(
\mathbf R\mathbf D_0^{-1}\mathbf d
\right),
\label{eq:equiv-proof-increment-reconstruction-map}
\\
\Phi_{\mathrm{spec}}(\mathbf a)
&:=
\mathcal R
\left(
\mathbf R\mathbf Q\mathbf a
\right).
\label{eq:equiv-proof-spectral-reconstruction-map}
\end{align}
Each map takes values in $\mathcal H_h^0$. Indeed, for every
$\mathbf x\in\mathbb R^G$,
\begin{align}
\Phi_{\mathrm{nod}}(\mathbf x)(0)
&\stackrel{(a)}{=}
\sum_{j=0}^{G}
(\mathbf R\mathbf x)_j\phi_j(0)
\stackrel{(b)}{=}
\sum_{j=0}^{G}
(\mathbf R\mathbf x)_j\delta_{ji_0}
\nonumber\\
&\stackrel{(c)}{=}
(\mathbf R\mathbf x)_{i_0}
\stackrel{(d)}{=}
0.
\label{eq:equiv-proof-reconstruction-anchor}
\end{align}
Here (a) expands
$\mathcal R(\mathbf R\mathbf x)$, (b) uses
$0=t_{i_0}$ and
$\phi_j(t_{i_0})=\delta_{ji_0}$, (c) applies the Kronecker delta, and (d) uses
the zero anchor row of $\mathbf R$. Thus
$\Phi_{\mathrm{nod}}(\mathbf x)\in\mathcal H_h^0$.
The increment and spectral maps are compositions of
$\Phi_{\mathrm{nod}}$
with vectors in $\mathbb R^G$, so they also take values in
$\mathcal H_h^0$.

For the coordinates associated with the fixed function $f$,
\begin{align}
\Phi_{\mathrm{nod}}
\left(
\widetilde{\mathbf v}
\right)
&\stackrel{(a)}{=}
\mathcal R
\left(
\mathbf R\widetilde{\mathbf v}
\right)
\stackrel{(b)}{=}
\mathcal R(\mathbf v)
\stackrel{(c)}{=}
f.
\label{eq:equiv-proof-nodal-same-function}
\end{align}
Here (a) uses
\eqref{eq:equiv-proof-nodal-reconstruction-map}, (b) uses
\eqref{eq:equiv-proof-anchor-reconstruction}, and (c) uses
\eqref{eq:equiv-proof-full-nodal-representation}. Likewise,
\begin{align}
\Phi_{\mathrm{inc}}
\left(
\mathbf{w}
\right)
&\stackrel{(a)}{=}
\mathcal R
\left(
\mathbf R\mathbf D_0^{-1}
\mathbf{w}
\right)
\stackrel{(b)}{=}
\mathcal R
\left(
\mathbf R\mathbf D_0^{-1}
\mathbf D_0\widetilde{\mathbf v}
\right)
\stackrel{(c)}{=}
\mathcal R
\left(
\mathbf R\mathbf I_G\widetilde{\mathbf v}
\right)
\stackrel{(d)}{=}
\mathcal R
\left(
\mathbf R\widetilde{\mathbf v}
\right)
\stackrel{(e)}{=}
f.
\label{eq:equiv-proof-increment-same-function}
\end{align}
Here (a) uses
\eqref{eq:equiv-proof-increment-reconstruction-map}, (b) substitutes
$\mathbf{w}=\mathbf D_0\widetilde{\mathbf v}$, (c) uses
$\mathbf D_0^{-1}\mathbf D_0=\mathbf I_G$, (d) uses the identity matrix, and
(e) applies
\eqref{eq:equiv-proof-nodal-same-function}. Finally,
\begin{align}
\Phi_{\mathrm{spec}}
\left(
\mathbf c
\right)
&\stackrel{(a)}{=}
\mathcal R
\left(
\mathbf R\mathbf Q\mathbf c
\right)
\stackrel{(b)}{=}
\mathcal R
\left(
\mathbf R\mathbf Q\mathbf Q^{\top}
\widetilde{\mathbf v}
\right)
\stackrel{(c)}{=}
\mathcal R
\left(
\mathbf R\mathbf I_G\widetilde{\mathbf v}
\right)
\stackrel{(d)}{=}
\mathcal R
\left(
\mathbf R\widetilde{\mathbf v}
\right)
\stackrel{(e)}{=}
f.
\label{eq:equiv-proof-spectral-same-function}
\end{align}
Here (a) uses
\eqref{eq:equiv-proof-spectral-reconstruction-map}, (b) substitutes
$\mathbf c=\mathbf Q^{\top}\widetilde{\mathbf v}$, (c) uses
$\mathbf Q\mathbf Q^{\top}=\mathbf I_G$, (d) uses the identity matrix, and
(e) applies
\eqref{eq:equiv-proof-nodal-same-function}. Therefore the three coordinate
representations reconstruct exactly the same function $f$, proving
\ref{thm:equivalence-of-implementations-i}.

We next prove
\ref{thm:equivalence-of-implementations-iii}.
By Proposition~\ref{prop:finite-brownian-energy},
\begin{align}
\left\|f\right\|_{B,h}^{2}
&\stackrel{(a)}{=}
\mathbf v^{\top}\mathbf K\mathbf v
\stackrel{(b)}{=}
\frac{1}{h}
\left(
\mathbf D\mathbf v
\right)^{\top}
\left(
\mathbf D\mathbf v
\right)
\nonumber\\
&\stackrel{(c)}{=}
\frac{1}{h}
\mathbf{w}^{\top}\mathbf{w}
\stackrel{(d)}{=}
\frac{1}{h}
\left\|\mathbf{w}\right\|_2^2.
\label{eq:equiv-proof-nodal-increment-energy}
\end{align}
Here (a) is the stiffness representation of the Brownian energy, (b) uses
$\mathbf K=h^{-1}\mathbf D^{\top}\mathbf D$
and associativity, (c) uses
$\mathbf{w}=\mathbf D\mathbf v$, and (d) is the definition of the
Euclidean norm.

For the spectral representation, first
\begin{align}
\left\|f\right\|_{B,h}^{2}
&\stackrel{(a)}{=}
\left(
\mathbf R\widetilde{\mathbf v}
\right)^{\top}
\mathbf K
\left(
\mathbf R\widetilde{\mathbf v}
\right)
\stackrel{(b)}{=}
\widetilde{\mathbf v}^{\top}
\mathbf R^{\top}\mathbf K\mathbf R
\widetilde{\mathbf v}
\stackrel{(c)}{=}
\widetilde{\mathbf v}^{\top}
\mathbf K_0
\widetilde{\mathbf v}.
\label{eq:equiv-proof-reduced-energy}
\end{align}
Here (a) substitutes
$\mathbf v=\mathbf R\widetilde{\mathbf v}$
into the first equality of
\eqref{eq:equiv-proof-nodal-increment-energy}, (b) uses
$(\mathbf R\widetilde{\mathbf v})^{\top}
=\widetilde{\mathbf v}^{\top}\mathbf R^{\top}$
and associativity, and (c) uses
$\mathbf K_0=\mathbf R^{\top}\mathbf K\mathbf R$.
By Definition~\ref{def:spectral-coordinates},
\begin{align}
\widetilde{\mathbf v}
&\stackrel{(a)}{=}
\mathbf Q\mathbf c,
\qquad
\mathbf K_0
\stackrel{(b)}{=}
\mathbf Q\boldsymbol{\Lambda}\mathbf Q^{\top}.
\label{eq:equiv-proof-spectral-inverse-and-decomposition}
\end{align}
Here (a) is the inverse spectral transformation proved above, and (b) is the
eigendecomposition of the anchored stiffness matrix, whose existence with strictly
positive spectrum is {established in \Cref{lem:K0-pd} and
\Cref{lem:spectral-energy}, rather than assumed in
Definition~\ref{def:spectral-coordinates}}. Consequently,
\begin{align}
\left\|f\right\|_{B,h}^{2}
&\stackrel{(a)}{=}
\left(
\mathbf Q\mathbf c
\right)^{\top}
\left(
\mathbf Q\boldsymbol{\Lambda}\mathbf Q^{\top}
\right)
\left(
\mathbf Q\mathbf c
\right)
\stackrel{(b)}{=}
\mathbf c^{\top}
\mathbf Q^{\top}\mathbf Q
\boldsymbol{\Lambda}
\mathbf Q^{\top}\mathbf Q
\mathbf c
\stackrel{(c)}{=}
\mathbf c^{\top}
\mathbf I_G
\boldsymbol{\Lambda}
\mathbf I_G
\mathbf c
\stackrel{(d)}{=}
\mathbf c^{\top}
\boldsymbol{\Lambda}
\mathbf c.
\label{eq:equiv-proof-spectral-energy}
\end{align}
Here (a) substitutes both identities in
\eqref{eq:equiv-proof-spectral-inverse-and-decomposition}
into
\eqref{eq:equiv-proof-reduced-energy}, (b) uses
$(\mathbf Q\mathbf c)^{\top}=\mathbf c^{\top}\mathbf Q^{\top}$
and associativity, (c) uses
$\mathbf Q^{\top}\mathbf Q=\mathbf I_G$, and (d) uses the identity matrix.
Combining
\eqref{eq:equiv-proof-nodal-increment-energy}
and
\eqref{eq:equiv-proof-spectral-energy}
gives
\begin{align}
\left\|f\right\|_{B,h}^{2}
&\stackrel{(a)}{=}
\mathbf v^{\top}\mathbf K\mathbf v
\stackrel{(b)}{=}
\frac{1}{h}
\left\|\mathbf{w}\right\|_2^2
\stackrel{(c)}{=}
\mathbf c^{\top}\boldsymbol{\Lambda}\mathbf c.
\end{align}
Here (a) and (b) are the nodal and increment equalities in
\eqref{eq:equiv-proof-nodal-increment-energy}, while (c) uses
\eqref{eq:equiv-proof-spectral-energy}. This proves
\ref{thm:equivalence-of-implementations-iii}.

We now establish
\ref{thm:equivalence-of-implementations-iv}.
Define the ranges of the three reconstruction maps by
\begin{align}
\mathcal F_{\mathrm{nod}}
&:=
\Phi_{\mathrm{nod}}(\mathbb R^G),
&
\mathcal F_{\mathrm{inc}}
&:=
\Phi_{\mathrm{inc}}(\mathbb R^G),
&
\mathcal F_{\mathrm{spec}}
&:=
\Phi_{\mathrm{spec}}(\mathbb R^G).
\label{eq:equiv-proof-three-ranges}
\end{align}
Equation
\eqref{eq:equiv-proof-reconstruction-anchor}
shows that
$\mathcal F_{\mathrm{nod}}\subseteq\mathcal H_h^0$.
Conversely, let
$g\in\mathcal H_h^0$
be arbitrary and let
$\mathbf a\in\mathbb R^{G+1}$
be its unique nodal vector. As in
\eqref{eq:equiv-proof-anchor-component},
$a_{i_0}=0$. Set
$\mathbf x:=\mathbf R^{\top}\mathbf a\in\mathbb R^G$.
Then
\begin{align}
\mathbf R\mathbf x
&\stackrel{(a)}{=}
\mathbf R\mathbf R^{\top}\mathbf a
\stackrel{(b)}{=}
\left(
\mathbf I_{G+1}
-
\mathbf e_{i_0}\mathbf e_{i_0}^{\top}
\right)
\mathbf a
\nonumber\\
&\stackrel{(c)}{=}
\mathbf a
-
\mathbf e_{i_0}a_{i_0}
\stackrel{(d)}{=}
\mathbf a.
\label{eq:equiv-proof-any-anchored-vector-reconstructed}
\end{align}
Here (a) uses
$\mathbf x=\mathbf R^{\top}\mathbf a$, (b) applies
\eqref{eq:equiv-proof-RRt}, (c) evaluates the rank-one projector, and (d) uses
$a_{i_0}=0$. Therefore
\begin{align}
g
&\stackrel{(a)}{=}
\mathcal R(\mathbf a)
\stackrel{(b)}{=}
\mathcal R(\mathbf R\mathbf x)
\stackrel{(c)}{=}
\Phi_{\mathrm{nod}}(\mathbf x),
\end{align}
where (a) is the nodal representation of $g$, (b) applies
\eqref{eq:equiv-proof-any-anchored-vector-reconstructed}, and (c) uses
\eqref{eq:equiv-proof-nodal-reconstruction-map}. Hence
$g\in\mathcal F_{\mathrm{nod}}$, so
$\mathcal H_h^0\subseteq\mathcal F_{\mathrm{nod}}$. We have proved
\begin{align}
\mathcal F_{\mathrm{nod}}
&\stackrel{(a)}{=}
\mathcal H_h^0.
\label{eq:equiv-proof-nodal-range}
\end{align}
Equality (a) combines the two set inclusions.

Using the invertibility of $\mathbf D_0$,
\begin{align}
\mathcal F_{\mathrm{inc}}
&\stackrel{(a)}{=}
\left\{
\Phi_{\mathrm{nod}}
\left(
\mathbf D_0^{-1}\mathbf d
\right)
:
\mathbf d\in\mathbb R^G
\right\}
\stackrel{(b)}{=}
\left\{
\Phi_{\mathrm{nod}}(\mathbf x)
:
\mathbf x\in\mathbb R^G
\right\}
\stackrel{(c)}{=}
\mathcal F_{\mathrm{nod}}.
\label{eq:equiv-proof-increment-range}
\end{align}
Here (a) uses
\eqref{eq:equiv-proof-increment-reconstruction-map}, (b) uses the bijective
change of variable
$\mathbf x=\mathbf D_0^{-1}\mathbf d$, and (c) uses the definition of
$\mathcal F_{\mathrm{nod}}$. Similarly, using the orthogonality and hence
surjectivity of $\mathbf Q$,
\begin{align}
\mathcal F_{\mathrm{spec}}
&\stackrel{(a)}{=}
\left\{
\Phi_{\mathrm{nod}}
\left(
\mathbf Q\mathbf a
\right)
:
\mathbf a\in\mathbb R^G
\right\}
\stackrel{(b)}{=}
\left\{
\Phi_{\mathrm{nod}}(\mathbf x)
:
\mathbf x\in\mathbb R^G
\right\}
\stackrel{(c)}{=}
\mathcal F_{\mathrm{nod}}.
\label{eq:equiv-proof-spectral-range}
\end{align}
Here (a) uses
\eqref{eq:equiv-proof-spectral-reconstruction-map}, (b) uses the bijective
change of variable
$\mathbf x=\mathbf Q\mathbf a$, and (c) again uses the definition of
$\mathcal F_{\mathrm{nod}}$. Combining
\eqref{eq:equiv-proof-nodal-range}--\eqref{eq:equiv-proof-spectral-range}
gives
\begin{align}
\mathcal F_{\mathrm{nod}}
&\stackrel{(a)}{=}
\mathcal F_{\mathrm{inc}}
\stackrel{(b)}{=}
\mathcal F_{\mathrm{spec}}
\stackrel{(c)}{=}
\mathcal H_h^0.
\label{eq:equiv-proof-common-function-space}
\end{align}
Here (a) is
\eqref{eq:equiv-proof-increment-range}, (b) is
\eqref{eq:equiv-proof-spectral-range}, and (c) is
\eqref{eq:equiv-proof-nodal-range}.

For completeness, we also verify that the coordinate realizations pull back the
same Brownian inner product, not merely the same norm. Let
$\mathbf x,\mathbf y\in\mathbb R^G$, define
\begin{align}
g
&:=
\Phi_{\mathrm{nod}}(\mathbf x),
&
h
&:=
\Phi_{\mathrm{nod}}(\mathbf y),
\nonumber\\
\boldsymbol{\alpha}
&:=
\mathbf D_0\mathbf x,
&
\boldsymbol{\beta}
&:=
\mathbf D_0\mathbf y,
\nonumber\\
\mathbf p
&:=
\mathbf Q^{\top}\mathbf x,
&
\mathbf q
&:=
\mathbf Q^{\top}\mathbf y.
\label{eq:equiv-proof-two-function-coordinates}
\end{align}
The bilinear stiffness identity gives
\begin{align}
\left\langle g,h\right\rangle_{B,h}
&\stackrel{(a)}{=}
\left(
\mathbf R\mathbf x
\right)^{\top}
\mathbf K
\left(
\mathbf R\mathbf y
\right)
\stackrel{(b)}{=}
\mathbf x^{\top}
\mathbf K_0
\mathbf y.
\label{eq:equiv-proof-bilinear-nodal}
\end{align}
Here (a) follows by expanding both reconstructed functions in the hat basis and
using
$K_{rs}=\langle\phi_r,\phi_s\rangle_{B,h}$, while (b) uses
$\mathbf K_0=\mathbf R^{\top}\mathbf K\mathbf R$.
Moreover,
\begin{align}
\mathbf x^{\top}\mathbf K_0\mathbf y
&\stackrel{(a)}{=}
\mathbf x^{\top}
\mathbf R^{\top}\mathbf K\mathbf R
\mathbf y
\stackrel{(b)}{=}
\frac{1}{h}
\mathbf x^{\top}
\mathbf R^{\top}\mathbf D^{\top}
\mathbf D\mathbf R
\mathbf y
\stackrel{(c)}{=}
\frac{1}{h}
\left(
\mathbf D_0\mathbf x
\right)^{\top}
\left(
\mathbf D_0\mathbf y
\right)
\stackrel{(d)}{=}
\frac{1}{h}
\boldsymbol{\alpha}^{\top}\boldsymbol{\beta}.
\label{eq:equiv-proof-bilinear-increment}
\end{align}
Here (a) uses the definition of $\mathbf K_0$, (b) substitutes
$\mathbf K=h^{-1}\mathbf D^{\top}\mathbf D$, (c) uses
$\mathbf D_0=\mathbf D\mathbf R$
and the transpose identity
$(\mathbf D_0\mathbf x)^{\top}=\mathbf x^{\top}\mathbf D_0^{\top}$, and (d)
uses the definitions of
$\boldsymbol{\alpha}$
and
$\boldsymbol{\beta}$. Since
$\mathbf x=\mathbf Q\mathbf p$
and
$\mathbf y=\mathbf Q\mathbf q$,
\begin{align}
\mathbf x^{\top}\mathbf K_0\mathbf y
&\stackrel{(a)}{=}
\left(
\mathbf Q\mathbf p
\right)^{\top}
\left(
\mathbf Q\boldsymbol{\Lambda}\mathbf Q^{\top}
\right)
\left(
\mathbf Q\mathbf q
\right)
\nonumber\\
&\stackrel{(b)}{=}
\mathbf p^{\top}
\mathbf Q^{\top}\mathbf Q
\boldsymbol{\Lambda}
\mathbf Q^{\top}\mathbf Q
\mathbf q
\stackrel{(c)}{=}
\mathbf p^{\top}\boldsymbol{\Lambda}\mathbf q.
\label{eq:equiv-proof-bilinear-spectral}
\end{align}
Here (a) substitutes the inverse spectral transformations and the
eigendecomposition of $\mathbf K_0$, (b) uses the transpose identity and
associativity, and (c) applies
$\mathbf Q^{\top}\mathbf Q=\mathbf I_G$.
Thus
\begin{align}
\left\langle g,h\right\rangle_{B,h}
&\stackrel{(a)}{=}
\mathbf x^{\top}\mathbf K_0\mathbf y
\stackrel{(b)}{=}
\frac{1}{h}
\boldsymbol{\alpha}^{\top}\boldsymbol{\beta}
\stackrel{(c)}{=}
\mathbf p^{\top}\boldsymbol{\Lambda}\mathbf q.
\label{eq:equiv-proof-common-inner-product}
\end{align}
Here (a), (b), and (c) apply
\eqref{eq:equiv-proof-bilinear-nodal},
\eqref{eq:equiv-proof-bilinear-increment}, and
\eqref{eq:equiv-proof-bilinear-spectral}, respectively. Therefore, when the nodal, increment, and spectral parameter spaces are
respectively equipped with the pulled-back inner products displayed in
\eqref{eq:equiv-proof-common-inner-product}, their reconstruction maps are
linear isometric isomorphisms onto the common space
$\mathcal H_h^0$ equipped with its Brownian inner product. By
Theorem~\ref{thm:finite-brownian-rkhs}, this common inner-product space is a
finite-dimensional RKHS. Hence all three realizations generate the same finite
Brownian RKHS and the same hypothesis class, proving
\ref{thm:equivalence-of-implementations-iv}.

It remains to establish
\ref{thm:equivalence-of-implementations-v}.
Let
$\widetilde{\mathbf v}_1,\widetilde{\mathbf v}_2\in\mathbb R^G$
be arbitrary, and define
\begin{align}
\mathbf w
&:=
\widetilde{\mathbf v}_1-
\widetilde{\mathbf v}_2,
\nonumber\\
\mathbf c_r
&:=
\mathbf Q^{\top}\widetilde{\mathbf v}_r,
\qquad
\mathbf{w}_r
:=
\mathbf D_0\widetilde{\mathbf v}_r,
\qquad
r\in\{1,2\}.
\label{eq:equiv-proof-pairwise-coordinate-differences}
\end{align}
For the spectral coordinates,
\begin{align}
\left\|
\mathbf c_1-
\mathbf c_2
\right\|_2^2
&\stackrel{(a)}{=}
\left\|
\mathbf Q^{\top}
\left(
\widetilde{\mathbf v}_1-
\widetilde{\mathbf v}_2
\right)
\right\|_2^2
\stackrel{(b)}{=}
\left(
\mathbf Q^{\top}\mathbf w
\right)^{\top}
\left(
\mathbf Q^{\top}\mathbf w
\right)
\nonumber\\
&\stackrel{(c)}{=}
\mathbf w^{\top}
\mathbf Q\mathbf Q^{\top}
\mathbf w
\stackrel{(d)}{=}
\mathbf w^{\top}\mathbf I_G\mathbf w
\stackrel{(e)}{=}
\left\|\mathbf w\right\|_2^2.
\label{eq:equiv-proof-spectral-isometry}
\end{align}
Here (a) uses linearity of $\mathbf Q^{\top}$ and the definitions of
$\mathbf c_1$ and $\mathbf c_2$, (b) is the definition of the squared Euclidean
norm, (c) uses
$(\mathbf Q^{\top}\mathbf w)^{\top}=\mathbf w^{\top}\mathbf Q$, (d) uses
$\mathbf Q\mathbf Q^{\top}=\mathbf I_G$, and (e) again uses the definition of
the Euclidean norm. Thus the nodal-to-spectral map is a Euclidean isometry.

For the increment map,
\begin{align}
\mathbf D_0^{\top}\mathbf D_0
&\stackrel{(a)}{=}
\left(
\mathbf D\mathbf R
\right)^{\top}
\left(
\mathbf D\mathbf R
\right)
\stackrel{(b)}{=}
\mathbf R^{\top}
\mathbf D^{\top}\mathbf D
\mathbf R
\stackrel{(c)}{=}
\mathbf R^{\top}
\left(
h\mathbf K
\right)
\mathbf R
\stackrel{(d)}{=}
h
\mathbf R^{\top}\mathbf K\mathbf R
\stackrel{(e)}{=}
h\mathbf K_0.
\label{eq:equiv-proof-D0-Gram}
\end{align}
Here (a) substitutes
$\mathbf D_0=\mathbf D\mathbf R$, (b) uses
$(\mathbf D\mathbf R)^{\top}=\mathbf R^{\top}\mathbf D^{\top}$
and associativity, (c) rearranges the stiffness identity
$\mathbf K=h^{-1}\mathbf D^{\top}\mathbf D$
as
$\mathbf D^{\top}\mathbf D=h\mathbf K$, (d) extracts the scalar $h$, and
(e) uses
$\mathbf K_0=\mathbf R^{\top}\mathbf K\mathbf R$. Consequently,
\begin{align}
\left\|
\mathbf{w}_1-
\mathbf{w}_2
\right\|_2^2
&\stackrel{(a)}{=}
\left\|
\mathbf D_0
\left(
\widetilde{\mathbf v}_1-
\widetilde{\mathbf v}_2
\right)
\right\|_2^2
\stackrel{(b)}{=}
\left(
\mathbf D_0\mathbf w
\right)^{\top}
\left(
\mathbf D_0\mathbf w
\right)
\stackrel{(c)}{=}
\mathbf w^{\top}
\mathbf D_0^{\top}\mathbf D_0
\mathbf w
\stackrel{(d)}{=}
h
\mathbf w^{\top}\mathbf K_0\mathbf w.
\label{eq:equiv-proof-increment-metric}
\end{align}
Here (a) uses linearity of $\mathbf D_0$ and the definitions of
$\mathbf{w}_1$ and $\mathbf{w}_2$, (b) is the definition of
the squared Euclidean norm, (c) uses the transpose identity and associativity,
and (d) applies
\eqref{eq:equiv-proof-D0-Gram}.

Finally, Lemma~\ref{lem:block-decomposition} gives
\begin{align}
h\mathbf K_0
&\stackrel{(a)}{=}
\begin{pmatrix}
\mathbf T_m & \mathbf0
\\
\mathbf0 & \mathbf T_m
\end{pmatrix}.
\label{eq:equiv-proof-block-Gram}
\end{align}
Equality (a) is exactly the block decomposition after multiplication by $h$.
If $m\geq2$, then the entrywise definition of $\mathbf T_m$ gives
\begin{align}
(\mathbf T_m)_{0,1}
&\stackrel{(a)}{=}
-1
\stackrel{(b)}{\neq}
0
\stackrel{(c)}{=}
(\mathbf I_m)_{0,1}.
\end{align}
Here (a) uses
$|0-1|=1$, (b) is the elementary inequality
$-1\neq0$, and (c) uses the zero off-diagonal entries of the identity matrix.
Therefore
$\mathbf T_m\neq\mathbf I_m$
and, by
\eqref{eq:equiv-proof-block-Gram},
\begin{align}
\mathbf D_0^{\top}\mathbf D_0
&\stackrel{(a)}{\neq}
\mathbf I_G,
\qquad
m\geq2.
\end{align}
Inequality (a) follows from
\eqref{eq:equiv-proof-D0-Gram},
\eqref{eq:equiv-proof-block-Gram}, and
$\mathbf T_m\neq\mathbf I_m$. Hence the increment map is nonorthogonal for
$G=2m\geq4$. In contrast,
\eqref{eq:equiv-proof-spectral-isometry}
shows that the spectral map is an orthogonal change of coordinates of the reduced nodal
coordinates. This proves
\ref{thm:equivalence-of-implementations-v}
and completes the proof.

\end{noheadproof}\hfill\ensuremath{\square}

\subsection{Proof of
\texorpdfstring{\Cref{thm:complete-dct-spectrum}}{Theorem 3}}
\label{app:proof-complete-dct-spectrum}

\begin{noheadproof}

By Lemma~\ref{lem:block-decomposition},
\begin{align}
\mathbf K_0
&\stackrel{(a)}{=}
\frac{1}{h}
\begin{pmatrix}
\mathbf T_m & \mathbf0
\\
\mathbf0 & \mathbf T_m
\end{pmatrix},
\label{eq:dct8-proof-K0-block}
\end{align}
where
$\mathbf T_m=(\tau_{rs})_{r,s=0}^{m-1}$
is defined by
\begin{align}
\tau_{rs}
:=
\begin{cases}
1,
&
 r=s=0,
\\
2,
&
 1\leq r=s\leq m-1,
\\
-1,
&
 \left|r-s\right|=1,
\\
0,
&
 \text{otherwise}.
\end{cases}
\label{eq:dct8-proof-Tm-entrywise}
\end{align}
Equality (a) is exactly the block decomposition proved in
Lemma~\ref{lem:block-decomposition}. In particular,
$\mathbf T_1=(1)$.
It is therefore sufficient first to construct an orthonormal
eigendecomposition of $\mathbf T_m$ and then to apply it independently to the
two diagonal blocks in \eqref{eq:dct8-proof-K0-block}.

For every $k\in[m]$, set
\begin{align}
\alpha_m
&:=
\frac{2}{\sqrt{2m+1}},
\qquad
\theta_k
:=
\frac{\left(2k-1\right)\pi}{2m+1},
\qquad
\nu_k
:=
2-2\cos\left(\theta_k\right),
\label{eq:dct8-proof-basic-definitions}
\end{align}
and define the extended cosine sequence
\begin{align}
\widehat q_k(j)
&:=
\alpha_m
\cos\left(
\left(j+\frac12\right)\theta_k
\right),
\qquad
j\in\left\{-1,0,\ldots,m\right\}.
\label{eq:dct8-proof-extended-sequence}
\end{align}
By the definition of $\mathbf q_k$ in the theorem statement,
\begin{align}
q_k(j)
&\stackrel{(a)}{=}
\widehat q_k(j),
\qquad
j=0,\ldots,m-1.
\label{eq:dct8-proof-q-restriction}
\end{align}
Equality (a) follows because
\eqref{eq:dct8-proof-extended-sequence} and the theorem statement use the same
normalization factor and the same cosine formula at these indices.
To identify the transform type explicitly, set $r:=k-1\in\{0,\ldots,m-1\}$.
Then
\begin{align}
\left(j+\frac12\right)\theta_k
&\stackrel{(a)}{=}
\frac{2\pi
\left(j+\frac12\right)
\left(r+\frac12\right)}{2m+1},
\qquad
j,r\in\{0,\ldots,m-1\}.
\label{eq:dct8-proof-transform-kernel}
\end{align}
Equality (a) uses $r=k-1$, and hence
$2k-1=2\left(r+\frac12\right)$. Therefore the entries of $\mathbf Q_m$ are
the type-VIII discrete-cosine kernel with the normalization factor
$\frac{2}{\sqrt{2m+1}}$; the orthonormality of this normalization is proved
below rather than assumed.

We first verify the two boundary relations encoded by the extended sequence.
At the outer boundary,
\begin{align}
\widehat q_k(-1)
&\stackrel{(a)}{=}
\alpha_m
\cos\left(
-\frac{\theta_k}{2}
\right)
\stackrel{(b)}{=}
\alpha_m
\cos\left(
\frac{\theta_k}{2}
\right)
\stackrel{(c)}{=}
\widehat q_k(0).
\label{eq:dct8-proof-left-ghost}
\end{align}
Here (a) substitutes $j=-1$ into
\eqref{eq:dct8-proof-extended-sequence}, (b) uses the evenness of the cosine,
and (c) substitutes $j=0$ into the same definition.
At the anchored boundary,
\begin{align}
\left(m+\frac12\right)\theta_k
&\stackrel{(a)}{=}
\frac{2m+1}{2}
\frac{\left(2k-1\right)\pi}{2m+1}
\stackrel{(b)}{=}
\frac{\left(2k-1\right)\pi}{2},
\label{eq:dct8-proof-anchor-angle}
\end{align}
where (a) uses
$m+\frac12=\frac{2m+1}{2}$
and the definition of $\theta_k$, while (b) cancels the nonzero factor
$2m+1$. Consequently,
\begin{align}
\widehat q_k(m)
&\stackrel{(a)}{=}
\alpha_m
\cos\left(
\frac{\left(2k-1\right)\pi}{2}
\right)
\stackrel{(b)}{=}
0.
\label{eq:dct8-proof-right-ghost}
\end{align}
Here (a) combines
\eqref{eq:dct8-proof-extended-sequence} and
\eqref{eq:dct8-proof-anchor-angle}, and (b) uses that the cosine of every odd
multiple of $\frac{\pi}{2}$ is zero.

We now prove the eigenvalue relation
$\mathbf T_m\mathbf q_k=\nu_k\mathbf q_k$.
The entrywise definition
\eqref{eq:dct8-proof-Tm-entrywise}, together with
\eqref{eq:dct8-proof-left-ghost} and
\eqref{eq:dct8-proof-right-ghost}, gives, for every
$j\in\left\{0,\ldots,m-1\right\}$,
\begin{align}
\left(\mathbf T_m\mathbf q_k\right)_j
&\stackrel{(a)}{=}
-\widehat q_k(j-1)
+2\widehat q_k(j)
-\widehat q_k(j+1).
\label{eq:dct8-proof-unified-difference}
\end{align}
To justify (a) without suppressing the endpoint cases, observe the following.
If $m=1$, then $j=0$, $\mathbf T_1=(1)$,
$\widehat q_k(-1)=q_k(0)$, and $\widehat q_k(1)=0$, so the right-hand side of
\eqref{eq:dct8-proof-unified-difference} equals $q_k(0)$.
If $m\geq2$ and $j=0$, the first row of $\mathbf T_m$ gives
$q_k(0)-q_k(1)$, which equals the displayed second difference because
$\widehat q_k(-1)=q_k(0)$.
If $m\geq3$ and $1\leq j\leq m-2$, the equality is the ordinary tridiagonal
row formula.
Finally, if $m\geq2$ and $j=m-1$, the last row gives
$-q_k(m-2)+2q_k(m-1)$, which equals the displayed second difference because
$\widehat q_k(m)=0$.
These cases exhaust all admissible pairs $(m,j)$.

For fixed $k\in[m]$ and
$j\in\left\{0,\ldots,m-1\right\}$, define
\begin{align}
a_{j,k}
&:=
\left(j+\frac12\right)\theta_k.
\label{eq:dct8-proof-a-jk}
\end{align}
Then
\begin{align}
\widehat q_k(j-1)
&\stackrel{(a)}{=}
\alpha_m\cos\left(a_{j,k}-\theta_k\right),
\qquad
\widehat q_k(j)
\stackrel{(b)}{=}
\alpha_m\cos\left(a_{j,k}\right),
\nonumber\\
\widehat q_k(j+1)
&\stackrel{(c)}{=}
\alpha_m\cos\left(a_{j,k}+\theta_k\right).
\label{eq:dct8-proof-neighbor-values}
\end{align}
Equalities (a)--(c) follow by substituting $j-1$, $j$, and $j+1$ into
\eqref{eq:dct8-proof-extended-sequence} and using
\eqref{eq:dct8-proof-a-jk}.
Therefore,
\begin{align}
\left(\mathbf T_m\mathbf q_k\right)_j
&\stackrel{(a)}{=}
\alpha_m
\left[
-\cos\left(a_{j,k}-\theta_k\right)
+2\cos\left(a_{j,k}\right)
-\cos\left(a_{j,k}+\theta_k\right)
\right]
\nonumber\\
&\stackrel{(b)}{=}
\alpha_m
\left[
2\cos\left(a_{j,k}\right)
-2\cos\left(a_{j,k}\right)\cos\left(\theta_k\right)
\right]
\stackrel{(c)}{=}
\left(2-2\cos\left(\theta_k\right)\right)
\alpha_m\cos\left(a_{j,k}\right)
\nonumber\\
&\stackrel{(d)}{=}
\nu_k\widehat q_k(j)
\stackrel{(e)}{=}
\nu_k q_k(j).
\label{eq:dct8-proof-component-eigenvalue}
\end{align}
Here (a) combines
\eqref{eq:dct8-proof-unified-difference} and
\eqref{eq:dct8-proof-neighbor-values}; (b) uses
\begin{align}
\cos\left(a_{j,k}-\theta_k\right)
+
\cos\left(a_{j,k}+\theta_k\right)
&\stackrel{(a)}{=}
2\cos\left(a_{j,k}\right)\cos\left(\theta_k\right),
\end{align}
which is the elementary cosine addition identity; (c) factors the common
quantity $\alpha_m\cos\left(a_{j,k}\right)$; (d) uses the definitions of
$\nu_k$, $\widehat q_k(j)$, and $a_{j,k}$; and (e) applies
\eqref{eq:dct8-proof-q-restriction}.
Since
\eqref{eq:dct8-proof-component-eigenvalue} holds at every component,
\begin{align}
\mathbf T_m\mathbf q_k
&\stackrel{(a)}{=}
\nu_k\mathbf q_k,
\qquad
k\in[m].
\label{eq:dct8-proof-eigenpair}
\end{align}
Equality (a) follows from equality of the corresponding components in
$\mathbb R^m$.

We next prove that the eigenvalues $\nu_1,\ldots,\nu_m$ are positive and
pairwise distinct. For every $k\in[m]$,
\begin{align}
1
&\stackrel{(a)}{\leq}
2k-1
\stackrel{(b)}{\leq}
2m-1
\stackrel{(c)}{<}
2m+1.
\label{eq:dct8-proof-integer-angle-bounds}
\end{align}
Here (a) uses $k\geq1$ and (b) uses $k\leq m$. Since $2m+1>0$, multiplication by
$\frac{\pi}{2m+1}>0$ preserves the inequalities and gives
\begin{align}
0
&\stackrel{(a)}{<}
\theta_k
\stackrel{(b)}{<}
\pi.
\label{eq:dct8-proof-angle-open-interval}
\end{align}
Here (a) follows from the first inequality in
\eqref{eq:dct8-proof-integer-angle-bounds}, and (b) follows from its last
strict inequality.
Moreover, if $1\leq k<\ell\leq m$, then
\begin{align}
2k-1
&\stackrel{(a)}{<}
2\ell-1
\quad\Longrightarrow\quad
\theta_k
\stackrel{(b)}{<}
\theta_{\ell}.
\label{eq:dct8-proof-angle-order}
\end{align}
Inequality (a) follows from $k<\ell$, and the implication to inequality (b)
follows after multiplication by the positive factor $\frac{\pi}{2m+1}$.
{Since
$\tfrac{\mathrm d}{\mathrm d\vartheta}\left(2-2\cos\vartheta\right)=2\sin\vartheta>0$ on
$(0,\pi)$, the map $\vartheta\mapsto2-2\cos\vartheta$ is strictly increasing there.} Combining
\eqref{eq:dct8-proof-angle-open-interval},
\eqref{eq:dct8-proof-angle-order} with {this monotonicity} yields
\begin{align}
\nu_k
&\stackrel{(a)}{>}
0,
\qquad
k\in[m],
\nonumber\\
\nu_k
&\stackrel{(b)}{<}
\nu_{\ell},
\qquad
1\leq k<\ell\leq m.
\label{eq:dct8-proof-mu-order}
\end{align}
Here (a) uses $\theta_k>0$ and
$2-2\cos\left(0\right)=0$, while (b) uses strict monotonicity and the strict
ordering of the angles. In particular, the values
$\nu_1,\ldots,\nu_m$ are positive and pairwise distinct.

We now prove pairwise orthogonality directly, without invoking the spectral
theorem. The entrywise formula
\eqref{eq:dct8-proof-Tm-entrywise} is invariant under interchange of $r$ and
$s$, because the conditions $r=s$, $r=s=0$, and $\left|r-s\right|=1$ are all
symmetric. Thus
\begin{align}
\tau_{rs}
&\stackrel{(a)}{=}
\tau_{sr}
\quad\Longrightarrow\quad
\mathbf T_m^{\top}
\stackrel{(b)}{=}
\mathbf T_m.
\label{eq:dct8-proof-Tm-symmetric}
\end{align}
Here (a) follows from the cases in
\eqref{eq:dct8-proof-Tm-entrywise}, and (b) is the entrywise definition of a
symmetric matrix.
Fix $k,\ell\in[m]$ with $k\neq\ell$. Then
\begin{align}
\nu_k\mathbf q_k^{\top}\mathbf q_{\ell}
&\stackrel{(a)}{=}
\left(\nu_k\mathbf q_k\right)^{\top}\mathbf q_{\ell}
\stackrel{(b)}{=}
\left(\mathbf T_m\mathbf q_k\right)^{\top}\mathbf q_{\ell}
\nonumber\\
&\stackrel{(c)}{=}
\mathbf q_k^{\top}\mathbf T_m^{\top}\mathbf q_{\ell}
\stackrel{(d)}{=}
\mathbf q_k^{\top}\mathbf T_m\mathbf q_{\ell}
\nonumber\\
&\stackrel{(e)}{=}
\mathbf q_k^{\top}\left(\nu_{\ell}\mathbf q_{\ell}\right)
\stackrel{(f)}{=}
\nu_{\ell}\mathbf q_k^{\top}\mathbf q_{\ell}.
\label{eq:dct8-proof-orthogonality-chain}
\end{align}
Here (a) moves the real scalar $\nu_k$ inside the transpose; (b) applies the
eigenvalue relation
\eqref{eq:dct8-proof-eigenpair} for $k$; (c) uses
$\left(\mathbf A\mathbf x\right)^{\top}=\mathbf x^{\top}\mathbf A^{\top}$;
(d) uses
\eqref{eq:dct8-proof-Tm-symmetric}; (e) applies
\eqref{eq:dct8-proof-eigenpair} for $\ell$; and (f) extracts the scalar
$\nu_{\ell}$. Subtracting the right-hand side from the left-hand side gives
\begin{align}
\left(\nu_k-\nu_{\ell}\right)
\mathbf q_k^{\top}\mathbf q_{\ell}
&\stackrel{(a)}{=}
0.
\end{align}
Equality (a) is a rearrangement of
\eqref{eq:dct8-proof-orthogonality-chain}. Since
$\nu_k\neq\nu_{\ell}$ by
\eqref{eq:dct8-proof-mu-order}, division by the nonzero scalar
$\nu_k-\nu_{\ell}$ yields
\begin{align}
\mathbf q_k^{\top}\mathbf q_{\ell}
&\stackrel{(a)}{=}
0,
\qquad
k\neq\ell.
\label{eq:dct8-proof-pairwise-orthogonality}
\end{align}
Equality (a) uses the elementary fact that a product of a nonzero real number
and a real number can vanish only if the second factor vanishes.

It remains to compute the norm of each $\mathbf q_k$. By definition,
\begin{align}
\left\|\mathbf q_k\right\|_2^2
&\stackrel{(a)}{=}
\sum_{j=0}^{m-1}q_k(j)^2
\stackrel{(b)}{=}
\frac{4}{2m+1}
\sum_{j=0}^{m-1}
\cos^2\left(
\left(j+\frac12\right)\theta_k
\right).
\label{eq:dct8-proof-norm-start}
\end{align}
Here (a) is the definition of the squared Euclidean norm, and (b) substitutes
the component formula and uses
$\alpha_m^2=\frac{4}{2m+1}$.
The identity
$\cos^2\left(x\right)=\frac{1+\cos\left(2x\right)}{2}$ gives
\begin{align}
\sum_{j=0}^{m-1}
\cos^2\left(
\left(j+\frac12\right)\theta_k
\right)
&\stackrel{(a)}{=}
\frac12
\sum_{j=0}^{m-1}
\left[
1+
\cos\left(
\left(2j+1\right)\theta_k
\right)
\right]
\nonumber\\
&\stackrel{(b)}{=}
\frac{m}{2}
+
\frac12
\sum_{j=0}^{m-1}
\cos\left(
\left(2j+1\right)\theta_k
\right).
\label{eq:dct8-proof-cos-square-split}
\end{align}
Here (a) applies the double-angle identity term by term, and (b) uses
$\sum_{j=0}^{m-1}1=m$ and separates the two finite sums.

Define
\begin{align}
S_k
&:=
\sum_{j=0}^{m-1}
\cos\left(
\left(2j+1\right)\theta_k
\right).
\label{eq:dct8-proof-Sk-definition}
\end{align}
For every $j\in\left\{0,\ldots,m-1\right\}$, the sine addition formula gives
\begin{align}
2\sin\left(\theta_k\right)
\cos\left(
\left(2j+1\right)\theta_k
\right)
&\stackrel{(a)}{=}
\sin\left(
\left(2j+2\right)\theta_k
\right)
-
\sin\left(
2j\theta_k
\right).
\label{eq:dct8-proof-telescoping-identity}
\end{align}
Indeed, (a) is the identity
$2\sin\left(x\right)\cos\left(y\right)
=
\sin\left(x+y\right)+\sin\left(x-y\right)$
with
$x=\theta_k$ and
$y=\left(2j+1\right)\theta_k$, together with the oddness of the sine.
Summing
\eqref{eq:dct8-proof-telescoping-identity} from $j=0$ to $j=m-1$ yields
\begin{align}
2\sin\left(\theta_k\right)S_k
&\stackrel{(a)}{=}
\sum_{j=0}^{m-1}
\left[
\sin\left(
\left(2j+2\right)\theta_k
\right)
-
\sin\left(
2j\theta_k
\right)
\right]
\nonumber\\
&\stackrel{(b)}{=}
\sin\left(2m\theta_k\right)
-
\sin\left(0\right)
\stackrel{(c)}{=}
\sin\left(2m\theta_k\right).
\label{eq:dct8-proof-telescoping-sum}
\end{align}
Here (a) uses the definition of $S_k$ and linearity of finite summation; (b)
uses telescopic cancellation, since every intermediate term
$\sin\left(2r\theta_k\right)$ for $r=1,\ldots,m-1$ appears once with sign
$+1$ and once with sign $-1$; and (c) uses $\sin\left(0\right)=0$.
Furthermore,
\begin{align}
2m\theta_k
&\stackrel{(a)}{=}
\left(2m+1\right)\theta_k-\theta_k
\stackrel{(b)}{=}
\left(2k-1\right)\pi-\theta_k,
\label{eq:dct8-proof-2m-angle}
\end{align}
where (a) adds and subtracts $\theta_k$, and (b) uses
$\left(2m+1\right)\theta_k=\left(2k-1\right)\pi$.
Since $2k-1$ is odd,
\begin{align}
\sin\left(2m\theta_k\right)
&\stackrel{(a)}{=}
\sin\left(
\left(2k-1\right)\pi-\theta_k
\right)
\stackrel{(b)}{=}
\sin\left(\theta_k\right).
\label{eq:dct8-proof-sine-reflection}
\end{align}
Here (a) applies
\eqref{eq:dct8-proof-2m-angle}, and (b) uses
$\sin\left(\left(2r-1\right)\pi-x\right)=\sin\left(x\right)$ for every
integer $r$.
Combining
\eqref{eq:dct8-proof-telescoping-sum} and
\eqref{eq:dct8-proof-sine-reflection} gives
\begin{align}
2\sin\left(\theta_k\right)S_k
&\stackrel{(a)}{=}
\sin\left(\theta_k\right).
\end{align}
Equality (a) substitutes
\eqref{eq:dct8-proof-sine-reflection} into
\eqref{eq:dct8-proof-telescoping-sum}. By
\eqref{eq:dct8-proof-angle-open-interval},
$\sin\left(\theta_k\right)>0$, and hence it is nonzero. Division by
$2\sin\left(\theta_k\right)$ therefore yields
\begin{align}
S_k
&\stackrel{(a)}{=}
\frac12.
\label{eq:dct8-proof-Sk-value}
\end{align}
Equality (a) performs the valid division by the nonzero quantity
$2\sin\left(\theta_k\right)$.
Substituting
\eqref{eq:dct8-proof-Sk-value} into
\eqref{eq:dct8-proof-cos-square-split} gives
\begin{align}
\sum_{j=0}^{m-1}
\cos^2\left(
\left(j+\frac12\right)\theta_k
\right)
&\stackrel{(a)}{=}
\frac{m}{2}
+
\frac12\left(\frac12\right)
\stackrel{(b)}{=}
\frac{2m+1}{4}.
\label{eq:dct8-proof-cos-square-value}
\end{align}
Here (a) uses
\eqref{eq:dct8-proof-Sk-value}, and (b) puts the two terms over the common
denominator $4$. Finally,
\begin{align}
\left\|\mathbf q_k\right\|_2^2
&\stackrel{(a)}{=}
\frac{4}{2m+1}
\frac{2m+1}{4}
\stackrel{(b)}{=}
1.
\label{eq:dct8-proof-unit-norm}
\end{align}
Here (a) combines
\eqref{eq:dct8-proof-norm-start} and
\eqref{eq:dct8-proof-cos-square-value}, and (b) cancels the positive factor
$2m+1$ and the nonzero factor $4$.

Equations
\eqref{eq:dct8-proof-pairwise-orthogonality} and
\eqref{eq:dct8-proof-unit-norm} imply
\begin{align}
\mathbf q_k^{\top}\mathbf q_{\ell}
&\stackrel{(a)}{=}
\delta_{k\ell},
\qquad
k,\ell\in[m].
\label{eq:dct8-proof-orthonormality}
\end{align}
Here (a) uses the zero cross-products when $k\neq\ell$ and the unit norm when
$k=\ell$.
To verify explicitly that the family is a basis, suppose that
$a_1,\ldots,a_m\in\mathbb R$ satisfy
\begin{align}
\sum_{k=1}^{m}a_k\mathbf q_k
&\stackrel{(a)}{=}
\mathbf0.
\label{eq:dct8-proof-linear-combination-zero}
\end{align}
For every fixed $\ell\in[m]$, left multiplication by
$\mathbf q_{\ell}^{\top}$ gives
\begin{align}
0
&\stackrel{(a)}{=}
\mathbf q_{\ell}^{\top}\mathbf0
\stackrel{(b)}{=}
\mathbf q_{\ell}^{\top}
\sum_{k=1}^{m}a_k\mathbf q_k
\stackrel{(c)}{=}
\sum_{k=1}^{m}a_k
\mathbf q_{\ell}^{\top}\mathbf q_k
\nonumber\\
&\stackrel{(d)}{=}
\sum_{k=1}^{m}a_k\delta_{\ell k}
\stackrel{(e)}{=}
a_{\ell}.
\end{align}
Here (a) uses $\mathbf q_{\ell}^{\top}\mathbf0=0$; (b) applies
\eqref{eq:dct8-proof-linear-combination-zero}; (c) uses linearity of the inner
product in a finite sum; (d) uses
\eqref{eq:dct8-proof-orthonormality}; and (e) uses the defining property of
the Kronecker delta. Thus $a_{\ell}=0$ for every $\ell\in[m]$, so the family
is linearly independent. Since it contains exactly $m$ vectors in the
$m$-dimensional space $\mathbb R^m$, it is a basis. This proves
\ref{thm:complete-dct-spectrum-i}.

We next prove
\ref{thm:complete-dct-spectrum-ii}.
Define
\begin{align}
\mathbf Q_m
&:=
\left[
\mathbf q_1,\ldots,\mathbf q_m
\right],
\qquad
\boldsymbol{\Theta}
:=
\operatorname{diag}\left(
\nu_1,\ldots,\nu_m
\right).
\label{eq:dct8-proof-Qm-Theta}
\end{align}
For $k,\ell\in[m]$, the $(k,\ell)$ entry of
$\mathbf Q_m^{\top}\mathbf Q_m$ satisfies
\begin{align}
\left(\mathbf Q_m^{\top}\mathbf Q_m\right)_{k\ell}
&\stackrel{(a)}{=}
\mathbf q_k^{\top}\mathbf q_{\ell}
\stackrel{(b)}{=}
\delta_{k\ell}
\stackrel{(c)}{=}
\left(\mathbf I_m\right)_{k\ell}.
\end{align}
Here (a) is the rule for multiplying a matrix by its transpose in terms of its
columns; (b) applies
\eqref{eq:dct8-proof-orthonormality}; and (c) is the entrywise definition of
the identity matrix. Equality of all entries gives
\begin{align}
\mathbf Q_m^{\top}\mathbf Q_m
&\stackrel{(a)}{=}
\mathbf I_m.
\label{eq:dct8-proof-Qm-left-orthogonal}
\end{align}
Equality (a) follows because the two $m\times m$ matrices have identical
entries. Since the columns of $\mathbf Q_m$ form a basis,
$\mathbf Q_m$ is invertible. Multiplying
\eqref{eq:dct8-proof-Qm-left-orthogonal} from the right by
$\mathbf Q_m^{-1}$ gives
\begin{align}
\mathbf Q_m^{\top}
&\stackrel{(a)}{=}
\mathbf Q_m^{-1},
\end{align}
where (a) uses
$\mathbf Q_m^{\top}\mathbf Q_m\mathbf Q_m^{-1}
=
\mathbf I_m\mathbf Q_m^{-1}$.
Therefore,
\begin{align}
\mathbf Q_m\mathbf Q_m^{\top}
&\stackrel{(a)}{=}
\mathbf Q_m\mathbf Q_m^{-1}
\stackrel{(b)}{=}
\mathbf I_m.
\label{eq:dct8-proof-Qm-right-orthogonal}
\end{align}
Here (a) substitutes $\mathbf Q_m^{\top}=\mathbf Q_m^{-1}$, and (b) uses the
defining property of an inverse.

Collecting the eigenvalue identities
\eqref{eq:dct8-proof-eigenpair} columnwise gives
\begin{align}
\mathbf T_m\mathbf Q_m
&\stackrel{(a)}{=}
\mathbf T_m
\left[
\mathbf q_1,\ldots,\mathbf q_m
\right]
\stackrel{(b)}{=}
\left[
\mathbf T_m\mathbf q_1,
\ldots,
\mathbf T_m\mathbf q_m
\right]
\nonumber\\
&\stackrel{(c)}{=}
\left[
\nu_1\mathbf q_1,
\ldots,
\nu_m\mathbf q_m
\right]
\stackrel{(d)}{=}
\mathbf Q_m\boldsymbol{\Theta}.
\label{eq:dct8-proof-columnwise-eigenvalue}
\end{align}
Here (a) substitutes the definition of $\mathbf Q_m$; (b) applies matrix
multiplication column by column; (c) uses
\eqref{eq:dct8-proof-eigenpair}; and (d) uses the definition of the diagonal
matrix $\boldsymbol{\Theta}$. Consequently,
\begin{align}
\mathbf T_m
&\stackrel{(a)}{=}
\mathbf T_m\mathbf I_m
\stackrel{(b)}{=}
\mathbf T_m\mathbf Q_m\mathbf Q_m^{\top}
\stackrel{(c)}{=}
\mathbf Q_m\boldsymbol{\Theta}\mathbf Q_m^{\top}
\nonumber\\
&\stackrel{(d)}{=}
\mathbf Q_m
\operatorname{diag}\left(
\nu_1,\ldots,\nu_m
\right)
\mathbf Q_m^{\top}.
\label{eq:dct8-proof-Tm-decomposition}
\end{align}
Here (a) inserts the identity on the right; (b) applies
\eqref{eq:dct8-proof-Qm-right-orthogonal}; (c) uses
\eqref{eq:dct8-proof-columnwise-eigenvalue}; and (d) substitutes the
definition of $\boldsymbol{\Theta}$. This proves
\ref{thm:complete-dct-spectrum-ii}.

We now prove
\ref{thm:complete-dct-spectrum-iii}.
Because $A>0$, $G=2m>0$, and
$h=\frac{2A}{G}$,
\begin{align}
h
&\stackrel{(a)}{=}
\frac{2A}{2m}
\stackrel{(b)}{=}
\frac{A}{m}
\stackrel{(c)}{>}
0.
\label{eq:dct8-proof-h-positive}
\end{align}
Here (a) substitutes $G=2m$, (b) cancels the nonzero factor $2$, and (c) uses
$A>0$ and $m\geq1$.
Define
\begin{align}
\mathbf Q
&:=
\begin{pmatrix}
\mathbf Q_m & \mathbf0
\\
\mathbf0 & \mathbf Q_m
\end{pmatrix},
\qquad
\boldsymbol{\Lambda}
:=
\frac{1}{h}
\begin{pmatrix}
\boldsymbol{\Theta} & \mathbf0
\\
\mathbf0 & \boldsymbol{\Theta}
\end{pmatrix}.
\label{eq:dct8-proof-full-Q-Lambda}
\end{align}
Using
\eqref{eq:dct8-proof-Qm-left-orthogonal},
\begin{align}
\mathbf Q^{\top}\mathbf Q
&\stackrel{(a)}{=}
\begin{pmatrix}
\mathbf Q_m^{\top} & \mathbf0
\\
\mathbf0 & \mathbf Q_m^{\top}
\end{pmatrix}
\begin{pmatrix}
\mathbf Q_m & \mathbf0
\\
\mathbf0 & \mathbf Q_m
\end{pmatrix}
\nonumber\\
&\stackrel{(b)}{=}
\begin{pmatrix}
\mathbf Q_m^{\top}\mathbf Q_m & \mathbf0
\\
\mathbf0 & \mathbf Q_m^{\top}\mathbf Q_m
\end{pmatrix}
\stackrel{(c)}{=}
\begin{pmatrix}
\mathbf I_m & \mathbf0
\\
\mathbf0 & \mathbf I_m
\end{pmatrix}
\stackrel{(d)}{=}
\mathbf I_{2m}
\stackrel{(e)}{=}
\mathbf I_G.
\label{eq:dct8-proof-full-Q-left-orthogonal}
\end{align}
Here (a) transposes the block-diagonal matrix in
\eqref{eq:dct8-proof-full-Q-Lambda}; (b) performs the block multiplication;
(c) uses
\eqref{eq:dct8-proof-Qm-left-orthogonal}; (d) identifies the resulting block
matrix as the identity in dimension $2m$; and (e) uses $G=2m$.
Similarly, using
\eqref{eq:dct8-proof-Qm-right-orthogonal},
\begin{align}
\mathbf Q\mathbf Q^{\top}
&\stackrel{(a)}{=}
\begin{pmatrix}
\mathbf Q_m\mathbf Q_m^{\top} & \mathbf0
\\
\mathbf0 & \mathbf Q_m\mathbf Q_m^{\top}
\end{pmatrix}
\stackrel{(b)}{=}
\mathbf I_G.
\label{eq:dct8-proof-full-Q-right-orthogonal}
\end{align}
Here (a) performs the analogous block multiplication, and (b) applies
\eqref{eq:dct8-proof-Qm-right-orthogonal} and $G=2m$. Thus $\mathbf Q$ is
orthogonal.

Substituting
\eqref{eq:dct8-proof-Tm-decomposition} into
\eqref{eq:dct8-proof-K0-block} gives
\begin{align}
\mathbf K_0
&\stackrel{(a)}{=}
\frac{1}{h}
\begin{pmatrix}
\mathbf Q_m\boldsymbol{\Theta}\mathbf Q_m^{\top}
&
\mathbf0
\\
\mathbf0
&
\mathbf Q_m\boldsymbol{\Theta}\mathbf Q_m^{\top}
\end{pmatrix}
\nonumber\\
&\stackrel{(b)}{=}
\begin{pmatrix}
\mathbf Q_m & \mathbf0
\\
\mathbf0 & \mathbf Q_m
\end{pmatrix}
\left[
\frac{1}{h}
\begin{pmatrix}
\boldsymbol{\Theta} & \mathbf0
\\
\mathbf0 & \boldsymbol{\Theta}
\end{pmatrix}
\right]
\begin{pmatrix}
\mathbf Q_m^{\top} & \mathbf0
\\
\mathbf0 & \mathbf Q_m^{\top}
\end{pmatrix}
\nonumber\\
&\stackrel{(c)}{=}
\mathbf Q\boldsymbol{\Lambda}\mathbf Q^{\top}.
\label{eq:dct8-proof-K0-decomposition}
\end{align}
Here (a) substitutes the eigendecomposition of both copies of $\mathbf T_m$;
(b) factors the block-diagonal product and places the scalar factor
$\frac1h$ in the middle block; and (c) applies
\eqref{eq:dct8-proof-full-Q-Lambda}. This proves
\ref{thm:complete-dct-spectrum-iii}.

It remains to prove
\ref{thm:complete-dct-spectrum-iv}.
The elementary half-angle identity gives
\begin{align}
\nu_k
&\stackrel{(a)}{=}
2-2\cos\left(\theta_k\right)
\stackrel{(b)}{=}
4\sin^2\left(
\frac{\theta_k}{2}
\right),
\label{eq:dct8-proof-mu-half-angle}
\end{align}
where (a) is the definition of $\nu_k$, and (b) uses
$1-\cos\left(x\right)=2\sin^2\left(\frac{x}{2}\right)$.
Thus the distinct eigenvalue values appearing on the diagonal of
$\boldsymbol{\Lambda}$ are
\begin{align}
\lambda_k
&\stackrel{(a)}{=}
\frac{\nu_k}{h}
\stackrel{(b)}{=}
\frac{4}{h}
\sin^2\left(
\frac{\theta_k}{2}
\right)
\nonumber\\
&\stackrel{(c)}{=}
\frac{4}{h}
\sin^2\left(
\frac{\left(2k-1\right)\pi}{2\left(2m+1\right)}
\right),
\qquad
k=1,\ldots,m.
\label{eq:dct8-proof-lambda-formula}
\end{align}
Here (a) reads the diagonal scaling from
\eqref{eq:dct8-proof-full-Q-Lambda}; (b) applies
\eqref{eq:dct8-proof-mu-half-angle}; and (c) substitutes the definition of
$\theta_k$.
Because $h>0$ by
\eqref{eq:dct8-proof-h-positive} and the $\nu_k$ are strictly ordered by
\eqref{eq:dct8-proof-mu-order},
\begin{align}
\lambda_k
&\stackrel{(a)}{>}
0,
\qquad
k\in[m],
\nonumber\\
\lambda_k
&\stackrel{(b)}{<}
\lambda_{\ell},
\qquad
1\leq k<\ell\leq m.
\label{eq:dct8-proof-lambda-order}
\end{align}
Here (a) and (b) follow by division of the corresponding inequalities in
\eqref{eq:dct8-proof-mu-order} by the positive scalar $h$.

By
\eqref{eq:dct8-proof-full-Q-Lambda},
\begin{align}
\boldsymbol{\Lambda}
&\stackrel{(a)}{=}
\operatorname{diag}\left(
\lambda_1,\ldots,\lambda_m,
\lambda_1,\ldots,\lambda_m
\right).
\label{eq:dct8-proof-Lambda-repeated}
\end{align}
Equality (a) uses $\lambda_k=\frac{\nu_k}{h}$ in each of the two diagonal
blocks. Since
\eqref{eq:dct8-proof-K0-decomposition} and
\eqref{eq:dct8-proof-full-Q-left-orthogonal} imply
$\mathbf Q^{-1}=\mathbf Q^{\top}$, the matrices $\mathbf K_0$ and
$\boldsymbol{\Lambda}$ are similar. Hence, for an indeterminate $z$,
\begin{align}
\det\left(z\mathbf I_G-\mathbf K_0\right)
&\stackrel{(a)}{=}
\det\left(
 z\mathbf I_G
-
\mathbf Q\boldsymbol{\Lambda}\mathbf Q^{\top}
\right)
\stackrel{(b)}{=}
\det\left(
\mathbf Q
\left(z\mathbf I_G-\boldsymbol{\Lambda}\right)
\mathbf Q^{\top}
\right)
\nonumber\\
&\stackrel{(c)}{=}
\det\left(\mathbf Q\right)
\det\left(z\mathbf I_G-\boldsymbol{\Lambda}\right)
\det\left(\mathbf Q^{\top}\right)
\stackrel{(d)}{=}
\det\left(
\mathbf Q\mathbf Q^{\top}
\right)
\det\left(z\mathbf I_G-\boldsymbol{\Lambda}\right)
\nonumber\\
&\stackrel{(e)}{=}
\det\left(z\mathbf I_G-\boldsymbol{\Lambda}\right)
\stackrel{(f)}{=}
\prod_{k=1}^{m}
\left(z-\lambda_k\right)^2.
\label{eq:dct8-proof-characteristic-polynomial}
\end{align}
Here (a) applies
\eqref{eq:dct8-proof-K0-decomposition}; (b) uses
$z\mathbf I_G=\mathbf Q\left(z\mathbf I_G\right)\mathbf Q^{\top}$, which
follows from
\eqref{eq:dct8-proof-full-Q-right-orthogonal}; (c) uses multiplicativity of
the determinant; (d) uses
$\det\left(\mathbf Q\right)\det\left(\mathbf Q^{\top}\right)
=
\det\left(\mathbf Q\mathbf Q^{\top}\right)$; (e) applies
\eqref{eq:dct8-proof-full-Q-right-orthogonal} and
$\det\left(\mathbf I_G\right)=1$; and (f) uses the repeated diagonal form
\eqref{eq:dct8-proof-Lambda-repeated}.
By
\eqref{eq:dct8-proof-lambda-order}, the numbers
$\lambda_1,\ldots,\lambda_m$ are distinct. Therefore each factor
$z-\lambda_k$ occurs exactly twice in the characteristic polynomial
\eqref{eq:dct8-proof-characteristic-polynomial}, and each $\lambda_k$ has
algebraic multiplicity two. This proves
\ref{thm:complete-dct-spectrum-iv} and completes the proof.

\end{noheadproof}\hfill\ensuremath{\square}

\begin{table}[!ht]
\centering
\caption{Summary of the auxiliary theoretical results. Their logical dependencies are illustrated in Fig.~\ref{fig:dependency_graph}. For the main theoretical results, see Table~\ref{tab:main-results}.}
\label{tab:auxiliary-results}

\begin{tabular}{lll}
\toprule
Result & Content & Page \\
\midrule
Lemma~\ref{lem:stiffness-matrix}
&
Structure of the stiffness matrix
&
page~\pageref{lem:stiffness-matrix}
\\

Lemma~\ref{lem:kernel-stiffness}
&
Kernel of the stiffness matrix
&
page~\pageref{lem:kernel-stiffness}
\\

Lemma~\ref{lem:block-decomposition}
&
Block decomposition of the anchored stiffness matrix
&
page~\pageref{lem:block-decomposition}
\\

Lemma~\ref{lem:increment-energy}
&
Brownian energy in increment coordinates
&
page~\pageref{lem:increment-energy}
\\

Lemma~\ref{lem:spectral-energy}
&
Brownian energy in spectral coordinates
&
page~\pageref{lem:spectral-energy}
\\

Lemma~\ref{lem:explicit-spectral-basis}
&
Explicit spectral Brownian basis
&
page~\pageref{lem:explicit-spectral-basis}
\\
\bottomrule
\end{tabular}

\end{table}

\section{Proofs of the Kernel, Conditioning, and Optimization Results}
\label{app:new-proofs}

This section completes the explicit kernel, approximation, conditioning,
and optimization consequences of the foundational results proved above.
We first establish the positive definiteness of the anchored stiffness
matrix in
\Cref{lem:K0-pd},
identify the reproducing kernel and Brownian Gram inverse in
\Cref{thm:finite-brownian-rkhs},
items~\ref{thm:finite-brownian-rkhs-i}--\ref{thm:finite-brownian-rkhs-ii},
and derive the conforming-subspace and sharp-approximation results in
\Cref{cor:conforming-subspace,cor:approximation}.
We then obtain the exact stiffness and coordinate condition numbers in
\Cref{cor:condition-number}.
Finally, we prove the linear-reparameterization identity in
\Cref{prop:precond},
the nodal--spectral and increment--Brownian gradient correspondences in
\Cref{prop:gd-equivariance},
their recursive blockwise extension in
\Cref{cor:blockwise-optimizer},
the mesh-independent increment least-squares bound in
\Cref{prop:least-squares-conditioning},
and the signed-permutation characterization of standard Adam equivariance
in
\Cref{prop:adam-nonequivariance}.

Throughout,
$\mathcal J=\{0,\ldots,G\}\setminus\{i_0\}$
indexes the reduced anchored coordinates of
\Cref{def:reduced-anchored-coordinates},
and
$\widetilde{\boldsymbol\phi}(s)
=(\phi_j(s))_{j\in\mathcal J}$
is understood in that same reduced ordering.
Thus, for
$f,g\in\mathcal H_h^0$
with reduced coordinates
$\widetilde{\mathbf v},\widetilde{\mathbf u}$,
\begin{align}
f(s)
&=
\widetilde{\boldsymbol\phi}(s)^{\top}
\widetilde{\mathbf v},
&
\left\langle
f,
g
\right\rangle_{B,h}
&=
\widetilde{\mathbf v}^{\top}
\mathbf K_0
\widetilde{\mathbf u}.
\label{eq:new-eval-and-inner}
\end{align}

\subsection{Proof of \texorpdfstring{\Cref{lem:K0-pd}}{Lemma 3.1}}
By \Cref{lem:kernel-stiffness},
$\ker\mathbf K=\operatorname{span}(\mathbf 1)$, and by
\Cref{prop:finite-brownian-energy}, $\mathbf K\succeq0$. If
$\mathbf x^{\top}\mathbf K_0\mathbf x=0$, then
$(\mathbf R\mathbf x)^{\top}\mathbf K(\mathbf R\mathbf x)=0$, so
$\mathbf R\mathbf x=c\mathbf1$. The anchor entry of $\mathbf R\mathbf x$ is zero; hence $c=0$. Since
$\mathbf R^{\top}\mathbf R=\mathbf I_G$, $\mathbf x=0$. Thus $\mathbf K_0\succ0$, and
\eqref{eq:new-eval-and-inner} is an inner product. \hfill$\square$

\subsection{Proof of \texorpdfstring{\Cref{thm:finite-brownian-rkhs}}{Theorem 1}, items (i)--(ii)}\label{thm:items}
Finite-dimensional RKHS existence and dimension are proved in
Appendix~\ref{app:proof-finite-brownian-rkhs}. Fix $t\in I$. If the representer
$k_h(\cdot,t)$ has reduced vector $\widetilde{\mathbf k}_t$, then for every
$\widetilde{\mathbf v}$,
\begin{align}
\widetilde{\mathbf v}^{\top}\mathbf K_0\widetilde{\mathbf k}_t
=f(t)=\widetilde{\boldsymbol\phi}(t)^{\top}\widetilde{\mathbf v}.
\end{align}
Therefore $\mathbf K_0\widetilde{\mathbf k}_t=\widetilde{\boldsymbol\phi}(t)$, and positive
definiteness gives
$\widetilde{\mathbf k}_t=\mathbf K_0^{-1}\widetilde{\boldsymbol\phi}(t)$, proving item (i).

For $r\in\mathcal J$, set $g_r=k_{\mathrm B}(\cdot,t_r)$. Its only possible kinks are at $0$ and
$t_r$, both grid nodes, so $g_r\in\mathcal H_h^0$. If $t_r>0$, then
$g_r'=\mathbb1_{(0,t_r)}$ almost everywhere; if $t_r<0$, then
$g_r'=-\mathbb1_{(t_r,0)}$. Hence, for every $f\in\mathcal H_h^0$,
\begin{align}
\langle f,g_r\rangle_{B,h}
&=
\begin{cases}
\int_0^{t_r}f'(t)\,\mathrm dt,&t_r>0,\\
-\int_{t_r}^{0}f'(t)\,\mathrm dt,&t_r<0,
\end{cases}
=f(t_r)-f(0)=f(t_r).
\end{align}
Thus $g_r=k_h(\cdot,t_r)$. Since
$\widetilde{\boldsymbol\phi}(t_r)=\mathbf e_{\pi(r)}$,
\begin{align}
(\mathbf K_0^{-1})_{\pi(r),\pi(q)}
=k_h(t_r,t_q)=k_{\mathrm B}(t_r,t_q).
\end{align}
Substitution into item (i) gives
\begin{align}
k_h(s,t)=\sum_{r,q\in\mathcal J}
\phi_r(s)k_{\mathrm B}(t_r,t_q)\phi_q(t),
\end{align}
which is the tensor-product bilinear interpolant. If one argument is a node, one sum collapses and the
preceding representer identity gives exact agreement. \hfill$\square$

\subsection{Proof of \texorpdfstring{\Cref{cor:conforming-subspace}}{Corollary 2}}
The functions $g_r=k_{\mathrm B}(\cdot,t_r)$ belong to $\mathcal H_h^0$, and their Gram matrix is
$\big(k_{\mathrm B}(t_r,t_q)\big)_{r,q}=\mathbf K_0^{-1}$ in reduced order. They are therefore linearly
independent, and their number equals $\dim\mathcal H_h^0=G$, so they span the space. Every
$g\in\mathcal H_h^0$ is continuous, piecewise affine, anchored, and has derivative in $L^2(I)$; hence
$g\in\mathcal H_{\mathrm B}$ and
$\|g\|_{\mathcal H_{\mathrm B}}^2=\int_I(g')^2=\|g\|_{B,h}^2$.

Let $I_hf$ be nodal interpolation of $f\in\mathcal H_{\mathrm B}$. For every $r\in\mathcal J$,
\begin{align}
\langle f-I_hf,g_r\rangle_{\mathcal H_{\mathrm B}}
=(f-I_hf)(t_r)=0.
\end{align}
Since the $g_r$ span $\mathcal H_h^0$, the residual is orthogonal to that space, so $I_h=\Pi_h$.
\hfill$\square$

\subsection{Proof of \texorpdfstring{\Cref{cor:approximation}}{Corollary 3}}
Fix a cell $T_i=[a,b]$ and write $h=b-a$. No cell crosses the anchor because $0$ is a grid node. Suppose
first $0\le a\le s\le t\le b$, and set $\alpha=(s-a)/h$, $\beta=(t-a)/h$. The four Brownian corner values
are $a,a,a,b$, so bilinear interpolation gives
\begin{align}
k_h(s,t)
&=(1-\alpha)(1-\beta)a+(1-\alpha)\beta a
+\alpha(1-\beta)a+\alpha\beta b=a+\alpha\beta h.
\end{align}
Since $k_{\mathrm B}(s,t)=s=a+\alpha h$,
\begin{align}
k_{\mathrm B}(s,t)-k_h(s,t)
=\alpha(1-\beta)h=\frac{(s-a)(b-t)}{h}.
\end{align}
For $a\le s\le t\le b\le0$, the corner values are $-a,-b,-b,-b$ and the same bilinear calculation gives
$k_h(s,t)=-b+(1-\alpha)(1-\beta)h$. Since $k_{\mathrm B}(s,t)=-t$, the same residual
$(s-a)(b-t)/h$ follows. Symmetry replaces $s,t$ by their minimum and maximum. On two different cells the
diagonal kink $s=t$ is absent from the corresponding rectangle and $k_{\mathrm B}$ is affine in each
argument there; hence its bilinear interpolant is exact. This proves \eqref{eq:residual-kernel}.

Because $\Pi_h$ is orthogonal, the error functional
$f\mapsto f(t)-\Pi_hf(t)$ has representer
$r_h(\cdot,t)$ and squared norm $r_h(t,t)$. Cauchy--Schwarz gives
\eqref{eq:power-function}; equality holds for a normalized residual section whenever $t$ is not a node.
Maximizing $(t-a)(b-t)/h$ on a cell gives $h/4$, proving the sharp $\sqrt h/2$ constant.

The Pythagorean identity follows from $f-\Pi_hf\perp\Pi_hf$. For the $L^2$ bound, set
$e=f-\Pi_hf$. On every cell, $e(a)=e(b)=0$, so the sharp Wirtinger inequality gives
\begin{align}
\|e\|_{L^2(T_i)}^2\le\frac{h^2}{\pi^2}\|e'\|_{L^2(T_i)}^2.
\end{align}
The derivative of $\Pi_hf$ on $T_i$ is the cell average of $f'$, so $e'$ is the residual after the
$L^2(T_i)$-orthogonal projection onto constants. Summing over cells,
\begin{align}
\|e\|_{L^2(I)}^2
\le\frac{h^2}{\pi^2}\|e'\|_{L^2(I)}^2
\le\frac{h^2}{\pi^2}\|f'\|_{L^2(I)}^2.
\end{align}
Sharpness follows by taking a sine arch supported on one cell,
$f(t)=\sin(\pi(t-a)/h)$ on that cell and zero elsewhere; it has zero nodal values and attains the
Wirtinger constant. \hfill$\square$

\subsection{Proof of \texorpdfstring{\Cref{cor:condition-number}}{Corollary 4}}
By \Cref{thm:complete-dct-spectrum}, the distinct stiffness eigenvalues are
\begin{align}
\lambda_k=\frac4h\sin^2\!\left(\frac{(2k-1)\pi}{2(2m+1)}\right),
\qquad k\in[m].
\end{align}
They increase with $k$. Thus
\begin{align}
\lambda_{\min}&=\frac4h\sin^2\!\left(\frac{\pi}{4m+2}\right),\\
\lambda_{\max}&=\frac4h\cos^2\!\left(\frac{\pi}{2m+1}\right),
\end{align}
which gives \eqref{eq:K0-condition}. Since
$\mathbf D_0^{\top}\mathbf D_0=h\mathbf K_0$, its squared singular values are
$h\lambda_k$. Taking square roots yields \eqref{eq:D0-singular-values} and
\eqref{eq:D0-condition}. Taylor expansion of sine and cosine gives the stated asymptotics. In particular
$h\lambda_{\max}\to4$; writing $\lambda_{\max}\to4/h$ would be meaningless because $h$ varies with $m$.
\hfill$\square$

\subsection{Proof of \texorpdfstring{\Cref{prop:precond}}{Proposition 2}}
The chain rule gives
$\nabla L_{\mathbf A}(\mathbf z)=\mathbf A^{-\top}\nabla L(\mathbf A^{-1}\mathbf z)$. Applying
$\mathbf A^{-1}$ to the update gives
\begin{align}
\widetilde{\mathbf v}_{k+1}
&=\mathbf A^{-1}\mathbf z_k-
\eta_k\mathbf A^{-1}\mathbf A^{-\top}\nabla L(\widetilde{\mathbf v}_k)\\
&=\widetilde{\mathbf v}_k-
\eta_k(\mathbf A^{\top}\mathbf A)^{-1}\nabla L(\widetilde{\mathbf v}_k).
\end{align}
The stochastic statement is identical with a shared stochastic gradient and its chain-rule transform.
\hfill$\square$

\subsection{Proof of \texorpdfstring{\Cref{prop:gd-equivariance}}{Proposition 3}}
For $\mathbf A=\mathbf Q^{\top}$, orthogonality gives
$\mathbf A^{\top}\mathbf A=\mathbf I_G$, so \eqref{eq:reparameterized-gd} is exactly the nodal update.
Mapped initializations and induction give identical function iterates. For $\mathbf A=\mathbf D_0$,
$\mathbf A^{\top}\mathbf A=h\mathbf K_0$, and \eqref{eq:reparameterized-gd} becomes
\eqref{eq:increment-brownian-gradient}. The Riemannian gradient for the constant metric with Gram matrix
$\mathbf K_0$ is $\mathbf K_0^{-1}\nabla L$; hence the discrete update is explicit Euler for that flow
with time increment $\eta_k/h$. \hfill$\square$

\subsection{Proof of \texorpdfstring{\Cref{cor:blockwise-optimizer}}{Corollary 5}}
Apply \Cref{prop:precond} to the block-diagonal matrix $\mathbf A$. Its Gram matrix and inverse are
block diagonal:
\begin{align}
(\mathbf A^{\top}\mathbf A)^{-1}
=\operatorname{blkdiag}\left(
(\mathbf A_1^{\top}\mathbf A_1)^{-1},\ldots,
(\mathbf A_B^{\top}\mathbf A_B)^{-1},\mathbf I\right).
\end{align}
If every $\mathbf A_b$ is orthogonal, this is the identity. If every $\mathbf A_b=\mathbf D_{0,b}$, the
$b$th block is $(h_b\mathbf K_{0,b})^{-1}$. The proof never separates the objective by blocks, so arbitrary
cross-layer coupling is allowed. \hfill$\square$

\subsection{Proof of \texorpdfstring{\Cref{prop:least-squares-conditioning}}{Proposition 4}}
Let
\begin{align}
\mathbf B:=\frac1n\sum_{i=1}^{n}
\widetilde{\boldsymbol\phi}(x_i)\widetilde{\boldsymbol\phi}(x_i)^{\top}
\succeq0.
\end{align}
The nodal Hessian is $\mathbf H_{\mathrm{nod}}=\mathbf B+\rho\mathbf K_0$; the spectral Hessian is
$\mathbf Q^{\top}\mathbf H_{\mathrm{nod}}\mathbf Q$, so the two have identical spectra. Since
$\mathbf D_0^{\top}\mathbf D_0=h\mathbf K_0$, the polar decomposition has the form
$\mathbf D_0=\sqrt h\,\mathbf U\mathbf K_0^{1/2}$ for an orthogonal $\mathbf U$. Therefore
\begin{align}
\mathbf H_{\mathrm{inc}}
&=\mathbf D_0^{-\top}(\mathbf B+\rho\mathbf K_0)\mathbf D_0^{-1}\\
&=\frac1h\mathbf U\left(
\mathbf K_0^{-1/2}\mathbf B\mathbf K_0^{-1/2}+\rho\mathbf I_G
\right)\mathbf U^{\top}.
\label{eq:increment-hessian-polar}
\end{align}
Set $\mathbf C=\mathbf K_0^{-1/2}\mathbf B\mathbf K_0^{-1/2}\succeq0$. Then
\begin{align}
\lambda_{\max}(\mathbf C)
&\le\operatorname{tr}(\mathbf C)
=\frac1n\sum_{i=1}^{n}
\widetilde{\boldsymbol\phi}(x_i)^{\top}\mathbf K_0^{-1}
\widetilde{\boldsymbol\phi}(x_i)\\
&=\frac1n\sum_{i=1}^{n}k_h(x_i,x_i)
\le\frac1n\sum_{i=1}^{n}k_{\mathrm B}(x_i,x_i)
=\frac1n\sum_{i=1}^{n}|x_i|\le A.
\end{align}
The penultimate inequality follows from the nonnegative residual diagonal in
\eqref{eq:power-function}. Hence every eigenvalue of $\mathbf C+\rho\mathbf I_G$ lies in
$[\rho,\rho+A]$, and \eqref{eq:increment-hessian-polar} proves
\eqref{eq:increment-hessian-bound}. If $\mathbf B=0$, the increment Hessian is
$(\rho/h)\mathbf I_G$, while nodal and spectral Hessians have condition number
$\kappa_2(\mathbf K_0)$. \hfill$\square$

\subsection{Proof of \texorpdfstring{\Cref{prop:adam-nonequivariance}}{Proposition 5}}
Write standard Adam, for gradient $\mathbf g_k$, as
\begin{align}
\mathbf m_k&=\beta_1\mathbf m_{k-1}+(1-\beta_1)\mathbf g_k,\\
\mathbf s_k&=\beta_2\mathbf s_{k-1}+(1-\beta_2)(\mathbf g_k\odot\mathbf g_k),\\
\widehat{\mathbf m}_k&=\frac{\mathbf m_k}{1-\beta_1^{k+1}},
&
\widehat{\mathbf s}_k&=\frac{\mathbf s_k}{1-\beta_2^{k+1}},\\
\mathbf x_{k+1}&=\mathbf x_k-
\eta_k\frac{\widehat{\mathbf m}_k}{\sqrt{\widehat{\mathbf s}_k}+\varepsilon\mathbf1},
\end{align}
where the last two operations are coordinatewise. If $\mathbf S$ is a signed permutation, gradients and
first moments transform by $\mathbf S^{\top}$, while squared gradients and second moments are permuted by
$|\mathbf S|^{\top}$. Induction through the recursion shows that the mapped update is exactly the original
one, so signed permutations are equivariances.

Conversely, let $\mathbf U$ be orthogonal and suppose Adam is equivariant for every gradient sequence. At the first
step, bias correction reduces the Adam direction to
$\mathbf F_{\varepsilon}(\mathbf g)=\mathbf g/(|\mathbf g|+\varepsilon\mathbf1)$. Equivariance under
$\mathbf z=\mathbf U^{\top}\mathbf x$ requires
\begin{align}
\mathbf U\mathbf F_{\varepsilon}(\mathbf U^{\top}\mathbf g)
=\mathbf F_{\varepsilon}(\mathbf g)
\qquad\text{for every }\mathbf g.
\label{eq:adam-first-step-equivariance}
\end{align}
Take $\mathbf g=t\mathbf U\mathbf e_j$ with $t>0$ and let $t\to\infty$. The left side tends to
$\mathbf U\mathbf e_j$, while the right side tends coordinatewise to
$\operatorname{sgn}(\mathbf U\mathbf e_j)$. Thus every nonzero coordinate of every column
$\mathbf U\mathbf e_j$ equals $\pm1$. Since each column has unit norm, it has exactly one nonzero entry;
orthogonality then makes $\mathbf U$ a signed permutation.

For $m\ge2$, the first column of each DCT-VIII block has $m$ strictly positive entries, so
$\mathbf Q$ is not a signed permutation. Failure of \eqref{eq:adam-first-step-equivariance} for some vector gives a linear
objective with that constant gradient; because the first step size is positive, the first mapped Adam
iterates differ. \hfill$\square$

\section{Internal Lemmas}\label{app:auxi_lems}
\setcounter{lemma}{0}
\renewcommand{\thelemma}{\thesection\arabic{lemma}}

This section proves the six auxiliary linear-algebra and coordinate results
used in
\Cref{app:foundational-proofs,app:new-proofs}:
the stiffness factorization and nullspace, the anchored block
decomposition, the increment and spectral Brownian-energy identities,
and the explicit spectral Brownian basis.
Their logical roles are displayed in
\Cref{fig:dependency_graph},
and their statements and locations are summarized in
\Cref{tab:auxiliary-results}.

\begin{lemma}[Structure of the stiffness matrix]
\label{lem:stiffness-matrix}

The finite-element stiffness matrix
$\mathbf K=(K_{rs})_{r,s=0}^{G}$,
with
\begin{align}
K_{rs}
=
\int_{-A}^{A}
\phi_r'(t)\phi_s'(t)
\,\mathrm dt,
\qquad
r,s=0,\ldots,G,
\end{align}
satisfies
\begin{align}
\mathbf{K}
=
\frac{1}{h}
\mathbf{D}^{\top}
\mathbf{D}.
\end{align}
Equivalently,
\begin{align}
\mathbf{K}
=
\frac{1}{h}
\begin{pmatrix}
1 & -1 & 0 & \cdots & 0 \\
-1 & 2 & -1 & \ddots & \vdots \\
0 & -1 & 2 & \ddots & 0 \\
\vdots & \ddots & \ddots & \ddots & -1 \\
0 & \cdots & 0 & -1 & 1
\end{pmatrix}.
\end{align}
Consequently, for every
$\mathbf v\in\mathbb R^{G+1}$
and
$f=\mathcal R(\mathbf v)\in\mathcal H_h$,
\begin{align}
\left\|
f
\right\|_{B,h}^{2}
=
\frac{1}{h}
\left\|
\mathbf{D}\mathbf{v}
\right\|_2^{2}
=
\frac{1}{h}
\sum_{i=0}^{G-1}
\left(
v_{i+1}-v_i
\right)^2.
\end{align}

\end{lemma}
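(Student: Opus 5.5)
The plan is to compute the stiffness entries directly from the derivatives of the hat functions, then match them entrywise with $\frac1h\mathbf D^{\top}\mathbf D$. The norm identity at the end then follows from the matrix identity.

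First, on each cell $T_i=[t_i,t_{i+1}]$ only $\phi_i$ and $\phi_{i+1}$ are nonzero. They are affine there, with $\phi_i'=-1/h$ and $\phi_{i+1}'=1/h$; every other $\phi_j'$ vanishes almost everywhere on $T_i$. Hence we can write, almost everywhere on $T_i$,
\begin{align}
\phi_r'(t)=\frac1h\big(\delta_{r,i+1}-\delta_{r,i}\big)=\frac1h\mathbf D_{i r},
\end{align}
since row $i$ of $\mathbf D$ has $-1$ in column $i$ and $+1$ in column $i+1$. We then split the integral over the $G$ cells, each of length $h$:
\begin{align}
K_{rs}=\sum_{i=0}^{G-1}\int_{T_i}\phi_r'\phi_s'\,\mathrm dt=\sum_{i=0}^{G-1}h\cdot\frac1{h^2}\mathbf D_{ir}\mathbf D_{is}=\frac1h(\mathbf D^{\top}\mathbf D)_{rs}.
\end{align}
This gives $\mathbf K=\frac1h\mathbf D^{\top}\mathbf D$ directly, and it avoids the case analysis over interior and boundary indices.

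Next comes the explicit tridiagonal form. We read off $(\mathbf D^{\top}\mathbf D)_{rs}=\sum_i\mathbf D_{ir}\mathbf D_{is}$. For $r=s$, this counts the rows of $\mathbf D$ in which column $r$ is nonzero: two for $1\le r\le G-1$ and one for $r\in\{0,G\}$. This gives diagonal entries $2$ in the interior and $1$ at the two corners. For $|r-s|=1$, exactly one row contains both columns, with entries of opposite signs, so the entry is $-1$. For $|r-s|\ge2$, no row contains both columns, so the entry is $0$.

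Finally, for $f=\mathcal R(\mathbf v)$, we expand $\|f\|_{B,h}^2=\mathbf v^{\top}\mathbf K\mathbf v$ by bilinearity, exactly as in the computation in the proof of \Cref{prop:finite-brownian-energy}. Substituting the factorization gives $\frac1h\|\mathbf D\mathbf v\|_2^2=\frac1h\sum_i(v_{i+1}-v_i)^2$. The only delicate point is that the derivatives exist only almost everywhere; the finitely many grid points are a null set and do not affect the integrals. Apart from that, the main care needed is the bookkeeping of the boundary diagonal entries $r=0$ and $r=G$.
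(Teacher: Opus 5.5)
Your proof is correct. The route differs from the paper's in the order of the argument, though both rest on the same cellwise derivative computation.

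\textbf{The paper's route.} The paper works with quadratic forms. For an arbitrary $\mathbf v$ it computes $f'=(v_{i+1}-v_i)/h$ on each cell and obtains $\|f\|_{B,h}^2=\frac1h\|\mathbf D\mathbf v\|_2^2$. It then separately shows $\|f\|_{B,h}^2=\mathbf v^{\top}\mathbf K\mathbf v$. Finally it recovers the matrix identity from equality of the two quadratic forms by polarization, which requires the symmetry of $\mathbf K-\frac1h\mathbf D^{\top}\mathbf D$.

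\textbf{Your route.} You note that $\phi_r'|_{T_i}=\frac1h D_{ir}$ almost everywhere. So $\mathbf K$ is exactly $\frac1h$ times the Gram matrix of the columns of $\mathbf D$, with each cell contributing weight $h$. This gives the entrywise identity immediately, with no polarization step. The energy identity then follows from the matrix identity rather than being its source.

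\textbf{Comparison.} Your version is the shorter, standard element-assembly argument. The paper's version puts the energy identity first, which is what the rest of the paper actually uses. The final tridiagonal read-off of $\mathbf D^{\top}\mathbf D$, including the boundary diagonal entries, is the same in both.

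One small point on presentation. You cite the bilinear expansion in the proof of \Cref{prop:finite-brownian-energy}. That is not circular, because that step does not use this lemma. Still, since that proof invokes the present lemma elsewhere, it would be cleaner to write out the two-line expansion $\|f\|_{B,h}^2=\sum_{r,s}v_rv_s\int\phi_r'\phi_s'$ directly, as the paper does.
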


\begin{proof}

Let
$\mathbf v=(v_0,\ldots,v_G)^{\top}\in\mathbb R^{G+1}$
be arbitrary, and set
$f=\mathcal R(\mathbf v)$.
Fix an interval index
$i\in\{0,\ldots,G-1\}$.
For every
$t\in[t_i,t_{i+1}]$,
only the two basis functions
$\phi_i$
and
$\phi_{i+1}$
can be nonzero. Moreover, on this interval,
\begin{align}
\phi_i(t)
=
\frac{t_{i+1}-t}{h},
\qquad
\phi_{i+1}(t)
=
\frac{t-t_i}{h}.
\end{align}
Therefore,
\begin{align}
f(t)
&\stackrel{(a)}{=}
\sum_{j=0}^{G}
v_j\phi_j(t)
\stackrel{(b)}{=}
v_i\phi_i(t)
+
v_{i+1}\phi_{i+1}(t)
\nonumber\\
&\stackrel{(c)}{=}
v_i
\frac{t_{i+1}-t}{h}
+
v_{i+1}
\frac{t-t_i}{h}.
\end{align}
Here (a) is the definition of the finite-element reconstruction operator,
(b) uses the local support of the nodal basis functions, and
(c) substitutes the explicit affine formulas for
$\phi_i$
and
$\phi_{i+1}$
on
$[t_i,t_{i+1}]$.
Differentiating the preceding affine expression on the open interval
$(t_i,t_{i+1})$
gives
\begin{align}
f'(t)
&\stackrel{(a)}{=}
v_i
\left(
-\frac{1}{h}
\right)
+
v_{i+1}
\left(
\frac{1}{h}
\right)
\stackrel{(b)}{=}
\frac{v_{i+1}-v_i}{h}.
\label{eq:stiffness-local-derivative}
\end{align}
Here (a) differentiates the two affine terms, and
(b) combines them over the common denominator
$h$.
The derivative may fail to exist at the finitely many grid points, but
\eqref{eq:stiffness-local-derivative}
holds almost everywhere on
$I$,
which is sufficient for every integral below.

By the definition of the discrete Brownian seminorm,
\begin{align}
\left\|f\right\|_{B,h}^{2}
&\stackrel{(a)}{=}
\int_{-A}^{A}
\left(f'(t)\right)^2
\,\mathrm dt
\stackrel{(b)}{=}
\sum_{i=0}^{G-1}
\int_{t_i}^{t_{i+1}}
\left(f'(t)\right)^2
\,\mathrm dt
\stackrel{(c)}{=}
\sum_{i=0}^{G-1}
\int_{t_i}^{t_{i+1}}
\left(
\frac{v_{i+1}-v_i}{h}
\right)^2
\,\mathrm dt
\nonumber\\
&\stackrel{(d)}{=}
\sum_{i=0}^{G-1}
\left(
\frac{v_{i+1}-v_i}{h}
\right)^2
\int_{t_i}^{t_{i+1}}
1
\,\mathrm dt
\stackrel{(e)}{=}
\sum_{i=0}^{G-1}
\left(
\frac{v_{i+1}-v_i}{h}
\right)^2
\left(t_{i+1}-t_i\right)
\nonumber\\
&\stackrel{(f)}{=}
\sum_{i=0}^{G-1}
\left(
\frac{v_{i+1}-v_i}{h}
\right)^2
h
\stackrel{(g)}{=}
\frac{1}{h}
\sum_{i=0}^{G-1}
\left(v_{i+1}-v_i\right)^2.
\label{eq:stiffness-energy}
\end{align}
Here (a) expands the squared seminorm,
(b) partitions
$[-A,A]$
into its
$G$
mesh intervals,
(c) substitutes
\eqref{eq:stiffness-local-derivative},
(d) moves the intervalwise constant factor outside each integral,
(e) evaluates the integral of the constant function
$1$,
(f) uses the uniform-grid identity
$t_{i+1}-t_i=h$,
and
(g) simplifies
$h/h^2=1/h$.

By the componentwise definition of the first-difference operator,
\begin{align}
\left\|
\mathbf D\mathbf v
\right\|_2^2
&\stackrel{(a)}{=}
\sum_{i=0}^{G-1}
\left(
\left(
\mathbf D\mathbf v
\right)_i
\right)^2
\stackrel{(b)}{=}
\sum_{i=0}^{G-1}
\left(v_{i+1}-v_i\right)^2.
\label{eq:stiffness-difference}
\end{align}
Here (a) is the definition of the squared Euclidean norm on
$\mathbb R^G$,
and (b) uses
$(\mathbf D\mathbf v)_i=v_{i+1}-v_i$.
Combining
\eqref{eq:stiffness-energy}
and
\eqref{eq:stiffness-difference}
yields
\begin{align}
\left\|f\right\|_{B,h}^{2}
=
\frac{1}{h}
\left\|\mathbf D\mathbf v\right\|_2^2
=
\frac{1}{h}
\mathbf v^{\top}
\mathbf D^{\top}
\mathbf D
\mathbf v.
\label{eq:stiffness-energy-difference-form}
\end{align}
The last equality follows from
$\|\mathbf w\|_2^2=\mathbf w^{\top}\mathbf w$
with
$\mathbf w=\mathbf D\mathbf v$,
together with
$(\mathbf D\mathbf v)^{\top}=\mathbf v^{\top}\mathbf D^{\top}$.

We next express the same energy through the stiffness matrix.
By the entrywise definition of
$\mathbf K$,
\begin{align}
\mathbf v^{\top}\mathbf K\mathbf v
&\stackrel{(a)}{=}
\sum_{r=0}^{G}
\sum_{s=0}^{G}
v_r K_{rs}v_s
\stackrel{(b)}{=}
\sum_{r=0}^{G}
\sum_{s=0}^{G}
v_rv_s
\int_{-A}^{A}
\phi_r'(t)\phi_s'(t)
\,\mathrm dt
\nonumber\\
&\stackrel{(c)}{=}
\int_{-A}^{A}
\sum_{r=0}^{G}
\sum_{s=0}^{G}
v_rv_s
\phi_r'(t)\phi_s'(t)
\,\mathrm dt
\stackrel{(d)}{=}
\int_{-A}^{A}
\left(
\sum_{r=0}^{G}
v_r\phi_r'(t)
\right)
\left(
\sum_{s=0}^{G}
v_s\phi_s'(t)
\right)
\,\mathrm dt
\nonumber\\
&\stackrel{(e)}{=}
\int_{-A}^{A}
\left(f'(t)\right)^2
\,\mathrm dt
\stackrel{(f)}{=}
\left\|f\right\|_{B,h}^{2}.
\label{eq:stiffness-energy-matrix-form}
\end{align}
Here (a) expands the quadratic form componentwise,
(b) substitutes the definition of
$K_{rs}$,
(c) interchanges the integral with two finite sums,
(d) factors the double sum,
(e) uses
$f'=\sum_{r=0}^{G}v_r\phi_r'$
almost everywhere, and
(f) is the definition of the discrete Brownian seminorm.

Combining
\eqref{eq:stiffness-energy-difference-form}
and
\eqref{eq:stiffness-energy-matrix-form}
gives
\begin{align}
\mathbf v^{\top}
\left(
\mathbf K
-
\frac{1}{h}
\mathbf D^{\top}
\mathbf D
\right)
\mathbf v
=
0,
\qquad
\forall
\mathbf v\in\mathbb R^{G+1}.
\label{eq:stiffness-zero-quadratic-form}
\end{align}
Define
\begin{align}
\mathbf M
:=
\mathbf K
-
\frac{1}{h}
\mathbf D^{\top}
\mathbf D.
\end{align}
The matrix
$\mathbf M$
is symmetric because
$K_{rs}=K_{sr}$
and
$(\mathbf D^{\top}\mathbf D)^{\top}
=
\mathbf D^{\top}\mathbf D$.
Let
$\mathbf x,\mathbf y\in\mathbb R^{G+1}$
be arbitrary. Then
\begin{align}
2\mathbf x^{\top}\mathbf M\mathbf y
&\stackrel{(a)}{=}
(\mathbf x+\mathbf y)^{\top}
\mathbf M
(\mathbf x+\mathbf y)
-
\mathbf x^{\top}\mathbf M\mathbf x
-
\mathbf y^{\top}\mathbf M\mathbf y
\stackrel{(b)}{=}
0.
\end{align}
Here (a) expands the first quadratic form and uses the symmetry identity
$\mathbf y^{\top}\mathbf M\mathbf x
=
\mathbf x^{\top}\mathbf M\mathbf y$,
while (b) applies
\eqref{eq:stiffness-zero-quadratic-form}
to
$\mathbf x+\mathbf y$,
$\mathbf x$,
and
$\mathbf y$.
Thus
$\mathbf x^{\top}\mathbf M\mathbf y=0$
for all
$\mathbf x,\mathbf y$.
Taking
$\mathbf x=\mathbf e_r$
and
$\mathbf y=\mathbf e_s$
gives
\begin{align}
M_{rs}
&\stackrel{(a)}{=}
\mathbf e_r^{\top}
\mathbf M
\mathbf e_s
\stackrel{(b)}{=}
0,
\qquad
r,s=0,\ldots,G.
\end{align}
Here (a) extracts the
$(r,s)$
entry of
$\mathbf M$,
and (b) uses the preceding bilinear identity.
Therefore
$\mathbf M=\mathbf0$,
and hence
\begin{align}
\mathbf K
=
\frac{1}{h}
\mathbf D^{\top}
\mathbf D.
\label{eq:stiffness-factorization}
\end{align}

It remains to expand
$\mathbf D^{\top}\mathbf D$.
The entries of
$\mathbf D$
are
\begin{align}
D_{ir}
=
\begin{cases}
-1,
&
 r=i,
\\
1,
&
 r=i+1,
\\
0,
&
\text{otherwise},
\end{cases}
\qquad
i=0,\ldots,G-1,
\quad
r=0,\ldots,G.
\end{align}
Consequently,
\begin{align}
\left(
\mathbf D^{\top}\mathbf D
\right)_{rs}
&\stackrel{(a)}{=}
\sum_{i=0}^{G-1}
D_{ir}D_{is}
\stackrel{(b)}{=}
\begin{cases}
1,
&
r=s\in\{0,G\},
\\
2,
&
r=s\in\{1,\ldots,G-1\},
\\
-1,
&
|r-s|=1,
\\
0,
&
|r-s|\geq2.
\end{cases}
\label{eq:stiffness-entrywise-product}
\end{align}
Here (a) is the entrywise formula for a matrix product.
For (b), an endpoint column of
$\mathbf D$
contains one nonzero entry of magnitude one, an interior column contains two
nonzero entries of magnitude one, adjacent columns overlap in exactly one row
with product
$-1$,
and nonadjacent columns have disjoint supports.
Thus
\begin{align}
\mathbf D^{\top}\mathbf D
=
\begin{pmatrix}
1 & -1 & 0 & \cdots & 0 \\
-1 & 2 & -1 & \ddots & \vdots \\
0 & -1 & 2 & \ddots & 0 \\
\vdots & \ddots & \ddots & \ddots & -1 \\
0 & \cdots & 0 & -1 & 1
\end{pmatrix}.
\end{align}
Combining this identity with
\eqref{eq:stiffness-factorization}
proves the matrix formula, while
\eqref{eq:stiffness-energy}
and
\eqref{eq:stiffness-difference}
prove the asserted energy identities.

\end{proof}

\begin{lemma}[Kernel of the stiffness matrix]
\label{lem:kernel-stiffness}

The stiffness matrix satisfies
\begin{align}
\ker
\left(
\mathbf K
\right)
=
\operatorname{span}
\left(
\mathbf1
\right),
\end{align}
where
\begin{align}
\mathbf1
:=
\left(
1,\ldots,1
\right)^{\top}
\in
\mathbb R^{G+1}.
\end{align}

\end{lemma}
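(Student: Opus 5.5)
The plan is to use the factorization $\mathbf K=\frac1h\mathbf D^{\top}\mathbf D$ from \Cref{lem:stiffness-matrix}, together with $h>0$, to reduce the claim to $\ker(\mathbf D^{\top}\mathbf D)=\ker\mathbf D$. I will then compute $\ker\mathbf D$ directly from the first-difference structure, proving the two inclusions separately.

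First, the inclusion $\operatorname{span}(\mathbf 1)\subseteq\ker\mathbf K$. For $\mathbf v=c\mathbf 1$, every component satisfies $(\mathbf D\mathbf v)_i=c-c=0$. Hence $\mathbf D\mathbf v=\mathbf 0$, and therefore $\mathbf K\mathbf v=\frac1h\mathbf D^{\top}(\mathbf D\mathbf v)=\mathbf 0$. Alternatively, this can be read off the explicit tridiagonal form of $\mathbf K$, whose row sums all vanish.

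Second, the reverse inclusion. Let $\mathbf K\mathbf v=\mathbf 0$. Then
\begin{align}
0=\mathbf v^{\top}\mathbf K\mathbf v=\frac1h\|\mathbf D\mathbf v\|_2^2 ,
\end{align}
and multiplying by $h>0$ gives $\mathbf D\mathbf v=\mathbf 0$. Componentwise this says $v_{i+1}=v_i$ for $i=0,\ldots,G-1$. Induction on the index then gives $v_j=v_0$ for all $j$, so $\mathbf v=v_0\mathbf 1\in\operatorname{span}(\mathbf 1)$. This is exactly the argument already carried out for $\mathbf z$ in the injectivity part of the proof of \Cref{thm:equivalence-of-implementations}, so I can mirror those steps.

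No step is a real obstacle. The only point needing care is the move from $\mathbf K\mathbf v=\mathbf 0$ to $\mathbf D\mathbf v=\mathbf 0$, which requires the quadratic-form identity rather than just a rank argument. Since that identity is supplied by \Cref{lem:stiffness-matrix}, the lemma follows in a few lines. For completeness, $\mathbf 1\neq\mathbf 0$, so $\ker\mathbf K$ is exactly one-dimensional.
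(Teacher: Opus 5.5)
Your proposal is correct and follows essentially the same route as the paper. Both use $\mathbf K=\frac1h\mathbf D^{\top}\mathbf D$ with $h>0$ and the identity $\mathbf v^{\top}\mathbf K\mathbf v=\frac1h\|\mathbf D\mathbf v\|_2^2$ to reduce to $\ker\mathbf D$, then get $\ker\mathbf D=\operatorname{span}(\mathbf 1)$ by the induction $v_{i+1}=v_i$; the paper merely states $\ker\mathbf K=\ker\mathbf D$ as a separate intermediate equality.
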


\begin{proof}

By Lemma~\ref{lem:stiffness-matrix}, the stiffness matrix admits the
factorization
\begin{align}
\mathbf K
=
\frac{1}{h}
\mathbf D^{\top}
\mathbf D.
\label{eq:kernel-factorization}
\end{align}
We first prove that
\begin{align}
\ker(\mathbf K)
=
\ker(\mathbf D).
\label{eq:kernel-equal-nullspaces}
\end{align}

Let
$\mathbf v\in\ker(\mathbf K)$
be arbitrary. By the definition of the kernel,
\begin{align}
\mathbf K\mathbf v
=
\mathbf 0.
\label{eq:kernel-Kv-zero}
\end{align}
Multiplying this identity from the left by
$\mathbf v^{\top}$
and using
\eqref{eq:kernel-factorization}
gives
\begin{align}
0
&\stackrel{(a)}{=}
\mathbf v^{\top}\mathbf 0
\stackrel{(b)}{=}
\mathbf v^{\top}
\left(
\mathbf K\mathbf v
\right)
\stackrel{(c)}{=}
\mathbf v^{\top}
\mathbf K
\mathbf v
\nonumber\\
&\stackrel{(d)}{=}
\frac{1}{h}
\mathbf v^{\top}
\mathbf D^{\top}
\mathbf D
\mathbf v
\stackrel{(e)}{=}
\frac{1}{h}
\left(
\mathbf D\mathbf v
\right)^{\top}
\left(
\mathbf D\mathbf v
\right)
\stackrel{(f)}{=}
\frac{1}{h}
\left\|
\mathbf D\mathbf v
\right\|_2^2.
\label{eq:kernel-zero-energy}
\end{align}
Here (a) uses the identity
$\mathbf v^{\top}\mathbf0=0$,
(b) substitutes
\eqref{eq:kernel-Kv-zero},
(c) uses the associativity of matrix multiplication,
(d) substitutes the factorization
\eqref{eq:kernel-factorization},
(e) uses
$\mathbf v^{\top}\mathbf D^{\top}
=(\mathbf D\mathbf v)^{\top}$,
and (f) applies the Euclidean identity
$\mathbf w^{\top}\mathbf w=\|\mathbf w\|_2^2$
with
$\mathbf w=\mathbf D\mathbf v$.
Since
$A>0$
and
$G\geq1$,
the mesh size
$h=2A/G$
satisfies
$h>0$.
Multiplying
\eqref{eq:kernel-zero-energy}
by
$h$
therefore yields
\begin{align}
\left\|
\mathbf D\mathbf v
\right\|_2^2
=
0.
\label{eq:kernel-Dv-zero-norm}
\end{align}
By the componentwise definition of the Euclidean norm,
\begin{align}
0
&\stackrel{(a)}{=}
\left\|
\mathbf D\mathbf v
\right\|_2^2
\stackrel{(b)}{=}
\sum_{i=0}^{G-1}
\left(
(\mathbf D\mathbf v)_i
\right)^2.
\label{eq:kernel-sum-of-squares}
\end{align}
Here (a) is
\eqref{eq:kernel-Dv-zero-norm},
and (b) is the definition of the squared Euclidean norm on
$\mathbb R^G$.
Fix
$i\in\{0,\ldots,G-1\}$.
Since every real square is nonnegative, we have
\begin{align}
0
&\stackrel{(a)}{\leq}
\left(
(\mathbf D\mathbf v)_i
\right)^2
\stackrel{(b)}{\leq}
\sum_{r=0}^{G-1}
\left(
(\mathbf D\mathbf v)_r
\right)^2
\stackrel{(c)}{=}
0.
\label{eq:kernel-component-sandwich}
\end{align}
Here (a) uses the nonnegativity of the square of a real number,
(b) uses the fact that the sum contains the displayed term and that all of its
remaining terms are nonnegative, and
(c) applies
\eqref{eq:kernel-sum-of-squares}.
The two inequalities in
\eqref{eq:kernel-component-sandwich}
force
$((\mathbf D\mathbf v)_i)^2=0$.
A real number has square zero if and only if the number itself is zero, so
\begin{align}
(\mathbf D\mathbf v)_i
=
0.
\end{align}
Since
$i$
was arbitrary, every component of
$\mathbf D\mathbf v$
is zero. It follows that
\begin{align}
\mathbf D\mathbf v
=
\mathbf0.
\label{eq:kernel-Dv-zero}
\end{align}
Thus
$\mathbf v\in\ker(\mathbf D)$,
and consequently
\begin{align}
\ker(\mathbf K)
\subseteq
\ker(\mathbf D).
\label{eq:kernel-first-inclusion}
\end{align}

Conversely, let
$\mathbf v\in\ker(\mathbf D)$
be arbitrary. Then
\begin{align}
\mathbf D\mathbf v
=
\mathbf0.
\label{eq:kernel-Dv-zero-converse}
\end{align}
Using
\eqref{eq:kernel-factorization},
we obtain
\begin{align}
\mathbf K\mathbf v
&\stackrel{(a)}{=}
\frac{1}{h}
\mathbf D^{\top}
\mathbf D
\mathbf v
\stackrel{(b)}{=}
\frac{1}{h}
\mathbf D^{\top}
\left(
\mathbf D\mathbf v
\right)
\stackrel{(c)}{=}
\frac{1}{h}
\mathbf D^{\top}
\mathbf0
\stackrel{(d)}{=}
\mathbf0.
\label{eq:kernel-Kv-zero-converse}
\end{align}
Here (a) substitutes
\eqref{eq:kernel-factorization},
(b) uses the associativity of matrix multiplication,
(c) substitutes
\eqref{eq:kernel-Dv-zero-converse},
and (d) uses
$\mathbf D^{\top}\mathbf0=\mathbf0$
and
$(1/h)\mathbf0=\mathbf0$.
Therefore
$\mathbf v\in\ker(\mathbf K)$,
and hence
\begin{align}
\ker(\mathbf D)
\subseteq
\ker(\mathbf K).
\label{eq:kernel-second-inclusion}
\end{align}
Combining
\eqref{eq:kernel-first-inclusion}
and
\eqref{eq:kernel-second-inclusion}
proves
\eqref{eq:kernel-equal-nullspaces}.

It remains to characterize
$\ker(\mathbf D)$.
Let
$\mathbf v=(v_0,\ldots,v_G)^{\top}\in\ker(\mathbf D)$.
Then
\begin{align}
\mathbf D\mathbf v
=
\mathbf0.
\label{eq:kernel-D-characterization-start}
\end{align}
For every
$i\in\{0,\ldots,G-1\}$,
we therefore have
\begin{align}
0
&\stackrel{(a)}{=}
(\mathbf D\mathbf v)_i
\stackrel{(b)}{=}
 v_{i+1}-v_i.
\label{eq:kernel-adjacent-equality}
\end{align}
Here (a) takes the
$i$th component of
\eqref{eq:kernel-D-characterization-start},
and (b) uses the definition of the first-difference operator.
Rearranging
\eqref{eq:kernel-adjacent-equality}
gives
\begin{align}
v_{i+1}
=
v_i,
\qquad
 i=0,\ldots,G-1.
\label{eq:kernel-equal-neighbors}
\end{align}
We now show by induction that
\begin{align}
v_j
=
v_0,
\qquad
j=0,\ldots,G.
\label{eq:kernel-all-components-equal}
\end{align}
The statement is immediate for
$j=0$.
Assume that it holds for some
$j\in\{0,\ldots,G-1\}$,
so that
$v_j=v_0$.
Applying
\eqref{eq:kernel-equal-neighbors}
with
$i=j$
gives
\begin{align}
v_{j+1}
&\stackrel{(a)}{=}
v_j
\stackrel{(b)}{=}
v_0.
\end{align}
Here (a) is
\eqref{eq:kernel-equal-neighbors},
and (b) is the induction hypothesis.
Thus
\eqref{eq:kernel-all-components-equal}
holds for every
$j=0,\ldots,G$.
Setting
$c:=v_0$
and recalling that
$\mathbf1=(1,\ldots,1)^{\top}\in\mathbb R^{G+1}$,
we obtain
\begin{align}
\mathbf v
&\stackrel{(a)}{=}
\begin{pmatrix}
v_0\\
\vdots\\
v_G
\end{pmatrix}
\stackrel{(b)}{=}
\begin{pmatrix}
c\\
\vdots\\
c
\end{pmatrix}
\stackrel{(c)}{=}
c\mathbf1.
\label{eq:kernel-vector-is-constant}
\end{align}
Here (a) writes
$\mathbf v$
componentwise,
(b) uses
\eqref{eq:kernel-all-components-equal},
and (c) factors out the common scalar
$c$.
Therefore every vector in
$\ker(\mathbf D)$
belongs to
$\operatorname{span}(\mathbf1)$, and hence
\begin{align}
\ker(\mathbf D)
\subseteq
\operatorname{span}(\mathbf1).
\label{eq:kernel-D-subset-constants}
\end{align}

For the reverse inclusion, let
$\mathbf v\in\operatorname{span}(\mathbf1)$.
By the definition of the linear span, there exists
$c\in\mathbb R$
such that
\begin{align}
\mathbf v
=
c\mathbf1.
\label{eq:kernel-constant-vector}
\end{align}
For every
$i\in\{0,\ldots,G-1\}$,
the corresponding component of
$\mathbf D\mathbf v$
satisfies
\begin{align}
(\mathbf D\mathbf v)_i
&\stackrel{(a)}{=}
v_{i+1}-v_i
\stackrel{(b)}{=}
c-c
\stackrel{(c)}{=}
0.
\end{align}
Here (a) uses the definition of
$\mathbf D$,
(b) uses
\eqref{eq:kernel-constant-vector},
and (c) simplifies the scalar difference.
Thus every component of
$\mathbf D\mathbf v$
is zero, so
\begin{align}
\mathbf D\mathbf v
=
\mathbf0.
\end{align}
Therefore
$\mathbf v\in\ker(\mathbf D)$,
which proves
\begin{align}
\operatorname{span}(\mathbf1)
\subseteq
\ker(\mathbf D).
\label{eq:kernel-constants-subset-D}
\end{align}
Combining
\eqref{eq:kernel-D-subset-constants}
and
\eqref{eq:kernel-constants-subset-D}
yields
\begin{align}
\ker(\mathbf D)
=
\operatorname{span}(\mathbf1).
\label{eq:kernel-D-final}
\end{align}
Finally, combining
\eqref{eq:kernel-equal-nullspaces}
and
\eqref{eq:kernel-D-final}
gives
\begin{align}
\ker(\mathbf K)
=
\ker(\mathbf D)
=
\operatorname{span}(\mathbf1),
\end{align}
which proves the claim.

\end{proof}

\begin{lemma}[Block decomposition]
\label{lem:block-decomposition}

Assume that $G=2m$, so that the anchor is located at
$t_{i_0}=0$ with $i_0=m$, and use the reduced anchored ordering in
\eqref{eq:reduced-anchored-ordering}. Then the anchored stiffness matrix
satisfies
\begin{align}
\mathbf K_0
=
\frac{1}{h}
\begin{pmatrix}
\mathbf T_m
&
\mathbf 0
\\
\mathbf 0
&
\mathbf T_m
\end{pmatrix},
\label{eq:block-decomposition-claim}
\end{align}
where
$\mathbf T_m=(\tau_{rs})_{r,s=0}^{m-1}\in\mathbb R^{m\times m}$
is defined entrywise by
\begin{align}
\tau_{rs}
:=
\begin{cases}
1,
&
 r=s=0,
\\
2,
&
 r=s\in\{1,\ldots,m-1\},
\\
-1,
&
 |r-s|=1,
\\
0,
&
 \text{otherwise}.
\end{cases}
\label{eq:Tm-entrywise}
\end{align}
Equivalently, for $m\geq2$,
\begin{align}
\mathbf T_m
=
\begin{pmatrix}
1 & -1 & 0 & \cdots & 0 \\
-1 & 2 & -1 & \ddots & \vdots \\
0 & -1 & 2 & \ddots & 0 \\
\vdots & \ddots & \ddots & \ddots & -1 \\
0 & \cdots & 0 & -1 & 2
\end{pmatrix},
\end{align}
whereas $\mathbf T_1=(1)$.

\end{lemma}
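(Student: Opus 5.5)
I will compute $\mathbf K_0=\mathbf R^{\top}\mathbf K\mathbf R$ entrywise, using the explicit tridiagonal form of $\mathbf K$ from Lemma~\ref{lem:stiffness-matrix}. Since the columns of $\mathbf R$ are canonical vectors, the $(p,q)$ entry of $\mathbf K_0$ is $K_{\sigma(p)\sigma(q)}$. Here $\sigma(p)$ denotes the nodal index placed at reduced position $p$: $\sigma(p)=p$ for $0\le p\le m-1$, and $\sigma(m+p)=2m-p$ for $0\le p\le m-1$. Thus $\mathbf K_0$ is the principal submatrix of $\mathbf K$ obtained by deleting row and column $i_0=m$ and reordering the remaining indices by $\sigma$.

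First, I will show that the off-diagonal blocks vanish. For $p,q\le m-1$ the indices satisfy $\sigma(p)\le m-1$ and $\sigma(m+q)\ge m+1$. Hence $|\sigma(p)-\sigma(m+q)|\ge2$, and $K$ is zero there by tridiagonality. This is the step where removing the anchor severs the two half-grids.

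Next, I will treat the left block. Its entries are $K_{pq}$ with $0\le p,q\le m-1$. The matrix $h\mathbf K$ has diagonal entry $1$ at index $0$ and $2$ at interior indices $1,\dots,m-1$. This holds because $m-1\le G-1$ and index $G$ is excluded; for $m=1$ only the index $0$ occurs, so the block is $(1)$. Off-diagonal entries are $-1$ for adjacent indices and $0$ otherwise. This reproduces $\tau_{pq}$ exactly.

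Then I will treat the right block. Its entries are $K_{2m-p,2m-q}$, and $|\sigma(m+p)-\sigma(m+q)|=|p-q|$, so adjacency is preserved. The diagonal entry at $p=0$ is $hK_{GG}=1$, an endpoint. At $1\le p\le m-1$ it is $hK_{2m-p,2m-p}=2$, since $m+1\le 2m-p\le 2m-1$ is interior. So this block is also $\mathbf T_m$. Multiplying by $1/h$ gives \eqref{eq:block-decomposition-claim}.

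The only delicate point is the index bookkeeping for the reversed right half and the edge case $m=1$. I will handle it by checking the endpoint diagonals explicitly, as above. One subtlety to note: the entry nearest the anchor, $2$, is correct because the anchor node's neighbor is an interior node of the full grid, even though its coupling to the anchor has been deleted.
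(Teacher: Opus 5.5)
Your proof is correct, but it takes a different route from the paper. You read $\mathbf K_0=\mathbf R^{\top}\mathbf K\mathbf R$ directly as a principal submatrix of the explicit tridiagonal $\mathbf K$: the anchor row and column are deleted and the remaining indices are reordered by $\sigma$. The off-diagonal blocks vanish because $|\sigma(p)-\sigma(m+q)|\ge 2$. The right block equals $\mathbf T_m/h$ because the reversal preserves adjacency ($|\sigma(m+p)-\sigma(m+q)|=|p-q|$) and places the endpoint $G$ at the first position of that block.

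The paper instead works one level down, at the factor. It shows $\mathbf D\mathbf R=\operatorname{blkdiag}(\mathbf E_m,-\mathbf J_m\mathbf E_m)$, where $\mathbf E_m$ is a square half-grid difference matrix whose last row $-x_{m-1}$ records the increment into the anchor, and $\mathbf J_m$ is the reversal. It then obtains $\mathbf K_0=\frac1h(\mathbf D\mathbf R)^{\top}(\mathbf D\mathbf R)$ from $\mathbf J_m^{\top}\mathbf J_m=\mathbf I_m$ and $\mathbf E_m^{\top}\mathbf E_m=\mathbf T_m$.

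Your argument is shorter and needs only the tridiagonal pattern of $\mathbf K$ plus index bookkeeping. It also makes the $m=1$ case and the diagonal entry $2$ next to the anchor fully transparent. The paper's route costs two auxiliary matrices but yields more. It shows that the increment map $\mathbf D_0$ itself decouples across the two half-grids, not just the energy. It also exhibits $\mathbf T_m$ as the Gram matrix $\mathbf E_m^{\top}\mathbf E_m$ of an invertible bidiagonal matrix, which would make positive definiteness of each block immediate and fits the $\mathbf D_0^{\top}\mathbf D_0=h\mathbf K_0$ viewpoint used later.
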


\begin{proof}

Let
$\widetilde{\mathbf v}\in\mathbb R^{2m}$
be arbitrary, and partition it as
\begin{align}
\widetilde{\mathbf v}
=
\begin{pmatrix}
\mathbf a
\\
\mathbf b
\end{pmatrix},
\qquad
\mathbf a
=
(a_0,\ldots,a_{m-1})^{\top},
\qquad
\mathbf b
=
(b_0,\ldots,b_{m-1})^{\top}.
\label{eq:block-partition-reduced-vector}
\end{align}
Set
\begin{align}
\mathbf v
:=
\mathbf R\widetilde{\mathbf v}
\in
\mathbb R^{2m+1}.
\end{align}
By the reduced-coordinate ordering in
\eqref{eq:reduced-anchored-ordering}, the components of the three vectors
satisfy
\begin{align}
a_j
=
\widetilde v_j
=
v_j,
\qquad
b_j
=
\widetilde v_{m+j}
=
v_{2m-j},
\qquad
j=0,\ldots,m-1,
\label{eq:block-coordinate-identification}
\end{align}
and the inserted anchored component satisfies
\begin{align}
v_m
=
0.
\label{eq:block-anchor-zero}
\end{align}

Define
$\mathbf E_m=(E_{rs})_{r,s=0}^{m-1}\in\mathbb R^{m\times m}$
entrywise by
\begin{align}
E_{rs}
:=
\begin{cases}
-1,
&
 s=r,
\\
1,
&
 s=r+1
\text{ and }
r\in\{0,\ldots,m-2\},
\\
0,
&
 \text{otherwise},
\end{cases}
\label{eq:block-half-difference-matrix}
\end{align}
and let
$\mathbf J_m=(J_{rs})_{r,s=0}^{m-1}$
denote the reversal permutation matrix defined by
\begin{align}
J_{rs}
:=
\begin{cases}
1,
&
 r+s=m-1,
\\
0,
&
 \text{otherwise}.
\end{cases}
\label{eq:block-reversal-matrix}
\end{align}
The definition of
$\mathbf E_m$
gives, for every
$\mathbf x=(x_0,\ldots,x_{m-1})^{\top}\in\mathbb R^m$,
\begin{align}
(\mathbf E_m\mathbf x)_r
=
\begin{cases}
x_{r+1}-x_r,
&
 r=0,\ldots,m-2,
\\
-x_{m-1},
&
 r=m-1.
\end{cases}
\label{eq:block-E-action}
\end{align}

We first compute the action of
$\mathbf D\mathbf R$
on the left half-grid. For
$i=0,\ldots,m-1$,
\begin{align}
(\mathbf D\mathbf R\widetilde{\mathbf v})_i
&\stackrel{(a)}{=}
(\mathbf D\mathbf v)_i
\stackrel{(b)}{=}
v_{i+1}-v_i
\nonumber\\
&\stackrel{(c)}{=}
\begin{cases}
a_{i+1}-a_i,
&
 i=0,\ldots,m-2,
\\
-a_{m-1},
&
 i=m-1,
\end{cases}
\stackrel{(d)}{=}
(\mathbf E_m\mathbf a)_i.
\label{eq:block-left-action}
\end{align}
Here (a) uses
$\mathbf v=\mathbf R\widetilde{\mathbf v}$,
(b) uses the componentwise definition of
$\mathbf D$,
(c) uses
\eqref{eq:block-coordinate-identification}
and
\eqref{eq:block-anchor-zero},
and (d) follows from
\eqref{eq:block-E-action}.

We next compute the right-half action. By
\eqref{eq:block-reversal-matrix}, for every
$s=0,\ldots,m-1$,
\begin{align}
(-\mathbf J_m\mathbf E_m\mathbf b)_s
&\stackrel{(a)}{=}
-(\mathbf E_m\mathbf b)_{m-1-s}
\nonumber\\
&\stackrel{(b)}{=}
\begin{cases}
b_{m-1},
&
 s=0,
\\
b_{m-s-1}-b_{m-s},
&
 s=1,\ldots,m-1.
\end{cases}
\label{eq:block-reversed-E-action}
\end{align}
Here (a) uses the fact that
$\mathbf J_m$
reverses the order of the components, and (b) applies
\eqref{eq:block-E-action}: when
$s=0$, the index is
$m-1$, whereas when
$s\geq1$, the index
$m-1-s$
belongs to
$\{0,\ldots,m-2\}$.
On the other hand, for
$s=0,\ldots,m-1$,
\begin{align}
(\mathbf D\mathbf R\widetilde{\mathbf v})_{m+s}
&\stackrel{(a)}{=}
(\mathbf D\mathbf v)_{m+s}
\stackrel{(b)}{=}
v_{m+s+1}-v_{m+s}
\nonumber\\
&\stackrel{(c)}{=}
\begin{cases}
b_{m-1},
&
 s=0,
\\
b_{m-s-1}-b_{m-s},
&
 s=1,\ldots,m-1,
\end{cases}
\stackrel{(d)}{=}
(-\mathbf J_m\mathbf E_m\mathbf b)_s.
\label{eq:block-right-action}
\end{align}
Here (a) again uses
$\mathbf v=\mathbf R\widetilde{\mathbf v}$,
(b) uses the definition of
$\mathbf D$,
(c) uses
$v_m=0$
and the identity
$v_{m+r}=b_{m-r}$
for
$r=1,\ldots,m$,
which follows from
\eqref{eq:block-coordinate-identification},
and (d) invokes
\eqref{eq:block-reversed-E-action}.

Combining
\eqref{eq:block-left-action}
and
\eqref{eq:block-right-action}
yields
\begin{align}
\mathbf D\mathbf R\widetilde{\mathbf v}
&\stackrel{(a)}{=}
\begin{pmatrix}
\mathbf E_m\mathbf a
\\
-\mathbf J_m\mathbf E_m\mathbf b
\end{pmatrix}
\stackrel{(b)}{=}
\begin{pmatrix}
\mathbf E_m
&
\mathbf0
\\
\mathbf0
&
-\mathbf J_m\mathbf E_m
\end{pmatrix}
\begin{pmatrix}
\mathbf a
\\
\mathbf b
\end{pmatrix}
\nonumber\\
&\stackrel{(c)}{=}
\begin{pmatrix}
\mathbf E_m
&
\mathbf0
\\
\mathbf0
&
-\mathbf J_m\mathbf E_m
\end{pmatrix}
\widetilde{\mathbf v}.
\label{eq:block-DR-action}
\end{align}
Here (a) concatenates the first
$m$
components from
\eqref{eq:block-left-action}
and the last
$m$
components from
\eqref{eq:block-right-action},
(b) is block-matrix multiplication, and (c) uses
\eqref{eq:block-partition-reduced-vector}.
Because
$\widetilde{\mathbf v}\in\mathbb R^{2m}$
was arbitrary,
\eqref{eq:block-DR-action}
implies the matrix identity
\begin{align}
\mathbf D\mathbf R
=
\begin{pmatrix}
\mathbf E_m
&
\mathbf0
\\
\mathbf0
&
-\mathbf J_m\mathbf E_m
\end{pmatrix}.
\label{eq:block-DR-matrix}
\end{align}

The reversal matrix is orthogonal. Indeed, for every
$r,s\in\{0,\ldots,m-1\}$,
\begin{align}
(\mathbf J_m^{\top}\mathbf J_m)_{rs}
&\stackrel{(a)}{=}
\sum_{q=0}^{m-1}
J_{qr}J_{qs}
\stackrel{(b)}{=}
\begin{cases}
1,
&
 r=s,
\\
0,
&
 r\neq s,
\end{cases}
\stackrel{(c)}{=}
(\mathbf I_m)_{rs}.
\label{eq:block-J-orthogonal-entrywise}
\end{align}
Here (a) is the componentwise formula for a matrix product. For (b), the
factor
$J_{qr}$
is nonzero only for
$q=m-1-r$,
whereas
$J_{qs}$
is nonzero only for
$q=m-1-s$;
these two row indices coincide exactly when
$r=s$.
Finally, (c) is the entrywise definition of the identity matrix. Since the
entries agree for every
$r,s$,
\begin{align}
\mathbf J_m^{\top}\mathbf J_m
=
\mathbf I_m.
\label{eq:block-J-orthogonal}
\end{align}

We now apply the stiffness factorization from
Lemma~\ref{lem:stiffness-matrix}. The anchored stiffness matrix satisfies
\begin{align}
\mathbf K_0
&\stackrel{(a)}{=}
\mathbf R^{\top}\mathbf K\mathbf R
\stackrel{(b)}{=}
\frac1h
\mathbf R^{\top}\mathbf D^{\top}\mathbf D\mathbf R
\stackrel{(c)}{=}
\frac1h
(\mathbf D\mathbf R)^{\top}
(\mathbf D\mathbf R)
\nonumber\\
&\stackrel{(d)}{=}
\frac1h
\begin{pmatrix}
\mathbf E_m^{\top}
&
\mathbf0
\\
\mathbf0
&
-\mathbf E_m^{\top}\mathbf J_m^{\top}
\end{pmatrix}
\begin{pmatrix}
\mathbf E_m
&
\mathbf0
\\
\mathbf0
&
-\mathbf J_m\mathbf E_m
\end{pmatrix}
\nonumber\\
&\stackrel{(e)}{=}
\frac1h
\begin{pmatrix}
\mathbf E_m^{\top}\mathbf E_m
&
\mathbf0
\\
\mathbf0
&
\mathbf E_m^{\top}
\mathbf J_m^{\top}\mathbf J_m
\mathbf E_m
\end{pmatrix}
\stackrel{(f)}{=}
\frac1h
\begin{pmatrix}
\mathbf E_m^{\top}\mathbf E_m
&
\mathbf0
\\
\mathbf0
&
\mathbf E_m^{\top}\mathbf E_m
\end{pmatrix}.
\label{eq:block-K0-factorized}
\end{align}
Here (a) is the definition of
$\mathbf K_0$,
(b) substitutes
$\mathbf K=\frac1h\mathbf D^{\top}\mathbf D$
from
Lemma~\ref{lem:stiffness-matrix},
(c) uses
$(\mathbf D\mathbf R)^{\top}
=
\mathbf R^{\top}\mathbf D^{\top}$,
(d) substitutes
\eqref{eq:block-DR-matrix}
and uses
$(-\mathbf J_m\mathbf E_m)^{\top}
=-\mathbf E_m^{\top}\mathbf J_m^{\top}$,
(e) performs the block-matrix multiplication, and (f) uses
\eqref{eq:block-J-orthogonal}.

It remains to identify
$\mathbf E_m^{\top}\mathbf E_m$.
For
$r,s\in\{0,\ldots,m-1\}$,
\begin{align}
(\mathbf E_m^{\top}\mathbf E_m)_{rs}
&\stackrel{(a)}{=}
\sum_{q=0}^{m-1}
E_{qr}E_{qs}
\stackrel{(b)}{=}
\begin{cases}
1,
&
 r=s=0,
\\
2,
&
 r=s\in\{1,\ldots,m-1\},
\\
-1,
&
 |r-s|=1,
\\
0,
&
 \text{otherwise},
\end{cases}
\stackrel{(c)}{=}
\tau_{rs}.
\label{eq:block-EtE-entrywise}
\end{align}
Here (a) is the componentwise formula for matrix multiplication. To justify
(b), observe from
\eqref{eq:block-half-difference-matrix}
that column zero has the single nonzero entry
$E_{00}=-1$,
so its squared Euclidean norm is one. Every column
$r\in\{1,\ldots,m-1\}$
has exactly the two nonzero entries
$E_{r-1,r}=1$
and
$E_{rr}=-1$,
so its squared Euclidean norm is
$1^2+(-1)^2=2$.
Two consecutive columns have exactly one common nonzero row: if
$s=r+1$, their product in row
$r$
is
$E_{rr}E_{r,r+1}=(-1)(1)=-1$,
and the symmetric case
$r=s+1$
is identical. Columns whose indices differ by at least two have no common
nonzero row, so their inner product is zero. Finally, (c) uses the definition
of
$\tau_{rs}$
in
\eqref{eq:Tm-entrywise}.
Since the entries agree for all
$r,s$,
\begin{align}
\mathbf E_m^{\top}\mathbf E_m
=
\mathbf T_m.
\label{eq:block-EtE-T}
\end{align}
Substituting
\eqref{eq:block-EtE-T}
into
\eqref{eq:block-K0-factorized}
gives
\begin{align}
\mathbf K_0
=
\frac1h
\begin{pmatrix}
\mathbf T_m
&
\mathbf0
\\
\mathbf0
&
\mathbf T_m
\end{pmatrix},
\end{align}
which is exactly
\eqref{eq:block-decomposition-claim}
and proves the lemma.

\end{proof}

\begin{lemma}[Brownian energy in increment coordinates]
\label{lem:increment-energy}

Let
$f\in\mathcal H_h^0$,
and let
$\widetilde{\mathbf v}\in\mathbb R^G$
be its reduced anchored nodal coefficient vector. Define the corresponding full
anchored nodal vector and increment vector by
\begin{align}
\mathbf v
&:=
\mathbf R\widetilde{\mathbf v}
\in
\mathbb R^{G+1},
\label{eq:increment-energy-full-vector}
\\
\mathbf{w}
&:=
\mathbf D\mathbf v
=
\mathbf D\mathbf R\widetilde{\mathbf v}
=
(w_0,\ldots,w_{G-1})^{\top}
\in
\mathbb R^G,
\label{eq:increment-energy-increment-vector}
\end{align}
so that
$f=\mathcal R(\mathbf v)
=\mathcal R\left(\mathbf R\widetilde{\mathbf v}\right)$.
Then
\begin{align}
\left\|f\right\|_{B,h}^{2}
=
\frac{1}{h}
\left\|\mathbf{w}\right\|_2^{2}
=
\frac{1}{h}
\sum_{i=0}^{G-1}
w_i^{2}.
\label{eq:increment-energy-claim}
\end{align}

\end{lemma}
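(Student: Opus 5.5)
The plan is to reduce \Cref{lem:increment-energy} directly to \Cref{prop:finite-brownian-energy} (equivalently \Cref{lem:stiffness-matrix}). That result is proved for every nodal vector in $\mathbb R^{G+1}$, so the anchored case needs no separate treatment.

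First, set $\mathbf v=\mathbf R\widetilde{\mathbf v}$. The reduced ordering in \Cref{def:reduced-anchored-coordinates} shows that $\mathbf v$ is exactly the unique nodal vector of $f$: its anchor entry vanishes because the anchor row of $\mathbf R$ is zero, and every other entry is placed at its own nodal index. Hence $f=\mathcal R(\mathbf v)\in\mathcal H_h$, and \Cref{prop:finite-brownian-energy} applies, giving
\begin{align}
\|f\|_{B,h}^2=\mathbf v^{\top}\mathbf K\mathbf v=\frac1h\|\mathbf D\mathbf v\|_2^2 .
\end{align}

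Second, substitute $\mathbf w=\mathbf D\mathbf v=\mathbf D\mathbf R\widetilde{\mathbf v}=\mathbf D_0\widetilde{\mathbf v}$ from \eqref{eq:increment-energy-increment-vector}. Expanding the squared Euclidean norm componentwise then yields $\frac1h\sum_{i=0}^{G-1}w_i^2$, which is \eqref{eq:increment-energy-claim}. As a cross-check, I would note the equivalent route $\|f\|_{B,h}^2=\widetilde{\mathbf v}^{\top}\mathbf K_0\widetilde{\mathbf v}=\frac1h\widetilde{\mathbf v}^{\top}\mathbf D_0^{\top}\mathbf D_0\widetilde{\mathbf v}$, which follows from $\mathbf K_0=\mathbf R^{\top}\mathbf K\mathbf R$.

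There is no real obstacle. The only point needing care is confirming that $\mathbf R\widetilde{\mathbf v}$ coincides with the nodal vector of $f$, i.e.\ that the permuted ordering and the inserted zero at $i_0$ are consistent. This is immediate from the column description of $\mathbf R$ in \eqref{eq:anchoring-reconstruction-matrix}.
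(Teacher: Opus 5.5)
Your proposal is correct and follows the paper's proof: the paper also applies Proposition~\ref{prop:finite-brownian-energy} to $f=\mathcal R(\mathbf v)$ to get $\mathbf v^{\top}\mathbf K\mathbf v=\frac1h\|\mathbf D\mathbf v\|_2^2$. It then substitutes $\mathbf w=\mathbf D\mathbf v$ and expands the norm componentwise, exactly as you do.
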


\begin{proof}

By
\eqref{eq:increment-energy-full-vector},
$\mathbf v\in\mathbb R^{G+1}$,
and by
\eqref{eq:increment-energy-increment-vector},
$\mathbf{w}\in\mathbb R^G$.
For every
$i\in\{0,\ldots,G-1\}$,
the $i$th increment satisfies
\begin{align}
w_i
&\stackrel{(a)}{=}
(\mathbf D\mathbf v)_i
\stackrel{(b)}{=}
v_{i+1}-v_i.
\label{eq:increment-energy-component}
\end{align}
Here (a) takes the $i$th component of the defining identity
$\mathbf{w}=\mathbf D\mathbf v$
in
\eqref{eq:increment-energy-increment-vector},
and (b) uses the definition of the first-difference operator
$\mathbf D$.

Since
$f=\mathcal R(\mathbf v)$,
Proposition~\ref{prop:finite-brownian-energy} gives both the quadratic-form
representation of the Brownian energy and the factorization of the stiffness
matrix. Therefore,
\begin{align}
\left\|f\right\|_{B,h}^{2}
&\stackrel{(a)}{=}
\mathbf v^{\top}\mathbf K\mathbf v
\stackrel{(b)}{=}
\mathbf v^{\top}
\left(
\frac{1}{h}
\mathbf D^{\top}\mathbf D
\right)
\mathbf v
\stackrel{(c)}{=}
\frac{1}{h}
\mathbf v^{\top}
\mathbf D^{\top}\mathbf D
\mathbf v
\stackrel{(d)}{=}
\frac{1}{h}
\left(
\mathbf D\mathbf v
\right)^{\top}
\left(
\mathbf D\mathbf v
\right)
\nonumber\\
&\stackrel{(e)}{=}
\frac{1}{h}
\mathbf{w}^{\top}
\mathbf{w}
\stackrel{(f)}{=}
\frac{1}{h}
\left\|\mathbf{w}\right\|_2^{2}
\stackrel{(g)}{=}
\frac{1}{h}
\sum_{i=0}^{G-1}
w_i^{2}.
\label{eq:increment-energy-proof-chain}
\end{align}
Here (a) is the Brownian-energy quadratic-form identity established in
Proposition~\ref{prop:finite-brownian-energy};
(b) substitutes the stiffness factorization
$\mathbf K=\frac{1}{h}\mathbf D^{\top}\mathbf D$
from the same proposition;
(c) moves the scalar factor
$1/h$
outside the matrix product;
(d) uses
$(\mathbf D\mathbf v)^{\top}=\mathbf v^{\top}\mathbf D^{\top}$
and the associativity of matrix multiplication;
(e) substitutes
$\mathbf{w}=\mathbf D\mathbf v$
from
\eqref{eq:increment-energy-increment-vector};
(f) uses the definition
$\|\mathbf x\|_2^2=\mathbf x^{\top}\mathbf x$
for vectors in Euclidean space; and
(g) expands that norm componentwise using
$\mathbf{w}=(w_0,\ldots,w_{G-1})^{\top}$.
Thus
\eqref{eq:increment-energy-proof-chain}
is exactly
\eqref{eq:increment-energy-claim},
which proves the lemma.

\end{proof}
\begin{lemma}[Brownian energy in spectral coordinates]
\label{lem:spectral-energy}

Let
$f\in\mathcal H_h^0$,
and let
$\widetilde{\mathbf v}\in\mathbb R^G$
be its reduced anchored nodal coefficient vector, so that
\begin{align}
f
=
\mathcal R\left(\mathbf R\widetilde{\mathbf v}\right).
\label{eq:spectral-energy-reconstruction}
\end{align}
Then the anchored stiffness matrix
\begin{align}
\mathbf K_0
=
\mathbf R^{\top}\mathbf K\mathbf R
\end{align}
is symmetric positive definite. Consequently, it admits an orthogonal
eigendecomposition
\begin{align}
\mathbf K_0
=
\mathbf Q\boldsymbol{\Lambda}\mathbf Q^{\top},
\qquad
\boldsymbol{\Lambda}
=
\operatorname{diag}\left(\lambda_1,\ldots,\lambda_G\right),
\qquad
\lambda_i>0,
\quad
 i=1,\ldots,G.
\end{align}
where
$\mathbf Q\in\mathbb R^{G\times G}$
is orthogonal. Define
\begin{align}
\mathbf c
:=
\mathbf Q^{\top}\widetilde{\mathbf v}
=
\left(c_1,\ldots,c_G\right)^{\top}.
\label{eq:spectral-energy-coordinate-definition}
\end{align}
Then
\begin{align}
\left\|f\right\|_{B,h}^{2}
=
\mathbf c^{\top}\boldsymbol{\Lambda}\mathbf c
=
\sum_{i=1}^{G}\lambda_i c_i^2.
\label{eq:spectral-energy-claim}
\end{align}

\end{lemma}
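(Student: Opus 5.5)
The argument has three parts: positive definiteness of $\mathbf K_0$, existence of the orthogonal eigendecomposition with positive spectrum, and the energy identity in spectral coordinates.

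\emph{Symmetry and definiteness.} Symmetry of $\mathbf K_0=\mathbf R^{\top}\mathbf K\mathbf R$ follows from $K_{rs}=K_{sr}$, which holds because $K_{rs}=\langle\phi_r,\phi_s\rangle_{B,h}$ is symmetric, or directly from $\mathbf K=\tfrac1h\mathbf D^{\top}\mathbf D$ in \Cref{lem:stiffness-matrix}. For definiteness I will write, for any $\mathbf x\in\mathbb R^G$,
\[
\mathbf x^{\top}\mathbf K_0\mathbf x=\tfrac1h\|\mathbf D\mathbf R\mathbf x\|_2^2\ge0 .
\]
If this vanishes, then $\mathbf R\mathbf x\in\ker\mathbf D=\ker\mathbf K=\operatorname{span}(\mathbf 1)$ by \Cref{lem:kernel-stiffness}. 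The $i_0$ entry of $\mathbf R\mathbf x$ is zero, since $\mathbf e_{i_0}$ is not a column of $\mathbf R$, so the constant must be $0$. Then $\mathbf R^{\top}\mathbf R=\mathbf I_G$ gives $\mathbf x=\mathbf 0$. This is the same argument as in \Cref{lem:K0-pd}, which I may simply cite.

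\emph{Eigendecomposition.} The real spectral theorem for the symmetric matrix $\mathbf K_0$ gives an orthogonal $\mathbf Q$ and a real diagonal $\boldsymbol\Lambda$ with $\mathbf K_0=\mathbf Q\boldsymbol\Lambda\mathbf Q^{\top}$. For positivity, let $\mathbf q_i=\mathbf Q\mathbf e_i$. Then
\[
\lambda_i=\mathbf q_i^{\top}\mathbf K_0\mathbf q_i>0,
\]
since $\mathbf q_i\neq\mathbf 0$ and $\mathbf K_0\succ0$. Alternatively, \Cref{thm:complete-dct-spectrum} supplies an explicit $\mathbf Q$ with positive eigenvalues, but the general spectral theorem is enough here.

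\emph{Energy identity.} With $\mathbf v=\mathbf R\widetilde{\mathbf v}$, \Cref{prop:finite-brownian-energy} gives
\[
\|f\|_{B,h}^2=\mathbf v^{\top}\mathbf K\mathbf v=\widetilde{\mathbf v}^{\top}\mathbf K_0\widetilde{\mathbf v}.
\]
Orthogonality gives $\widetilde{\mathbf v}=\mathbf Q\mathbf c$. Substituting both this and the eigendecomposition, and using $\mathbf Q^{\top}\mathbf Q=\mathbf I_G$, yields
\[
\|f\|_{B,h}^2=\mathbf c^{\top}\boldsymbol\Lambda\mathbf c=\sum_i\lambda_ic_i^2,
\]
where the last step just expands the diagonal quadratic form.

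No step is a genuine obstacle. The only point needing care is positive definiteness, specifically that the zero anchor row of $\mathbf R$ rules out the constant nullspace, and that is already handled by the kernel lemma.
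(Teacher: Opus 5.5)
Your proposal is correct and follows the paper's proof essentially step for step. You get symmetry from $\mathbf K=\tfrac1h\mathbf D^{\top}\mathbf D$, and definiteness from $\mathbf x^{\top}\mathbf K_0\mathbf x=\tfrac1h\|\mathbf D\mathbf R\mathbf x\|_2^2$ combined with the kernel lemma, the zero anchor row of $\mathbf R$, and $\mathbf R^{\top}\mathbf R=\mathbf I_G$. You then use the spectral theorem with $\lambda_i=\mathbf q_i^{\top}\mathbf K_0\mathbf q_i>0$ and the substitution $\widetilde{\mathbf v}^{\top}\mathbf K_0\widetilde{\mathbf v}=\mathbf c^{\top}\boldsymbol\Lambda\mathbf c$. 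The only differences are cosmetic: the paper writes $\widetilde{\mathbf v}^{\top}\mathbf Q=(\mathbf Q^{\top}\widetilde{\mathbf v})^{\top}$ directly rather than substituting $\widetilde{\mathbf v}=\mathbf Q\mathbf c$, and it argues definiteness by contradiction from $\mathbf R\mathbf z\neq\mathbf 0$ rather than by your contrapositive.
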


\subsection{Proof of \texorpdfstring{\Cref{lem:spectral-energy}}{Lemma C5}}
\label{app:proof-spectral-energy}

\begin{proof}

Set
\begin{align}
\mathbf v
:=
\mathbf R\widetilde{\mathbf v}
\in
\mathbb R^{G+1}.
\label{eq:spectral-energy-full-vector}
\end{align}
By
\eqref{eq:spectral-energy-reconstruction}
and
\eqref{eq:spectral-energy-full-vector},
\begin{align}
f
=
\mathcal R(\mathbf v).
\label{eq:spectral-energy-f-from-v}
\end{align}

We first verify that
$\mathbf K_0$
is symmetric positive definite. Write the anchoring reconstruction matrix as
\begin{align}
\mathbf R
=
\left[
\mathbf r_1,
\ldots,
\mathbf r_G
\right],
\end{align}
where
$\mathbf r_1,\ldots,\mathbf r_G\in\mathbb R^{G+1}$
are its columns. By the explicit construction in
\eqref{eq:anchoring-reconstruction-matrix},
these columns are pairwise distinct canonical basis vectors of
$\mathbb R^{G+1}$.
Therefore, for every
$r,s\in[G]$,
\begin{align}
\left(
\mathbf R^{\top}\mathbf R
\right)_{rs}
&\stackrel{(a)}{=}
\mathbf r_r^{\top}\mathbf r_s
\stackrel{(b)}{=}
\delta_{rs}
\stackrel{(c)}{=}
\left(\mathbf I_G\right)_{rs}.
\label{eq:spectral-energy-RtR-entrywise}
\end{align}
Here (a) is the entrywise rule for the product
$\mathbf R^{\top}\mathbf R$,
(b) uses the orthonormality of pairwise distinct canonical basis vectors, and
(c) is the entrywise definition of the identity matrix. Since
\eqref{eq:spectral-energy-RtR-entrywise}
holds for every
$r,s\in[G]$,
\begin{align}
\mathbf R^{\top}\mathbf R
=
\mathbf I_G.
\label{eq:spectral-energy-R-isometry}
\end{align}

By Proposition~\ref{prop:finite-brownian-energy},
\begin{align}
\mathbf K
=
\frac1h\mathbf D^{\top}\mathbf D.
\end{align}
Hence
\begin{align}
\mathbf K^{\top}
&\stackrel{(a)}{=}
\left(
\frac1h\mathbf D^{\top}\mathbf D
\right)^{\top}
\stackrel{(b)}{=}
\frac1h
\mathbf D^{\top}
\left(\mathbf D^{\top}\right)^{\top}
\stackrel{(c)}{=}
\frac1h\mathbf D^{\top}\mathbf D
\stackrel{(d)}{=}
\mathbf K.
\label{eq:spectral-energy-K-symmetric}
\end{align}
Here (a) substitutes the stiffness factorization,
(b) uses
$(\alpha\mathbf A)^{\top}=\alpha\mathbf A^{\top}$
for real
$\alpha$
and
$(\mathbf A\mathbf B)^{\top}=\mathbf B^{\top}\mathbf A^{\top}$,
(c) uses
$(\mathbf D^{\top})^{\top}=\mathbf D$,
and (d) substitutes the stiffness factorization again. It follows that
\begin{align}
\mathbf K_0^{\top}
&\stackrel{(a)}{=}
\left(
\mathbf R^{\top}\mathbf K\mathbf R
\right)^{\top}
\stackrel{(b)}{=}
\mathbf R^{\top}\mathbf K^{\top}\mathbf R
\stackrel{(c)}{=}
\mathbf R^{\top}\mathbf K\mathbf R
\stackrel{(d)}{=}
\mathbf K_0.
\label{eq:spectral-energy-K0-symmetric}
\end{align}
Here (a) uses the definition of
$\mathbf K_0$,
(b) applies the transpose-of-a-product rule,
(c) uses
\eqref{eq:spectral-energy-K-symmetric},
and (d) again uses the definition of
$\mathbf K_0$.
Thus
$\mathbf K_0$
is symmetric.

To prove positive definiteness, let
$\mathbf z\in\mathbb R^G\setminus\{\mathbf0\}$
be arbitrary and define
\begin{align}
\mathbf y
:=
\mathbf R\mathbf z
\in
\mathbb R^{G+1}.
\label{eq:spectral-energy-y-definition}
\end{align}
Using
\eqref{eq:spectral-energy-R-isometry},
we obtain
\begin{align}
\left\|\mathbf y\right\|_2^2
&\stackrel{(a)}{=}
\left(
\mathbf R\mathbf z
\right)^{\top}
\left(
\mathbf R\mathbf z
\right)
\stackrel{(b)}{=}
\mathbf z^{\top}\mathbf R^{\top}\mathbf R\mathbf z
\nonumber\\
&\stackrel{(c)}{=}
\mathbf z^{\top}\mathbf I_G\mathbf z
\stackrel{(d)}{=}
\left\|\mathbf z\right\|_2^2
\stackrel{(e)}{>}
0.
\label{eq:spectral-energy-y-nonzero}
\end{align}
Here (a) uses
$\|\mathbf y\|_2^2=\mathbf y^{\top}\mathbf y$
and
\eqref{eq:spectral-energy-y-definition},
(b) uses
$(\mathbf R\mathbf z)^{\top}=\mathbf z^{\top}\mathbf R^{\top}$
and associativity,
(c) substitutes
\eqref{eq:spectral-energy-R-isometry},
(d) uses
$\mathbf I_G\mathbf z=\mathbf z$
and the definition of the Euclidean norm, and
(e) follows from
$\mathbf z\neq\mathbf0$.
Consequently,
\begin{align}
\mathbf y
\neq
\mathbf0.
\label{eq:spectral-energy-y-is-nonzero}
\end{align}
Moreover, the construction of
$\mathbf R$
inserts a zero at the anchor index, and therefore
\begin{align}
y_{i_0}
=
0.
\label{eq:spectral-energy-y-anchored}
\end{align}

The quadratic form of
$\mathbf K_0$
at
$\mathbf z$
satisfies
\begin{align}
\mathbf z^{\top}\mathbf K_0\mathbf z
&\stackrel{(a)}{=}
\mathbf z^{\top}\mathbf R^{\top}\mathbf K\mathbf R\mathbf z
\stackrel{(b)}{=}
\left(
\mathbf R\mathbf z
\right)^{\top}
\mathbf K
\left(
\mathbf R\mathbf z
\right)
\nonumber\\
&\stackrel{(c)}{=}
\mathbf y^{\top}\mathbf K\mathbf y
\stackrel{(d)}{=}
\frac1h
\left\|
\mathbf D\mathbf y
\right\|_2^2
\stackrel{(e)}{\geq}
0.
\label{eq:spectral-energy-K0-nonnegative}
\end{align}
Here (a) substitutes
$\mathbf K_0=\mathbf R^{\top}\mathbf K\mathbf R$,
(b) uses
$\mathbf z^{\top}\mathbf R^{\top}=(\mathbf R\mathbf z)^{\top}$
and associativity,
(c) substitutes
\eqref{eq:spectral-energy-y-definition},
(d) applies Proposition~\ref{prop:finite-brownian-energy} to
$\mathbf y$, and
(e) uses
$h>0$
and the nonnegativity of a squared Euclidean norm.

Suppose, for contradiction, that
\begin{align}
\mathbf z^{\top}\mathbf K_0\mathbf z
=
0.
\label{eq:spectral-energy-zero-assumption}
\end{align}
Combining
\eqref{eq:spectral-energy-K0-nonnegative}
and
\eqref{eq:spectral-energy-zero-assumption}
gives
\begin{align}
0
&\stackrel{(a)}{=}
\frac1h
\left\|
\mathbf D\mathbf y
\right\|_2^2
\stackrel{(b)}{\Longrightarrow}
\left\|
\mathbf D\mathbf y
\right\|_2^2
=
0
\stackrel{(c)}{\Longrightarrow}
\mathbf D\mathbf y
=
\mathbf0.
\label{eq:spectral-energy-Dy-zero}
\end{align}
Here (a) uses
\eqref{eq:spectral-energy-K0-nonnegative}
with equality as imposed by
\eqref{eq:spectral-energy-zero-assumption},
(b) multiplies by the positive scalar
$h$, and
(c) uses the fact that a vector has zero squared Euclidean norm if and only if
it is the zero vector. By Lemma~\ref{lem:kernel-stiffness},
\eqref{eq:spectral-energy-Dy-zero}
implies that there exists
$a\in\mathbb R$
such that
\begin{align}
\mathbf y
=
a\mathbf1.
\label{eq:spectral-energy-y-constant}
\end{align}
Taking the anchor component in
\eqref{eq:spectral-energy-y-constant}
and using
\eqref{eq:spectral-energy-y-anchored}
yields
\begin{align}
0
&\stackrel{(a)}{=}
y_{i_0}
\stackrel{(b)}{=}
a,
\end{align}
where (a) is
\eqref{eq:spectral-energy-y-anchored}
and (b) takes the
$i_0$th component of
\eqref{eq:spectral-energy-y-constant}.
Thus
$a=0$, and
\eqref{eq:spectral-energy-y-constant}
gives
$\mathbf y=\mathbf0$, contradicting
\eqref{eq:spectral-energy-y-is-nonzero}.
Therefore
\begin{align}
\mathbf z^{\top}\mathbf K_0\mathbf z
>
0
\qquad
\text{for every }
\mathbf z\in\mathbb R^G\setminus\{\mathbf0\}.
\label{eq:spectral-energy-K0-positive}
\end{align}
Together with
\eqref{eq:spectral-energy-K0-symmetric},
this proves that
$\mathbf K_0$
is symmetric positive definite.

By the spectral theorem for real symmetric matrices, there exist an orthogonal
matrix
$\mathbf Q\in\mathbb R^{G\times G}$
and a real diagonal matrix
$\boldsymbol{\Lambda}$
such that
\begin{align}
\mathbf K_0
=
\mathbf Q\boldsymbol{\Lambda}\mathbf Q^{\top}.
\end{align}
The positive definiteness in
\eqref{eq:spectral-energy-K0-positive}
implies that every diagonal entry
$\lambda_i$
of
$\boldsymbol{\Lambda}$
is strictly positive. Indeed, if
$\mathbf q_i$
denotes the
$i$th column of
$\mathbf Q$, then
$\|\mathbf q_i\|_2=1$
and
$\mathbf K_0\mathbf q_i=\lambda_i\mathbf q_i$, so
\begin{align}
\lambda_i
&\stackrel{(a)}{=}
\lambda_i
\mathbf q_i^{\top}\mathbf q_i
\stackrel{(b)}{=}
\mathbf q_i^{\top}\mathbf K_0\mathbf q_i
\stackrel{(c)}{>}
0.
\label{eq:spectral-energy-eigenvalues-positive}
\end{align}
Here (a) uses
$\mathbf q_i^{\top}\mathbf q_i=1$,
(b) uses
$\mathbf K_0\mathbf q_i=\lambda_i\mathbf q_i$, and
(c) applies
\eqref{eq:spectral-energy-K0-positive}
to the nonzero vector
$\mathbf q_i$.

We now derive the Brownian energy identity. By
\eqref{eq:stiffness-energy-matrix-form},
\eqref{eq:spectral-energy-full-vector},
and the definition of
$\mathbf K_0$,
\begin{align}
\left\|f\right\|_{B,h}^{2}
&\stackrel{(a)}{=}
\mathbf v^{\top}\mathbf K\mathbf v
\stackrel{(b)}{=}
\left(
\mathbf R\widetilde{\mathbf v}
\right)^{\top}
\mathbf K
\left(
\mathbf R\widetilde{\mathbf v}
\right)
\nonumber\\
&\stackrel{(c)}{=}
\widetilde{\mathbf v}^{\top}
\mathbf R^{\top}\mathbf K\mathbf R
\widetilde{\mathbf v}
\stackrel{(d)}{=}
\widetilde{\mathbf v}^{\top}
\mathbf K_0
\widetilde{\mathbf v}
\nonumber\\
&\stackrel{(e)}{=}
\widetilde{\mathbf v}^{\top}
\mathbf Q\boldsymbol{\Lambda}\mathbf Q^{\top}
\widetilde{\mathbf v}
\nonumber\\
&\stackrel{(f)}{=}
\left(
\mathbf Q^{\top}\widetilde{\mathbf v}
\right)^{\top}
\boldsymbol{\Lambda}
\left(
\mathbf Q^{\top}\widetilde{\mathbf v}
\right)
\stackrel{(g)}{=}
\mathbf c^{\top}\boldsymbol{\Lambda}\mathbf c.
\label{eq:spectral-energy-quadratic}
\end{align}
Here (a) applies
\eqref{eq:stiffness-energy-matrix-form}
to
$f=\mathcal R(\mathbf v)$,
(b) substitutes
\eqref{eq:spectral-energy-full-vector},
(c) uses
$(\mathbf R\widetilde{\mathbf v})^{\top}
=\widetilde{\mathbf v}^{\top}\mathbf R^{\top}$
and associativity,
(d) uses
$\mathbf K_0=\mathbf R^{\top}\mathbf K\mathbf R$,
(e) substitutes the orthogonal eigendecomposition of
$\mathbf K_0$,
(f) uses
$\widetilde{\mathbf v}^{\top}\mathbf Q
=(\mathbf Q^{\top}\widetilde{\mathbf v})^{\top}$,
and
(g) substitutes
\eqref{eq:spectral-energy-coordinate-definition}.

Because
$\boldsymbol{\Lambda}$
is diagonal,
\begin{align}
\Lambda_{ij}
=
\lambda_i\delta_{ij},
\qquad
 i,j\in[G].
\label{eq:spectral-energy-Lambda-entrywise}
\end{align}
Therefore
\begin{align}
\mathbf c^{\top}\boldsymbol{\Lambda}\mathbf c
&\stackrel{(a)}{=}
\sum_{i=1}^{G}
\sum_{j=1}^{G}
 c_i\Lambda_{ij}c_j
\stackrel{(b)}{=}
\sum_{i=1}^{G}
\sum_{j=1}^{G}
 c_i\lambda_i\delta_{ij}c_j
\nonumber\\
&\stackrel{(c)}{=}
\sum_{i=1}^{G}
 c_i\lambda_i c_i
\stackrel{(d)}{=}
\sum_{i=1}^{G}
\lambda_i c_i^2.
\label{eq:spectral-energy-diagonal-expansion}
\end{align}
Here (a) expands the quadratic form componentwise,
(b) substitutes
\eqref{eq:spectral-energy-Lambda-entrywise},
(c) uses
$\delta_{ij}=0$
for
$i\neq j$
and
$\delta_{ii}=1$
to collapse the inner sum to its
$j=i$
term, and
(d) rearranges the scalar factors and uses
$c_i c_i=c_i^2$.
Combining
\eqref{eq:spectral-energy-quadratic}
and
\eqref{eq:spectral-energy-diagonal-expansion}
yields
\eqref{eq:spectral-energy-claim},
which proves the lemma.

\end{proof}

\begin{lemma}[Explicit spectral Brownian basis]
\label{lem:explicit-spectral-basis}

Assume that
$G=2m$,
and let
\begin{align}
\mathbf K_0
=
\mathbf Q
\boldsymbol{\Lambda}
\mathbf Q^{\top},
\qquad
\boldsymbol{\Lambda}
=
\operatorname{diag}
\left(
\lambda_1,
\ldots,
\lambda_G
\right)
\label{eq:explicit-spectral-basis-eigendecomposition}
\end{align}
be the orthogonal eigendecomposition obtained in
Theorem~\ref{thm:complete-dct-spectrum}.
Thus
$\mathbf Q\in\mathbb R^{G\times G}$
is orthogonal and
$\lambda_i>0$
for every
$i\in[G]$.
Let
$\mathbf e_i\in\mathbb R^G$
denote the
$i$th canonical basis vector.
For every
$i\in[G]$,
define
\begin{align}
\psi_i
&:=
\mathcal R
\left(
\mathbf R
\mathbf Q
\mathbf e_i
\right)
\nonumber\\
&=
\sum_{j=0}^{G}
\left(
\mathbf R
\mathbf Q
\mathbf e_i
\right)_j
\phi_j
=
\sum_{j=0}^{G}
\left(
\mathbf R
\mathbf Q
\right)_{ji}
\phi_j.
\label{eq:explicit-spectral-basis-definition}
\end{align}

Then the following statements hold.

\begin{enumerate}[(i)]

\item
\label{lem:explicit-spectral-basis-i}
Each function
$\psi_i$
belongs to
$\mathcal H_h^0$,
and the family
$\left\{\psi_1,\ldots,\psi_G\right\}$
forms a basis of
$\mathcal H_h^0$.

\item
\label{lem:explicit-spectral-basis-ii}
Let
$f\in\mathcal H_h^0$,
and let
$\widetilde{\mathbf v}\in\mathbb R^G$
be its reduced anchored nodal coefficient vector.
Then
$f$
admits the unique representation
\begin{align}
f
=
\sum_{i=1}^{G}
c_i
\psi_i,
\label{eq:explicit-spectral-basis-expansion}
\end{align}
where
\begin{align}
\mathbf c
:=
\left(
 c_1,
 \ldots,
 c_G
\right)^{\top}
=
\mathbf Q^{\top}
\widetilde{\mathbf v}
\in
\mathbb R^G
\label{eq:explicit-spectral-basis-coefficients}
\end{align}
is the spectral coefficient vector of
$f$.

\item
\label{lem:explicit-spectral-basis-iii}
The basis is orthogonal with respect to the Brownian inner product. More
precisely, for every
$i,j\in[G]$,
\begin{align}
\left\langle
\psi_i,
\psi_j
\right\rangle_{B,h}
=
\lambda_i
\delta_{ij}.
\label{eq:explicit-spectral-basis-orthogonality}
\end{align}

\end{enumerate}

\end{lemma}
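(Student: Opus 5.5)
The whole lemma reduces to linear algebra on the reduced coordinates, via the map $\Phi_{\mathrm{nod}}(\mathbf x)=\mathcal R(\mathbf R\mathbf x)$ from the proof of \Cref{thm:equivalence-of-implementations}. That proof showed, in \eqref{eq:equiv-proof-nodal-range}, that $\Phi_{\mathrm{nod}}:\mathbb R^G\to\mathcal H_h^0$ is a linear bijection. Injectivity follows from linear independence of the hat functions together with $\mathbf R^\top\mathbf R=\mathbf I_G$. By definition $\psi_i=\Phi_{\mathrm{nod}}(\mathbf Q\mathbf e_i)$, so $\psi_i\in\mathcal H_h^0$ is immediate from \eqref{eq:equiv-proof-reconstruction-anchor}.

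For item (i):
\begin{itemize}
\item $\mathbf Q$ is orthogonal by \Cref{thm:complete-dct-spectrum}\ref{thm:complete-dct-spectrum-iii}, so its columns $\mathbf Q\mathbf e_1,\ldots,\mathbf Q\mathbf e_G$ form a basis of $\mathbb R^G$.
\item A linear bijection carries a basis to a basis, so $\{\psi_i\}$ is a basis of $\mathcal H_h^0$.
\item As a direct check of independence: if $\sum_i a_i\psi_i=0$, linearity gives $\Phi_{\mathrm{nod}}(\mathbf Q\mathbf a)=0$. Then $\mathbf Q\mathbf a=0$, and hence $\mathbf a=\mathbf Q^\top\mathbf Q\mathbf a=0$. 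Spanning follows by counting, since $\dim\mathcal H_h^0=G$.
\end{itemize}

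For item (ii):
\begin{itemize}
\item Write $\widetilde{\mathbf v}=\mathbf Q\mathbf Q^\top\widetilde{\mathbf v}=\mathbf Q\mathbf c=\sum_i c_i\mathbf Q\mathbf e_i$.
\item Apply the linear map $\Phi_{\mathrm{nod}}$ to get $f=\Phi_{\mathrm{nod}}(\widetilde{\mathbf v})=\sum_i c_i\psi_i$.
\item Uniqueness of the coefficients is the basis property from item (i).
\end{itemize}

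For item (iii), I use the bilinear identity \eqref{eq:equiv-proof-bilinear-nodal}, $\langle\Phi_{\mathrm{nod}}(\mathbf x),\Phi_{\mathrm{nod}}(\mathbf y)\rangle_{B,h}=\mathbf x^\top\mathbf K_0\mathbf y$, with $\mathbf x=\mathbf Q\mathbf e_i$ and $\mathbf y=\mathbf Q\mathbf e_j$. This gives
\begin{align}
\langle\psi_i,\psi_j\rangle_{B,h}=\mathbf e_i^\top\mathbf Q^\top\mathbf Q\boldsymbol\Lambda\mathbf Q^\top\mathbf Q\mathbf e_j=\mathbf e_i^\top\boldsymbol\Lambda\mathbf e_j=\lambda_i\delta_{ij}.
\end{align}

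The only point needing care is that \eqref{eq:equiv-proof-bilinear-nodal} was stated inside an earlier proof. To keep this self-contained, I would re-derive it briefly: expand both functions in the hat basis and use $K_{rs}=\langle\phi_r,\phi_s\rangle_{B,h}$ with bilinearity. That is routine, so I expect no real obstacle.
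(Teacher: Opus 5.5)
Your proposal is correct and follows essentially the same route as the paper. Both arguments get membership from the zero anchor row of $\mathbf R$ and the expansion from $\widetilde{\mathbf v}=\mathbf Q\mathbf Q^{\top}\widetilde{\mathbf v}$ together with linearity of $\mathcal R(\mathbf R\,\cdot)$, and both get orthogonality from $\mathbf e_i^{\top}\mathbf Q^{\top}\mathbf K_0\mathbf Q\mathbf e_j=\lambda_i\delta_{ij}$. The only difference is the order: you obtain the basis property first, from injectivity of $\Phi_{\mathrm{nod}}$ (uniqueness of nodal coefficients plus $\mathbf R^{\top}\mathbf R=\mathbf I_G$), and deduce uniqueness of the expansion, whereas the paper proves uniqueness of the expansion directly with those same two facts and deduces linear independence from it.
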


\begin{proof}

We first verify that every function defined in
\eqref{eq:explicit-spectral-basis-definition}
belongs to the anchored space. Fix
$i\in[G]$.
Because
$\psi_i$
is a finite linear combination of
$\phi_0,\ldots,\phi_G$,
we have
$\psi_i\in\mathcal H_h$.
Moreover, the anchor is the grid point
$t_{i_0}=0$,
and the nodal basis satisfies
$\phi_j(0)=\delta_{j i_0}$.
Therefore,
\begin{align}
\psi_i(0)
&\stackrel{(a)}{=}
\sum_{j=0}^{G}
\left(
\mathbf R\mathbf Q
\right)_{ji}
\phi_j(0)
\stackrel{(b)}{=}
\sum_{j=0}^{G}
\left(
\mathbf R\mathbf Q
\right)_{ji}
\delta_{j i_0}
\nonumber\\
&\stackrel{(c)}{=}
\left(
\mathbf R\mathbf Q
\right)_{i_0 i}
\stackrel{(d)}{=}
\sum_{r=1}^{G}
R_{i_0 r}
Q_{ri}
\stackrel{(e)}{=}
0.
\label{eq:explicit-spectral-basis-anchored}
\end{align}
Here (a) uses the definition of
$\psi_i$
in
\eqref{eq:explicit-spectral-basis-definition};
(b) substitutes
$\phi_j(0)=\delta_{j i_0}$;
(c) uses the defining property of the Kronecker delta to retain only the term
$j=i_0$;
(d) expands the
$(i_0,i)$
entry of the matrix product
$\mathbf R\mathbf Q$;
and (e) uses the construction of the anchoring reconstruction matrix
$\mathbf R$,
whose
$i_0$th row is the zero row because the anchor coefficient is fixed at zero.
Thus
$\psi_i(0)=0$,
and hence
$\psi_i\in\mathcal H_h^0$.
Since
$i\in[G]$
was arbitrary, this conclusion holds for every member of the family.

We next prove existence and uniqueness of the spectral expansion. Let
$f\in\mathcal H_h^0$
be arbitrary. By the reduced anchored nodal representation, there exists a
unique vector
$\widetilde{\mathbf v}\in\mathbb R^G$
such that
\begin{align}
f
=
\mathcal R
\left(
\mathbf R
\widetilde{\mathbf v}
\right).
\label{eq:explicit-spectral-basis-reduced-reconstruction}
\end{align}
Define
$\mathbf c$
by
\eqref{eq:explicit-spectral-basis-coefficients}.
Because
$\mathbf Q$
is orthogonal,
\begin{align}
\mathbf Q^{\top}\mathbf Q
=
\mathbf I_G,
\qquad
\mathbf Q\mathbf Q^{\top}
=
\mathbf I_G.
\label{eq:explicit-spectral-basis-Q-orthogonal}
\end{align}
Consequently,
\begin{align}
\mathbf Q\mathbf c
&\stackrel{(a)}{=}
\mathbf Q
\mathbf Q^{\top}
\widetilde{\mathbf v}
\stackrel{(b)}{=}
\mathbf I_G
\widetilde{\mathbf v}
\stackrel{(c)}{=}
\widetilde{\mathbf v}.
\label{eq:explicit-spectral-basis-inverse-coordinate-map}
\end{align}
Here (a) substitutes the definition
$\mathbf c=\mathbf Q^{\top}\widetilde{\mathbf v}$;
(b) uses
$\mathbf Q\mathbf Q^{\top}=\mathbf I_G$
from
\eqref{eq:explicit-spectral-basis-Q-orthogonal};
and (c) uses the defining action of the identity matrix.

Writing
$\mathbf c=\sum_{i=1}^{G}c_i\mathbf e_i$
and using
\eqref{eq:explicit-spectral-basis-reduced-reconstruction}
and
\eqref{eq:explicit-spectral-basis-inverse-coordinate-map},
we obtain
\begin{align}
f
&\stackrel{(a)}{=}
\mathcal R
\left(
\mathbf R
\widetilde{\mathbf v}
\right)
\stackrel{(b)}{=}
\mathcal R
\left(
\mathbf R
\mathbf Q
\mathbf c
\right)
\nonumber\\
&\stackrel{(c)}{=}
\mathcal R
\left(
\mathbf R
\mathbf Q
\sum_{i=1}^{G}
c_i
\mathbf e_i
\right)
\stackrel{(d)}{=}
\mathcal R
\left(
\sum_{i=1}^{G}
c_i
\mathbf R
\mathbf Q
\mathbf e_i
\right)
\nonumber\\
&\stackrel{(e)}{=}
\sum_{i=1}^{G}
c_i
\mathcal R
\left(
\mathbf R
\mathbf Q
\mathbf e_i
\right)
\stackrel{(f)}{=}
\sum_{i=1}^{G}
c_i
\psi_i.
\label{eq:explicit-spectral-basis-existence-chain}
\end{align}
Here (a) is
\eqref{eq:explicit-spectral-basis-reduced-reconstruction};
(b) substitutes
$\widetilde{\mathbf v}=\mathbf Q\mathbf c$
from
\eqref{eq:explicit-spectral-basis-inverse-coordinate-map};
(c) expands
$\mathbf c$
in the canonical basis of
$\mathbb R^G$;
(d) uses the linearity of the matrix maps
$\mathbf Q$
and
$\mathbf R$;
(e) uses the linearity of the finite-element reconstruction operator
$\mathcal R$;
and (f) applies the definition of
$\psi_i$
in
\eqref{eq:explicit-spectral-basis-definition}.
This proves the existence of the representation
\eqref{eq:explicit-spectral-basis-expansion}.

To prove uniqueness, suppose that another vector
\begin{align}
\mathbf d
:=
\left(
 d_1,
 \ldots,
 d_G
\right)^{\top}
\in
\mathbb R^G
\end{align}
satisfies
\begin{align}
f
=
\sum_{i=1}^{G}
d_i
\psi_i.
\label{eq:explicit-spectral-basis-second-expansion}
\end{align}
Repeating the linearity calculation in
\eqref{eq:explicit-spectral-basis-existence-chain}
with
$\mathbf d$
in place of
$\mathbf c$
gives
\begin{align}
\sum_{i=1}^{G}
d_i
\psi_i
=
\mathcal R
\left(
\mathbf R
\mathbf Q
\mathbf d
\right).
\label{eq:explicit-spectral-basis-second-reconstruction}
\end{align}
Combining
\eqref{eq:explicit-spectral-basis-reduced-reconstruction},
\eqref{eq:explicit-spectral-basis-second-expansion},
and
\eqref{eq:explicit-spectral-basis-second-reconstruction}
yields
\begin{align}
\mathcal R
\left(
\mathbf R
\widetilde{\mathbf v}
\right)
=
\mathcal R
\left(
\mathbf R
\mathbf Q
\mathbf d
\right).
\label{eq:explicit-spectral-basis-equal-reconstructions}
\end{align}
The nodal representation in
$\mathcal H_h$
is unique. Therefore,
\begin{align}
\mathbf R
\widetilde{\mathbf v}
=
\mathbf R
\mathbf Q
\mathbf d.
\label{eq:explicit-spectral-basis-equal-full-vectors}
\end{align}
By
\eqref{eq:spectral-energy-R-isometry},
$\mathbf R^{\top}\mathbf R=\mathbf I_G$.
Left-multiplying
\eqref{eq:explicit-spectral-basis-equal-full-vectors}
by
$\mathbf R^{\top}$
gives
\begin{align}
\widetilde{\mathbf v}
&\stackrel{(a)}{=}
\mathbf I_G
\widetilde{\mathbf v}
\stackrel{(b)}{=}
\mathbf R^{\top}
\mathbf R
\widetilde{\mathbf v}
\stackrel{(c)}{=}
\mathbf R^{\top}
\mathbf R
\mathbf Q
\mathbf d
\nonumber\\
&\stackrel{(d)}{=}
\mathbf I_G
\mathbf Q
\mathbf d
\stackrel{(e)}{=}
\mathbf Q
\mathbf d.
\label{eq:explicit-spectral-basis-unique-reduced-vector}
\end{align}
Here (a) uses the action of the identity matrix;
(b) substitutes
$\mathbf R^{\top}\mathbf R=\mathbf I_G$;
(c) left-multiplies
\eqref{eq:explicit-spectral-basis-equal-full-vectors}
by
$\mathbf R^{\top}$;
(d) again uses
$\mathbf R^{\top}\mathbf R=\mathbf I_G$;
and (e) uses
$\mathbf I_G\mathbf Q\mathbf d=\mathbf Q\mathbf d$.
It follows that
\begin{align}
\mathbf d
&\stackrel{(a)}{=}
\mathbf I_G
\mathbf d
\stackrel{(b)}{=}
\mathbf Q^{\top}
\mathbf Q
\mathbf d
\stackrel{(c)}{=}
\mathbf Q^{\top}
\widetilde{\mathbf v}
\stackrel{(d)}{=}
\mathbf c.
\label{eq:explicit-spectral-basis-unique-coefficients}
\end{align}
Here (a) uses the action of the identity matrix;
(b) uses
$\mathbf Q^{\top}\mathbf Q=\mathbf I_G$;
(c) substitutes
\eqref{eq:explicit-spectral-basis-unique-reduced-vector};
and (d) uses the definition of
$\mathbf c$
in
\eqref{eq:explicit-spectral-basis-coefficients}.
Thus the coefficients in
\eqref{eq:explicit-spectral-basis-expansion}
are unique, proving
(\ref{lem:explicit-spectral-basis-ii}).

The existence of the expansion for every
$f\in\mathcal H_h^0$
shows that
$\{\psi_1,\ldots,\psi_G\}$
spans
$\mathcal H_h^0$.
To verify linear independence, suppose
$\sum_{i=1}^{G}d_i\psi_i=0$.
The zero function has reduced anchored nodal coefficient vector
$\widetilde{\mathbf v}=\mathbf0$
and hence, by
\eqref{eq:explicit-spectral-basis-coefficients},
spectral coefficient vector
$\mathbf Q^{\top}\mathbf0=\mathbf0$.
The uniqueness established in
\eqref{eq:explicit-spectral-basis-unique-coefficients}
therefore gives
$\mathbf d=\mathbf0$.
Thus the family is linearly independent. Since it both spans
$\mathcal H_h^0$
and is linearly independent, it is a basis, proving
(\ref{lem:explicit-spectral-basis-i}).

It remains to prove the Brownian orthogonality relation. For each
$i\in[G]$,
set
\begin{align}
\mathbf a_i
:=
\mathbf R
\mathbf Q
\mathbf e_i
=
\left(
 a_{0i},
 \ldots,
 a_{Gi}
\right)^{\top}
\in
\mathbb R^{G+1}.
\label{eq:explicit-spectral-basis-ai-definition}
\end{align}
Then
$\psi_i=\mathcal R(\mathbf a_i)$.
Fix arbitrary
$i,j\in[G]$.
Using the definition of the Brownian inner product and the entrywise definition
of the stiffness matrix in
Proposition~\ref{prop:finite-brownian-energy},
we obtain
\begin{align}
\left\langle
\psi_i,
\psi_j
\right\rangle_{B,h}
&\stackrel{(a)}{=}
\int_{-A}^{A}
\psi_i'(t)
\psi_j'(t)
\,\mathrm dt
\stackrel{(b)}{=}
\int_{-A}^{A}
\left(
\sum_{r=0}^{G}
 a_{ri}
 \phi_r'(t)
\right)
\left(
\sum_{s=0}^{G}
 a_{sj}
 \phi_s'(t)
\right)
\,\mathrm dt
\nonumber\\
&\stackrel{(c)}{=}
\sum_{r=0}^{G}
\sum_{s=0}^{G}
 a_{ri}
 a_{sj}
\int_{-A}^{A}
\phi_r'(t)
\phi_s'(t)
\,\mathrm dt
\stackrel{(d)}{=}
\sum_{r=0}^{G}
\sum_{s=0}^{G}
 a_{ri}
 K_{rs}
 a_{sj}
\stackrel{(e)}{=}
\mathbf a_i^{\top}
\mathbf K
\mathbf a_j
\nonumber\\
&\stackrel{(f)}{=}
\left(
\mathbf R\mathbf Q\mathbf e_i
\right)^{\top}
\mathbf K
\left(
\mathbf R\mathbf Q\mathbf e_j
\right)
\stackrel{(g)}{=}
\mathbf e_i^{\top}
\mathbf Q^{\top}
\mathbf R^{\top}
\mathbf K
\mathbf R
\mathbf Q
\mathbf e_j
\stackrel{(h)}{=}
\mathbf e_i^{\top}
\mathbf Q^{\top}
\mathbf K_0
\mathbf Q
\mathbf e_j
\nonumber\\
&\stackrel{(i)}{=}
\mathbf e_i^{\top}
\mathbf Q^{\top}
\left(
\mathbf Q
\boldsymbol{\Lambda}
\mathbf Q^{\top}
\right)
\mathbf Q
\mathbf e_j
\stackrel{(j)}{=}
\mathbf e_i^{\top}
\left(
\mathbf Q^{\top}\mathbf Q
\right)
\boldsymbol{\Lambda}
\left(
\mathbf Q^{\top}\mathbf Q
\right)
\mathbf e_j
\stackrel{(k)}{=}
\mathbf e_i^{\top}
\boldsymbol{\Lambda}
\mathbf e_j
\nonumber\\
&\stackrel{(l)}{=}
\Lambda_{ij}
\stackrel{(m)}{=}
\lambda_i
\delta_{ij}.
\label{eq:explicit-spectral-basis-inner-product-chain}
\end{align}
Here (a) is the definition of the Brownian inner product;
(b) uses
$\psi_i=\sum_{r=0}^{G}a_{ri}\phi_r$
and
$\psi_j=\sum_{s=0}^{G}a_{sj}\phi_s$,
and differentiates these finite sums on each open mesh interval;
(c) expands the product and uses the linearity of the integral for finite sums;
(d) uses
$K_{rs}=\int_{-A}^{A}\phi_r'(t)\phi_s'(t)\,\mathrm dt$;
(e) is the componentwise formula for the bilinear matrix product
$\mathbf a_i^{\top}\mathbf K\mathbf a_j$;
(f) substitutes
\eqref{eq:explicit-spectral-basis-ai-definition};
(g) uses
$(\mathbf R\mathbf Q\mathbf e_i)^{\top}
=\mathbf e_i^{\top}\mathbf Q^{\top}\mathbf R^{\top}$
and associativity;
(h) substitutes
$\mathbf K_0=\mathbf R^{\top}\mathbf K\mathbf R$;
(i) substitutes the eigendecomposition
\eqref{eq:explicit-spectral-basis-eigendecomposition};
(j) uses associativity of matrix multiplication;
(k) uses
$\mathbf Q^{\top}\mathbf Q=\mathbf I_G$
and the action of the identity matrix;
(l) uses the defining property of canonical basis vectors,
$\mathbf e_i^{\top}\boldsymbol{\Lambda}\mathbf e_j=\Lambda_{ij}$;
and (m) uses the diagonal structure
$\Lambda_{ij}=\lambda_i\delta_{ij}$.
This proves
\eqref{eq:explicit-spectral-basis-orthogonality}
and hence
(\ref{lem:explicit-spectral-basis-iii}).
In particular, if
$i\neq j$,
then
$\langle\psi_i,\psi_j\rangle_{B,h}=0$,
whereas if
$i=j$,
then
\begin{align}
\left\|\psi_i\right\|_{B,h}^{2}
=
\left\langle\psi_i,\psi_i\right\rangle_{B,h}
=
\lambda_i
>
0,
\end{align}
because
$\lambda_i>0$.
This completes the proof.

\end{proof}

\section{Implications of the Equivalent Coordinate Representations}
\label{app:equivalence-implications}

\paragraph{Function-space invariance.}
Theorem~\ref{thm:equivalence-of-implementations} establishes that the nodal, increment, and spectral parameterizations are mathematically equivalent realizations of the same finite Brownian RKHS. Consequently, the represented function space, the Brownian norm, and the associated approximation properties are intrinsic to the underlying RKHS and remain invariant under changes of coordinate realization.

\paragraph{Controlled optimization.}
Because all three realizations represent exactly the same hypothesis class, differences cannot be
attributed to representational capacity, approximation error, or intrinsic Brownian regularization. The
sharper distinction is optimizer-specific: \Cref{prop:gd-equivariance} forces mapped nodal and spectral
GD/SGD trajectories to agree, \Cref{prop:increment-natural-gradient} identifies increment GD with
Brownian preconditioning, and \Cref{prop:adam-nonequivariance} permits standard Adam to distinguish the
DCT-VIII basis. The direct one-layer and recursive tests in \Cref{sec:experiments} numerically verify these
predictions under mapped initialization and shared update protocols.

\paragraph{Connection to the spectral realization.}
The equivalence theorem is independent of the particular choice of coordinates. The explicit spectral realization presented in Section~\ref{sec:explicit-spectrum} constructs a block-structured orthogonal basis of the anchored finite Brownian RKHS through the complete eigendecomposition of the anchored stiffness matrix, providing the spectral implementation used in the optimization study.

\section{Implications of the Explicit Spectral Representation}
\label{app:spectral-implications}
\paragraph{Explicit block-structured spectral coordinates.}
Unlike the nodal and increment parameterizations, the spectral coordinates diagonalize the Brownian energy. The energy determines the eigenspaces, but the twofold multiplicity of every eigenvalue does not determine a unique eigenbasis inside each left--right eigenspace. The displayed block-diagonal eigendecomposition fixes an explicit orthogonal coordinate system by respecting the reduced-coordinate ordering and the two half-grid blocks.

\paragraph{Diagonalization of the Brownian energy.}
The explicit spectral basis diagonalizes the Brownian inner product, reducing the Brownian norm to a weighted Euclidean norm of the spectral coefficients. Consequently, each spectral mode contributes independently to the total Brownian energy, providing a natural decomposition of the finite Brownian RKHS into orthogonal Brownian spectral modes.

\paragraph{Implications for optimization.}
The spectral transformation is orthogonal, so mapped nodal--spectral GD/SGD must agree after coordinates,
initialization, scalar steps, and minibatches are mapped consistently. Standard Adam is different: its
universal orthogonal equivariance group is only the signed permutations, and the DCT-VIII matrix is not
one. The controlled tests in \Cref{sec:experiments} numerically verify both the exact GD/SGD identity and
the permitted Adam separation.

\section{Supplementary Experimental Evidence}
\label{app:experiments}

This appendix separates direct theorem-verification protocols from secondary predictive evidence. The old
synthetic tables and legacy coordinate curves are omitted because their raw outputs mixed incompatible
sample sizes or lacked the mapped-initialization and optimizer metadata required for a valid equivariance
test. The retained EuroSAT and spatially separated Salinas studies were regenerated from frozen pipelines and
independently audited from saved predictions and probabilities.

\subsection{Theory-Validation Protocols and Reproducibility Audit}
\label{app:theory-validation-reproducibility}

\paragraph{Evidence classes.}
The direct tests E0, E1A--E1C, and E2 numerically verify algebraic identities, mapped optimizer trajectories, conditioning statements, and the theorem-specific standard-Adam counterexample. They are not tuned predictive comparisons. The EuroSAT and spatially separated Salinas studies instead compare different model classes and are retained only as secondary predictive and robustness evidence; they are not used to establish coordinate equivariance.

\begin{table}[H]
\centering
\caption{Frozen verification and audit protocols. Identity-test arms share one mapped represented initialization, objective normalization, scalar step sequence, and, for SGD, minibatch order; they are never tuned separately.}
\label{tab:theory-validation-protocols}
\scriptsize
\setlength{\tabcolsep}{3pt}
\resizebox{\textwidth}{!}{%
\begin{tabular}{llllll}
\toprule
Stage & Purpose & Data/runs & Mesh & Budget & Precision \\
\midrule
E0 & Finite Brownian RKHS, spectrum, and coordinate identities & 125 random vectors & $G\in\{8,16,32,64,128\}$ & 4097-point dense grid & float64 CPU \\
E1A & One-layer mapped-trajectory identities for GD and SGD & 256 samples, 5 seeds & $G\in\{8,16,32,64,128\}$ & 40 GD; 80 SGD; batch 32 & float64 CPU \\
E1B & Conditioning and fixed-step convergence under mesh refinement & 256 samples, 5 seeds & $G\in\{8,16,32,64,128\}$ & gap $10^{-8}$; analytic steps & float64 CPU \\
E1C & Basis dependence of standard Adam & 256 samples, 5 seeds & $G\in\{8,16,32,64,128\}$ & 200 steps; full batch & float64 CPU \\
E2 & Recursive blockwise mapped-trajectory identities & 128 samples, 5 seeds & $(8,16,32)$ and $(32,64,128)$ & 8 GD; 16 SGD; batch 32 & float64 CPU \\
Real-data audit & Independent metric and statistical reconstruction & 390 prediction artifacts & EuroSAT and Salinas & 5720 published-value checks & NumPy reconstruction \\
\bottomrule
\end{tabular}%
}
\end{table}

\paragraph{E0: finite Brownian RKHS identities.}
We use $A=2$, $G\in\{8,16,32,64,128\}$, seeds $\{0,1,2,3,4\}$, and five random vectors per seed. All matrix, spectrum, Brownian-Gram, reconstruction, and energy diagnostics are evaluated in float64 on CPU; reconstruction is checked on $4097$ points.

\paragraph{E1A: one-layer mapped-trajectory identities for GD and SGD.}
The Brownian-regularized least-squares objective uses $n=256$, $\rho=0.1$, and noise standard deviation $0.03$. Every coordinate arm begins from one reduced-nodal function and is mapped exactly. Full-batch GD uses $40$ updates; SGD uses $80$ updates with batch size $32$ and an identical minibatch sequence. No coordinate arm is tuned separately.

\paragraph{E1B: conditioning and convergence under mesh refinement.}
The pure Brownian Hessians are constructed directly in nodal, spectral, and increment coordinates. For least squares, $n=256$ and $\rho\in\{0.05,0.1,0.2,0.5\}$. The convergence diagnostic uses $\rho=0.1$, the analytic optimal scalar step for each SPD quadratic, a relative objective-gap tolerance of $10^{-8}$, and at most $200{,}000$ iterations. These steps are analytic, not validation-selected.

\paragraph{E1C: basis dependence of standard Adam.}
We use scalar $\beta_1=0.9$, $\beta_2=0.999$, $\varepsilon=10^{-8}$, zero initial moment states, bias correction, learning rate $0.003$ for the regularized objective, and $0.01$ for the deterministic linear counterexample. The experiment uses $200$ full-batch updates, no AdamW, weight decay, clipping, or AMSGrad, and no coordinate-specific tuning.

\paragraph{E2: recursive blockwise mapped-trajectory identities.}
The recursive model contains three trainable Brownian profile blocks with grid sets $(8,16,32)$ and $(32,64,128)$, trainable cross-layer couplings, and identically initialized non-profile parameters. The objective uses $n=128$ and $\rho=0.05$. We run eight full-batch steps at $\eta=5\times10^{-5}$ and sixteen shared-minibatch steps at $\eta=2\times10^{-5}$ with batch size $32$. Each increment block is compared with the nodal Brownian-gradient update scaled by its own $1/h_b$.
\paragraph{Independent real-data audit.}
For the retained EuroSAT and spatially separated Salinas studies, accuracy, multiclass F$_1$, calibration, and likelihood metrics were reconstructed directly from 390 saved prediction artifacts, including 5,820 per-class records and 88,920 confusion-matrix cells. A separate implementation reproduced 5,720 reported AUC, exact-test, crossed-bootstrap, and variance-component entries across 11 tables with zero mismatch and maximum discrepancy $8.66\times10^{-15}$.

\subsection{EuroSAT Under Test-Time Spectral-Band Dropout}
\label{app:eurosat-robustness}

\paragraph{Protocol.}
Patch MLP and Spatial Path-Atomic BKL receive the same Spatial65 representation of all $13$ EuroSAT
bands. For each of five paired seeds, both models are trained once on the same clean split. At test time
only, one to six bands are zeroed before feature extraction; masks are nested within each seed and shared
between models. For a metric $M$, retention is $M(d)/M(0)$, and the normalized area under the retention
curve summarizes the complete degradation path. We additionally record ECE, NLL, per-class F$_1$,
inference time, and parameter count.

\begin{table}[H]
\centering
\small
\caption{EuroSAT robustness under nested test-time spectral-band dropout. Values are means over five paired seeds. Retention AUC is normalized over zero to six zeroed bands; higher is better. Maximum ECE is lower-is-better.}
\label{tab:eurosat-spectral-dropout-robustness}
\resizebox{\linewidth}{!}{%
\begin{tabular}{lccccc}
\toprule
Model & \shortstack{Accuracy\\Retention AUC (\%)} & \shortstack{Macro-F$_1$\\Retention AUC (\%)} & \shortstack{Worst Accuracy\\Retention (\%)} & \shortstack{Worst Macro-F$_1$\\Retention (\%)} & \shortstack{Maximum\\ECE} \\
\midrule
Patch MLP & $37.78$ & $32.16$ & $21.80$ & $13.88$ & $0.774$ \\
Spatial Path-Atomic BKL & $\mathbf{46.81}$ & $\mathbf{42.10}$ & $\mathbf{23.38}$ & $\mathbf{17.68}$ & $\mathbf{0.629}$ \\
\bottomrule
\end{tabular}%
}
\end{table}

\begin{figure}[H]
\centering
\includegraphics[width=0.94\linewidth]{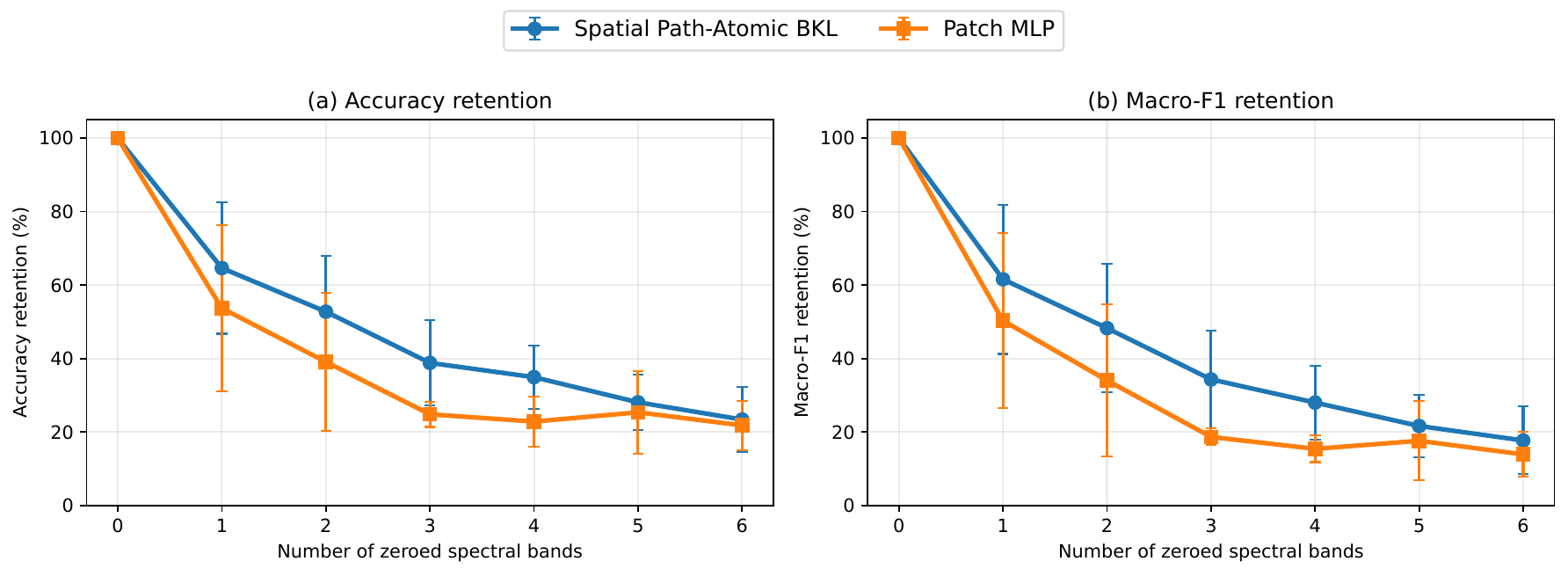}
\caption{EuroSAT accuracy- and macro-F$_1$-retention trajectories under nested test-time spectral-band dropout. Lines show paired-seed means and shading one standard deviation.}
\label{fig:eurosat-spectral-dropout-retention}
\end{figure}

\begin{figure}[H]
\centering
\includegraphics[width=0.64\linewidth]{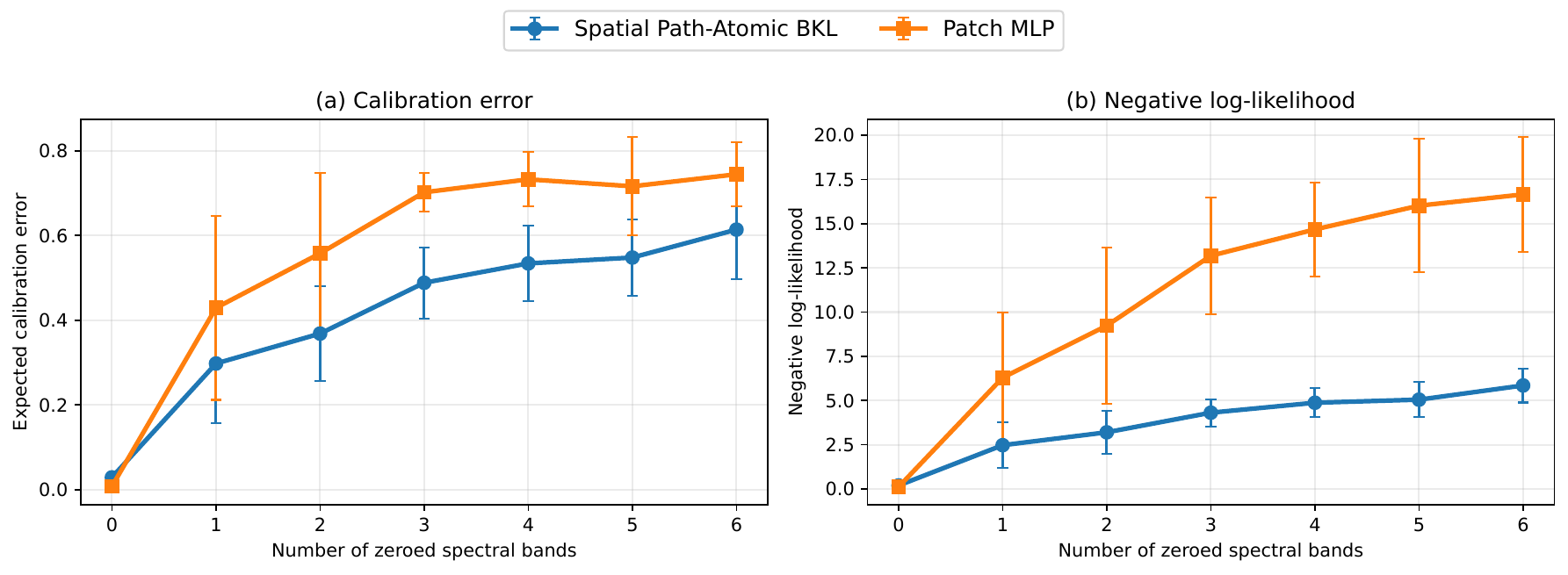}
\caption{EuroSAT calibration and likelihood degradation under nested spectral-band dropout; lower is better.}
\label{fig:eurosat-spectral-dropout-calibration}
\end{figure}

\paragraph{Results and scope.}
Spatial Path-Atomic BKL has normalized accuracy-retention AUC $46.81\%$ versus $37.78\%$ for Patch MLP
and macro-F$_1$-retention AUC $42.10\%$ versus $32.16\%$. All five pairs favor BKL for both AUCs and for
maximum-ECE reduction; each exact two-sided sign-flip test gives $p=0.0625$, the minimum nonzero value for
five pairs. The result concerns the full degradation trajectory rather than uniform superiority at each
severity. This image-level multispectral experiment is a BKL-versus-MLP comparison and is not used to
establish coordinate equivariance.

\begin{table}[H]
\centering
\small
\caption{Exact paired sign-flip analysis over five seeds. Improvements are oriented so that positive values favor Spatial Path-Atomic BKL.}
\label{tab:eurosat-robustness-tests}
\resizebox{\linewidth}{!}{%
\begin{tabular}{lcccc}
\toprule
Metric & Mean improvement & Two-sided $p$ & Paired $d_z$ & BKL-favored seeds \\
\midrule
Accuracy retention AUC & $0.0903$ & $0.0625$ & $1.61$ & $5/5$ \\
Macro-F$_1$ retention AUC & $0.0994$ & $0.0625$ & $2.11$ & $5/5$ \\
Worst accuracy retention & $0.0158$ & $0.6875$ & $0.17$ & $3/5$ \\
Worst Macro-F$_1$ retention & $0.0379$ & $0.4375$ & $0.46$ & $3/5$ \\
Maximum ECE reduction & $0.1449$ & $0.0625$ & $1.53$ & $5/5$ \\
\bottomrule
\end{tabular}%
}
\end{table}

\begingroup
\setlength{\intextsep}{5pt plus 1pt minus 1pt}
\setlength{\textfloatsep}{5pt plus 1pt minus 1pt}
\setlength{\abovecaptionskip}{3pt}
\setlength{\belowcaptionskip}{1pt}
\subsection{Salinas Evaluation Under Spatial Separation and Missing Spectral Bands}
\label{app:salinas-corrected-robustness}

\paragraph{Protocol.}
We evaluate the mathematically constrained finite VBKL implementation on the 204-band Salinas scene using a spatially separated protocol fixed before the final model evaluations. Labeled pixels are grouped into $24\times24$ spatial blocks, and a four-pixel cross-split buffer removes 19,152 boundary pixels. The retained split contains 23,998 training, 5,424 validation, and 5,555 test pixels, with all 16 classes represented in every split. Standardization is fitted on the training split only. The standardized spectra are divided by the training 0.99-quantile of their Euclidean norms and are radially projected, when necessary, to the radius-two ball. This projection affected 2 training spectra and no validation or test spectra (0 and 0, respectively).

The constrained VBKL uses 32 paths, depth four under the convention of one normalized linear projection followed by three Brownian profiles, grid size 32, and interval radius $A=2$. Every effective direction has unit Euclidean norm; every profile is anchored at zero and radially normalized to Brownian energy at most one. Hyperparameters were chosen using validation data only: seed 0 supplied the initial search, seeds 1--3 supplied independent validation confirmation, and the final clean models use seeds 4, 5, 6, 7, 8. The fixed Patch MLP has 15,728 parameters, whereas constrained VBKL has 10,128.

\begin{table}[H]
\centering
\caption{Clean Salinas performance (mean $\pm$ standard deviation over five final evaluation seeds).}
\label{tab:salinas-corrected-clean}
\begingroup
\scriptsize
\setlength{\tabcolsep}{3.5pt}
\renewcommand{\arraystretch}{1.08}
\begin{tabular*}{\linewidth}{@{\extracolsep{\fill}}lrrrrrr@{}}
\toprule
Model &
\multicolumn{1}{c}{Acc. (\%)} &
\multicolumn{1}{c}{Macro-F1 (\%)} &
\multicolumn{1}{c}{Bal. Acc. (\%)} &
\multicolumn{1}{c}{Worst F1 (\%)} &
\multicolumn{1}{c}{ECE} &
\multicolumn{1}{c}{NLL} \\
\midrule
Patch MLP & $91.87\pm0.93$ & $94.68\pm1.14$ & $94.83\pm1.28$ & $73.93\pm2.55$ & $0.018\pm0.002$ & $0.245\pm0.013$ \\
Constrained VBKL & $92.18\pm0.20$ & $95.49\pm0.40$ & $95.23\pm0.60$ & $73.83\pm0.63$ & $0.015\pm0.004$ & $0.218\pm0.008$ \\
\bottomrule
\end{tabular*}
\vspace{2pt}
\begin{tabular*}{\linewidth}{@{\extracolsep{\fill}}lrrr@{}}
\toprule
Model &
\multicolumn{1}{c}{Parameters} &
\multicolumn{1}{c}{Training time (s)} &
\multicolumn{1}{c}{Inference (ms/sample)} \\
\midrule
Patch MLP & 15,728 & $5.7\pm0.6$ & $0.001\pm0.000$ \\
Constrained VBKL & 10,128 & $166.2\pm18.9$ & $0.024\pm0.000$ \\
\bottomrule
\end{tabular*}
\endgroup
\end{table}

\Needspace{6\baselineskip}
\paragraph{Clean performance.}
Across the five final seeds, constrained VBKL obtains 92.18\% accuracy and 95.49\% Macro-F1, compared with 91.87\% and 94.68\% for Patch MLP. The corresponding balanced accuracies are 95.23\% and 94.83\%. Constrained VBKL also has lower mean ECE (0.015 versus 0.018) and lower mean NLL. The worst-class F1 values are essentially tied.

\paragraph{Test-time spectral masking.}
Starting from each clean checkpoint, we mask nested random subsets of 16, 31, 47, 63, 78, and 94 bands. A masked raw coordinate is replaced by its training-set band mean, so it becomes exactly zero after training standardization. We cross five final model seeds with five independent mask seeds, giving 25 cells per corrupted level and 300 model evaluations in total. Confidence intervals resample the model-seed and mask-seed axes separately; exact sign-flip analyses of their marginal means are reported as sensitivity analyses.

\begin{figure}[!t]
\centering
\includegraphics[width=0.88\linewidth]{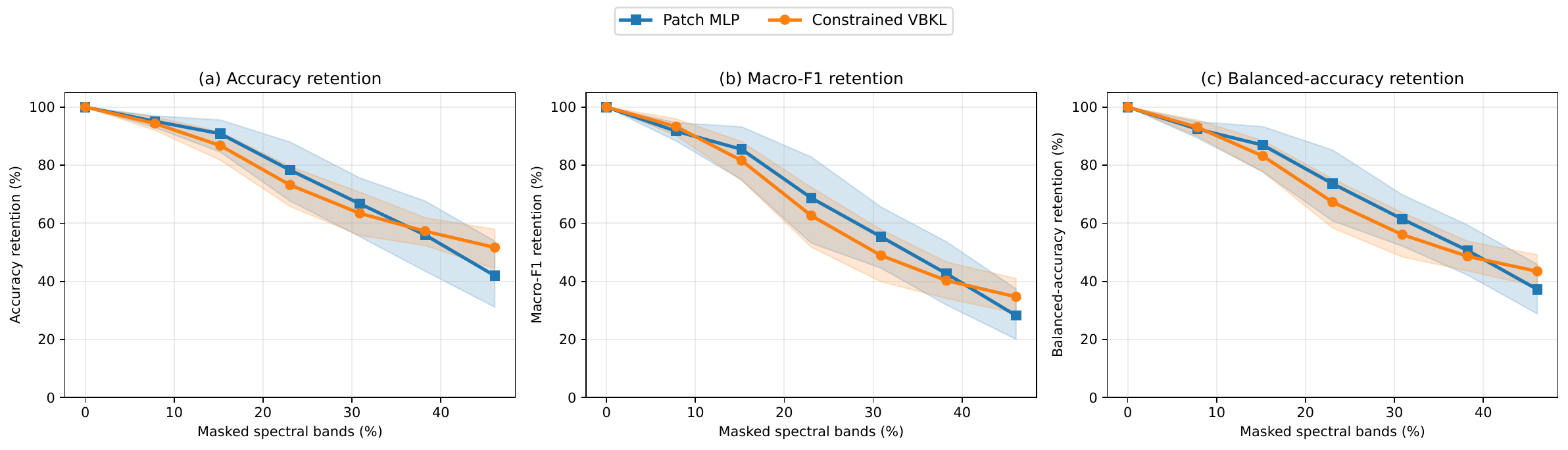}
\caption{Salinas accuracy, Macro-F1, and balanced-accuracy retention under nested spectral-band masking. Lines are crossed-design means and shaded regions are 95\% crossed-bootstrap intervals.}
\label{fig:salinas-corrected-retention}
\end{figure}

\begin{table}[H]
\centering
\caption{Aggregate Salinas robustness under nested spectral-band masking. Entries are means with 95\% crossed-bootstrap intervals. Retention is higher-is-better; ECE and NLL increases are lower-is-better.}
\label{tab:salinas-corrected-robustness}
\begingroup
\scriptsize
\setlength{\tabcolsep}{3.5pt}
\renewcommand{\arraystretch}{1.08}
\begin{tabular*}{\linewidth}{@{\extracolsep{\fill}}lrrr@{}}
\toprule
Model &
\multicolumn{1}{c}{Accuracy-retention AUC (\%)} &
\multicolumn{1}{c}{Macro-F1-retention AUC (\%)} &
\multicolumn{1}{c}{Balanced-retention AUC (\%)} \\
\midrule
Patch MLP & $76.30\,[69.82,\,82.13]$ & $67.98\,[60.57,\,74.78]$ & $72.26\,[66.21,\,77.99]$ \\
Constrained VBKL & $75.12\,[71.53,\,78.39]$ & $65.66\,[60.29,\,70.70]$ & $69.97\,[65.71,\,74.09]$ \\
\bottomrule
\end{tabular*}
\vspace{2pt}
\begin{tabular*}{\linewidth}{@{\extracolsep{\fill}}lrrrr@{}}
\toprule
Model &
\multicolumn{1}{c}{Level-6 Acc. ret. (\%)} &
\multicolumn{1}{c}{Level-6 F1 ret. (\%)} &
\multicolumn{1}{c}{ECE-increase AUC} &
\multicolumn{1}{c}{NLL-increase AUC} \\
\midrule
Patch MLP & $41.92\,[31.22,\,53.95]$ & $28.22\,[19.89,\,37.42]$ & $0.184\,[0.135,\,0.235]$ & $1.568\,[0.866,\,2.367]$ \\
Constrained VBKL & $51.66\,[44.13,\,57.85]$ & $34.67\,[29.02,\,40.85]$ & $0.162\,[0.134,\,0.195]$ & $1.098\,[0.850,\,1.378]$ \\
\bottomrule
\end{tabular*}
\endgroup
\end{table}

\paragraph{Crossover rather than uniform dominance.}
Across the complete corruption path, Patch MLP has slightly higher mean retention AUC: the BKL-minus-MLP differences are -1.17, -2.32, and -2.29 percentage points for accuracy, Macro-F1, and balanced accuracy, respectively. All corresponding crossed-bootstrap intervals include zero. At the most severe level, however, constrained VBKL retains 51.66\% of its clean accuracy and 34.67\% of its clean Macro-F1, compared with 41.92\% and 28.22\% for Patch MLP. The level-six improvements are +9.74 and +6.45 percentage points. All five model-seed marginal means favor constrained VBKL for level-six accuracy retention, yielding an exact two-sided sign-flip value of 0.0625; the crossed-bootstrap interval nevertheless still includes zero because mask realization and model--mask interaction remain substantial.

\paragraph{Calibration and interpretation.}
Constrained VBKL exhibits a smaller average corruption-induced increase in both ECE and NLL. The crossed mean improvements are +0.021 for ECE-increase AUC and +0.469 for NLL-increase AUC. These effects are directionally favorable but their crossed-bootstrap intervals include zero. Thus this experiment supports a qualified conclusion: Patch MLP has slightly higher average retention AUC over the full masking path, whereas constrained VBKL is more competitive on clean data, has higher mean retention at the most severe masking level, and shows smaller likelihood and calibration deterioration. The effect is heterogeneous across both model initialization and mask realization, so the result does not justify a claim of uniform robustness dominance.

\begin{figure}[!htbp]
\centering
\includegraphics[width=0.90\linewidth]{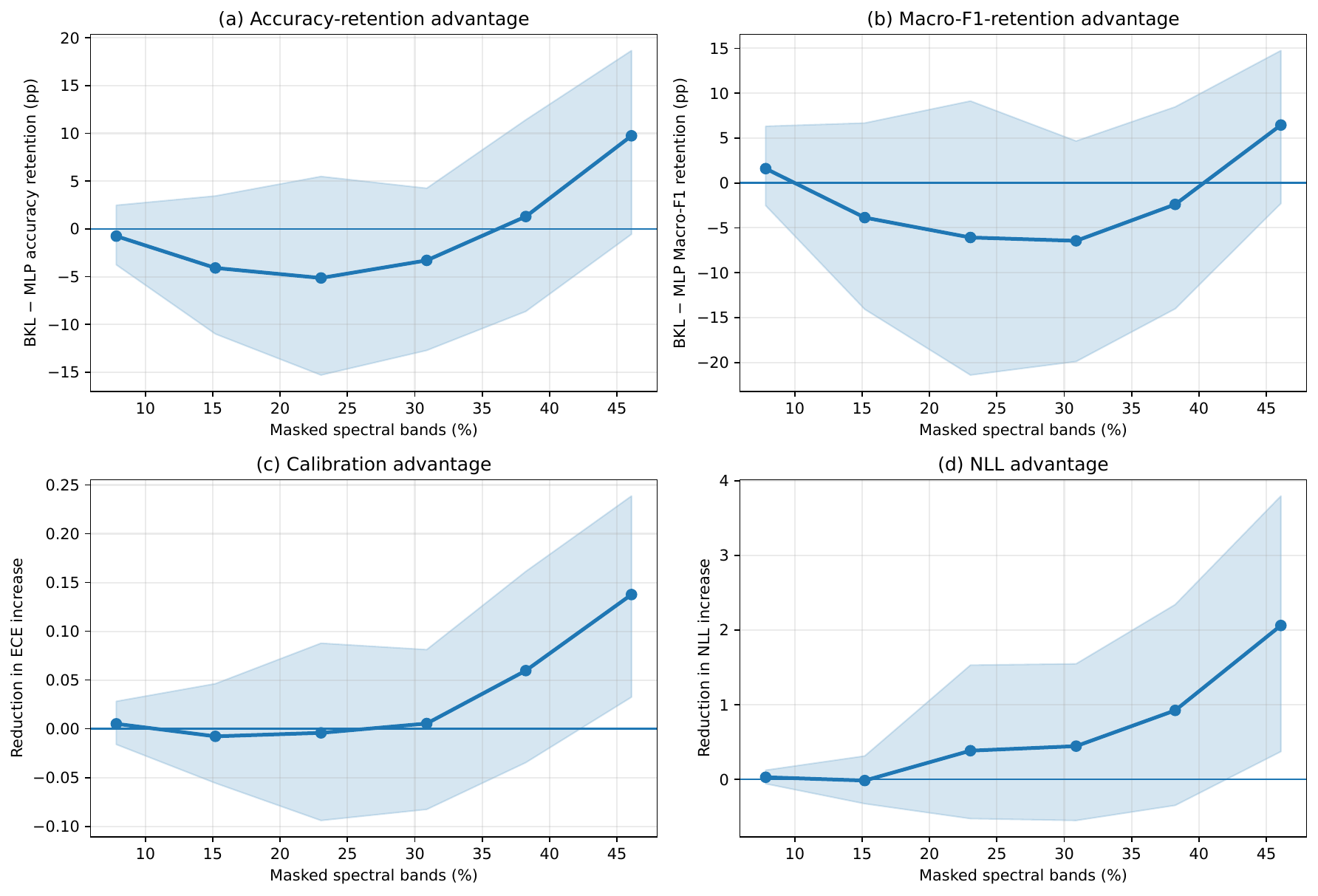}
\vspace{2pt}
\includegraphics[width=0.90\linewidth]{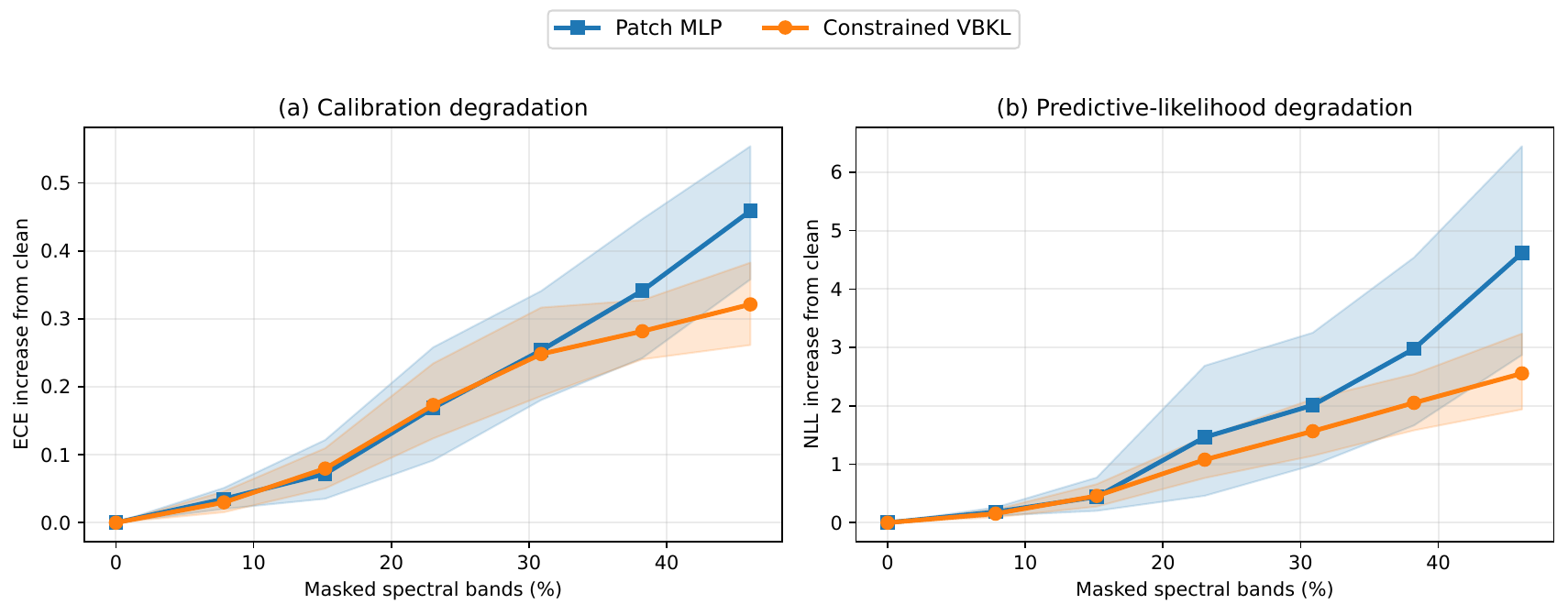}
\caption{Salinas corruption-path comparisons. The upper panels show BKL-minus-MLP differences, with positive values favoring constrained VBKL; for ECE and NLL, positive values denote a smaller corruption-induced increase. The lower panels show the corresponding calibration and negative-log-likelihood degradation.}
\label{fig:salinas-corrected-crossover}
\label{fig:salinas-corrected-calibration}
\end{figure}

\begin{table}[H]
\centering
\caption{Crossed Salinas robustness comparisons. Improvements are oriented so that positive values favor constrained VBKL. The confidence interval is obtained by crossed bootstrap. The model-seed and mask-seed columns report BKL-favored/MLP-favored marginal means and exact two-sided sign-flip $p$-values.}
\label{tab:salinas-corrected-crossed-tests}
\fontsize{6.3}{7.1}\selectfont
\setlength{\tabcolsep}{1.35pt}
\renewcommand{\arraystretch}{1.02}
\begin{tabular*}{\linewidth}{@{\extracolsep{\fill}}llcccccc@{}}
\toprule
Metric &
Favored model &
\shortstack{Mean improvement\\[.2ex]\relax[95\% CI]} &
\shortstack{Cells\\B/M} &
\shortstack{Model seeds\\B/M} &
$p_{\rm model}$ &
\shortstack{Mask seeds\\B/M} &
$p_{\rm mask}$ \\
\midrule
Accuracy-retention AUC & Patch MLP & $-1.17\,[-6.86,\,+4.33]$ & 14/11 & 2/3 & 0.6875 & 3/2 & 0.6250 \\
Macro-F1-retention AUC & Patch MLP & $-2.32\,[-10.28,\,+5.39]$ & 12/13 & 2/3 & 0.6875 & 2/3 & 0.4375 \\
Balanced-accuracy-retention AUC & Patch MLP & $-2.29\,[-8.88,\,+4.27]$ & 12/13 & 2/3 & 0.6875 & 1/4 & 0.3125 \\
Level-6 accuracy retention & Constrained VBKL & $+9.74\,[-0.54,\,+18.58]$ & 19/6 & 5/0 & 0.0625 & 3/2 & 0.2500 \\
Level-6 Macro-F1 retention & Constrained VBKL & $+6.45\,[-2.64,\,+14.72]$ & 18/7 & 4/1 & 0.1875 & 4/1 & 0.1875 \\
ECE-increase AUC & Constrained VBKL & $+0.021\,[-0.032,\,+0.074]$ & 16/9 & 4/1 & 0.4375 & 3/2 & 0.3125 \\
NLL-increase AUC & Constrained VBKL & $+0.469\,[-0.207,\,+1.194]$ & 15/10 & 4/1 & 0.1250 & 3/2 & 0.3125 \\
\bottomrule
\end{tabular*}
\end{table}

\begin{table}[H]
\centering
\caption{Severity-by-severity Salinas retention. Means are taken over the full $5\times5$ crossed design at corrupted levels; the clean level uses five model seeds. Differences are BKL minus MLP in percentage points.}
\label{tab:salinas-corrected-severity}
\scriptsize
\setlength{\tabcolsep}{3.0pt}
\renewcommand{\arraystretch}{1.06}
\begin{tabular*}{\linewidth}{@{\extracolsep{\fill}}rrrrrrrr@{}}
\toprule
Level &
Bands &
MLP Acc. ret. &
BKL Acc. ret. &
$\Delta$ Acc. &
MLP F1 ret. &
BKL F1 ret. &
$\Delta$ F1 \\
\midrule
0 & 0 & 100.00 & 100.00 & +0.00 & 100.00 & 100.00 & +0.00 \\
1 & 16 & 95.09 & 94.34 & -0.75 & 91.68 & 93.27 & +1.60 \\
2 & 31 & 90.83 & 86.75 & -4.09 & 85.46 & 81.60 & -3.86 \\
3 & 47 & 78.30 & 73.17 & -5.13 & 68.70 & 62.62 & -6.08 \\
4 & 63 & 66.72 & 63.43 & -3.29 & 55.36 & 48.91 & -6.45 \\
5 & 78 & 55.97 & 57.27 & +1.29 & 42.67 & 40.28 & -2.39 \\
6 & 94 & 41.92 & 51.66 & +9.74 & 28.22 & 34.67 & +6.45 \\
\bottomrule
\end{tabular*}
\end{table}

\endgroup

\end{document}